\DeclareMathOperator*{\argmax}{arg\,max}
\DeclareMathOperator*{\argmin}{arg\,min}
\definecolor{Red}{rgb}{1,0,0}
\definecolor{Blue}{rgb}{0,0,1}
\definecolor{Olive}{rgb}{0.41,0.55,0.13}
\definecolor{Yarok}{rgb}{0,0.5,0}
\definecolor{Green}{rgb}{0,1,0}
\definecolor{MGreen}{rgb}{0,0.8,0}
\definecolor{DGreen}{rgb}{0,0.55,0}
\definecolor{Yellow}{rgb}{1,1,0}
\definecolor{Cyan}{rgb}{0,1,1}
\definecolor{Magenta}{rgb}{1,0,1}
\definecolor{Orange}{rgb}{1,.5,0}
\definecolor{Violet}{rgb}{.5,0,.5}
\definecolor{Purple}{rgb}{.75,0,.25}
\definecolor{Brown}{rgb}{.75,.5,.25}
\definecolor{Grey}{rgb}{.5,.5,.5}
\newcommand{\E}[1]{\mathbb{E}\left[#1\right]}
\newcommand{\Z}{\mathbb{Z}}
\newcommand{\R}{\mathbb{R}}
\renewcommand{\Z}{\mathbb{Z}}
\newcommand{\inner}[1]{\langle #1 \rangle}
\newcommand{\ip}[2]{\left\langle{#1},{#2}\right\rangle}
\renewcommand{\E}[1]{\mathbb{E}\!\left[#1\right]}
\renewcommand{\R}{\mathbb{R}}
\renewcommand{\Z}{\mathbb{Z}}
\newcommand{\distr}{\stackrel{d}{=}}
\newcommand{\ignore}[1]{\relax}
\newlength\myindent
\newtheorem{theorem}{Theorem}[section]
\newtheorem{remark}[theorem]{Remark}
\newtheorem{lemma}[theorem]{Lemma}
\newtheorem{proposition}[theorem]{Proposition}
\newtheorem{Corollary}[theorem]{Corollary}
\newtheorem{claim}[theorem]{Claim}
\newtheorem{definition}[theorem]{Definition}
\newtheorem{Assumption}[theorem]{Assumption}
\newcommand{\risk}[1]{\widehat{\mathcal{L}}(#1)}
\newcounter{parentnumber}
\def\BState{\State\hskip-\ALG@thistlm}
\definecolor{Red}{rgb}{1,0,0}
\definecolor{Blue}{rgb}{0,0,1}
\definecolor{Olive}{rgb}{0.41,0.55,0.13}
\definecolor{Green}{rgb}{0,1,0}
\definecolor{MGreen}{rgb}{0,0.8,0}
\definecolor{DGreen}{rgb}{0,0.55,0}
\definecolor{Yellow}{rgb}{1,1,0}
\definecolor{Cyan}{rgb}{0,1,1}
\definecolor{Magenta}{rgb}{1,0,1}
\definecolor{Orange}{rgb}{1,.5,0}
\definecolor{Violet}{rgb}{.5,0,.5}
\definecolor{Purple}{rgb}{.75,0,.25}
\definecolor{Brown}{rgb}{.75,.5,.25}
\definecolor{Grey}{rgb}{.5,.5,.5}
\definecolor{Pink}{rgb}{1,0,1}
\definecolor{DBrown}{rgb}{.5,.34,.16}
\definecolor{Black}{rgb}{0,0,0}
\author{
{\sf Matt Emschwiller}\thanks{MIT; e-mail: {\tt ematt@mit.edu}}
\and 
{\sf David Gamarnik}\thanks{MIT; e-mail: {\tt gamarnik@mit.edu}. Research supported  by the NSF grants CMMI-1335155.}
\and
{\sf Eren C. K{\i}z{\i}lda\u{g}\thanks{MIT; e-mail: {\tt kizildag@mit.edu}}}
\and
{\sf Ilias Zadik}\thanks{NYU; e-mail: {\tt zadik@nyu.edu}. Research supported by a CDS Moore-Sloan Postdoctoral Fellowship.}
}
\begin{document}

\title{Neural Networks and Polynomial Regression.\\ Demystifying the Overparametrization Phenomena}
\date{}

\maketitle

\begin{abstract}
In the context of neural network models, overparametrization refers to the phenomena 
whereby these models appear to generalize 
well on the unseen data, even though the number of parameters significantly exceeds the sample sizes, and the model
perfectly fits  the in-training sample achieving zero in-training error. 
A conventional explanation of this phenomena 
is based on self-regularization properties of algorithms used to train the data. Namely, the algorithms implemented to minimize the training
errors ``prefer'' certain kinds of solutions which generalize well.
In this paper we prove a series of results which provide a somewhat  diverging explanation. Adopting a teacher/student model
where the teacher network is used to generate the predictions and student network is trained on the observed labeled data, and then tested on out-of-sample data, we
show that \emph{any} student network interpolating the
data generated by a teacher network generalizes well, provided that the sample size is at least an explicit quantity controlled by the data dimension and approximation
guarantee alone, \emph{regardless} of the number of internal nodes of either teacher or student network.

Our claim is based on approximating both teacher and student networks by polynomial (tensor) regression models with degree 
which depends on the desired accuracy and network depth only. Such a parametrization  notably does not depend on the number of 
internal nodes. Thus a message implied
by our results is that parametrizing wide neural networks by the number of hidden nodes is misleading, and a more fitting 
measure of parametrization complexity is the number of 
regression coefficients associated with tensorized data.  In particular, this somewhat reconciles the generalization ability of  
neural networks with  more 
classical statistical notions of data complexity and generalization bounds. Our empirical results on MNIST and Fashion-MNIST datasets indeed confirm that tensorized regression achieves a good out-of-sample performance, even when the degree of the tensor is at most two.
\end{abstract}
\pagebreak 
\tableofcontents
\pagebreak

\section{Introduction}
Success of modern learning methods such as neural networks stands in the face of conventional statistical wisdom, which perhaps 
can be summarized informally as the 
following ``commandment'': ``Thou shalt not overfit the training data, lest one does not suffer from a significant generalization
error...''. The statistical rule above is based on very solid practical and theoretical foundations, such as VC-dimension theory,
and is based on the idea that when the dimension
of the model significantly exceeds the training data size,  the overfitting is bound to happen whereby the model picks on idiosyncrasies
of the training data itself. As an example consider a linear regression model where the number of features, namely the dimension of the regression vector exceeds the sample size. Unless some additional regularization assumptions are adopted such as, for example, the sparsity of the regression vector - a common assumption in the literature on compressive sensing - the model is ill-posed, the in-training error minimization process will return a vector perfectly fitting the observed data, but its prediction quality, namely the generalization error will be poor. 
As a result, the  model which has overfitted the training data should generalize
poorly on the unseen data. 

A growing body of recent literature, however, has demonstrated exactly the opposite effect for a broad class of neural network models: 
even though the number of parameters, such as the number
of hidden units of a neural network significantly exceeds the sample size, and a perfect (zero) in-training error is achieved (commonly called
data interpolation), the generalization ability of neural networks remains good. Some partial and certainly very incomplete references to this
point are found in~\cite{belkin2019reconciling}, \cite{gunasekar2018implicit}, \cite{goldt2019dynamics}, \cite{du2018gradient}, \cite{arora2019exact}, \cite{li2018learning}, and \cite{azizanstochastic}. Furthermore, defying  statistical intuition, it was established empirically in~\cite{belkin2019reconciling}
that beyond a certain point increasing the number of parameters results  in ever increasing out of sample accuracy.

Explaining this conundrum is arguably one of the most vexing current problem in the field of machine learning. A predominant
explanation of this phenomena which has emerged recently is based on the idea of self-regularization. Specifically, it is argued that even though there is an abundance
of parameter choices perfectly fitting (interpolating) the data and thus achieving zero in-training error, the algorithms used in training
the models, such as Gradient Descent and its many variants such as Stochastic Gradient Descent, Mirror Descent, etc. tend to find solutions
 which are regularized according to some additional criteria, such as small norms, thus introducing algorithm dependent inductive bias. 
The use of these  solutions for model building in particular
is believed to result in low generalization errors. Thus a significant research effort was devoted to the analysis of the end results of the implementation
of such algorithms. While  we do not debate the merits of this explanation, we establish in this paper a series of theoretical results, supported
by empirical tests on real life data, which offer a somewhat different explanation. Roughly speaking we establish that for neural network models 
based on common choices of activation functions, including  ReLU and Sigmoid functions, provided the sample size is at least a certain
value $N^*$ which is entirely controlled by the data dimension $d$ and target generalization error $\epsilon$, \emph{every} network perfectly interpolating
the training data results in at most $\epsilon$ generalization error, regardless of the complexity of model such as the number of internal nodes.
The dependence of $N^*$ on the data dimension $d$, depth $L$ of the network, and target error $\epsilon$ is of the form $d^{O\left(\left(L/\sqrt{\epsilon}\right)^{L}\right)}$,
for the ReLU activation function, and $d^{O\left(\log\left(L/\epsilon\right)^{L}\right)}$ for the Sigmoid activation function.
While this scaling is certainly poor (doubly exponential
in network depth $L$), it is in line of similar bounds found elsewhere in the literature (see below). It is also worth noting that several publications assume population based training, namely effectively infinite sample size. For neural networks with one hidden layer in particular,
the dependence is of more benign  form $d^{O\left(1/\epsilon\right)}$ and $d^{O\left(\log\left(1/\epsilon\right)\right)}$ for ReLU and Sigmoid, respectively.

\subsubsection*{Summary of results}
We now describe our results in more level of details. First 
in order to address  directly the issue of ``over'' in the ``overparametrization'' concept, we employ the framework of teacher-student setup where one
network (teacher) is used to generate the labels $Y$ associated with data $X$, and another network (student), which potentially has a significantly
larger number of parameters (hence overparametrization), is trained on the observed data, and its performance on the unseen data is being judged.
A very similar teacher-student neural network model was used recently in~\cite{goldt2019dynamics} for the purpose of the analysis of the performance
of the Gradient Descent algorithm. Suppose now $(X^{(i)},Y_i)\in \R^d\times \R, 1\leqslant i\leqslant N$ is the training data set 
with labels $Y$ obtained from a ground truth
$L$-layer \emph{teacher} neural network $f(\mathcal{W}^*,X)$ of width $m^*$. Thus the dimension of the parameter space of the teacher network is $(L-1)(m^*)^2+dm^*+m^*$. 
We assume that the data $X^{(i)}$, $1\leqslant i\leqslant N$ is an i.i.d.\ sequence generated according to a distribution on $\R^d$ corresponding to a product distribution
with support $[-1,1]$. Namely, for each $i$, the sequence $X^{(i)}=(X_{j}^{(i)}, 1\leqslant j\leqslant d)$ is i.i.d. with support $[-1,1]$. The bound $1$ on the support
is adopted for convenience and can be replaced by any fixed constant. The rows of the matrices $W^*$ constituting $\mathcal{W}^*$ are assumed to be normalized to the unit norm, except for  the last layer. This is without the loss of generality for the case of homogeneous activations such as ReLU, since the parameter
scales in this case can be ``pushed" to the last layer $L$. The last layer $W^*_{L}$ is just an $m^*$-vector, which instead we denote by $a^*\in\R^{m^*}$.
We assume that the norm of $a^*$ is bounded by one, and again any constant bound on the norm of $a^*$ can be adopted instead. 
With these assumptions
it is easy to verify that the norm of the label $Y=f(\mathcal{W}^*,X)$ is also constant $O(1)$ and does not grow with either $d$ or $m^*$. This is a reasonable
assumption as $Y$ is the predicted label of the model denoting the inherent quantity which is being predicted, such as a classification of an image 
and thus should not scale with the size of the network parameters such as $m^*$ or $L$.

Suppose now a \emph{student} network $f(\widehat{\mathcal{W}},X)$ with the same number of layers $L$,
but with potentially different and possibly much larger width $\widehat{m}$ than $m^*$
interpolates the data $(X^{(i)},Y_i), 1\leqslant i\leqslant N$, so that $f(\widehat{\mathcal{W}},X^{(i)})=f(\mathcal{W}^*,X^{(i)})$ for all $i$. Assume the rows of (the matrices constituting) $\widehat{\mathcal{W}}$ are also normalized
to be unit length, with motivation provided above. Our first result assumes that in addition the last layer of both the teacher and student network satisfy $\|a^*\|_1,\|\widehat{a}\|_1\leqslant 1$. 
The upper bound $1$ here is chosen again for convenience and can be replaced by any constant. 
The result, then, states that the generalization error of the student network is at most $\epsilon$ provided the sample
size $N$ is at least $N^*=d^{O\left(\left(L/\sqrt{\epsilon}\right)^{L}\right)}$ for the case of ReLU activation function 
and $N^*=d^{O\left(\log^{L}\left(L/\epsilon\right)\right)}$ for the case of sigmoid activation function. 
For the case of one hidden layer teacher/student network these again translate to sample complexities of $d^{O\left(1/\sqrt{\epsilon}\right)}$,
and $d^{O\left(\log\left(1/\epsilon\right)\right)}$, respectively.
In particular, the network generalizes well, \emph{regardless} of the number of hidden nodes $m^*$ and $\widehat{m}$; 
provided that the sample size is at least a quantity $N^*$ which depends only on the data dimension $d$, network depth $L$,  and target accuracy level $\epsilon$.
This is the subject of Theorem~\ref{thm:demyst-overparam-regular}. 

An important corollary of our result is that not only the overparametrized network generalizes well,
but it does so \emph{regardless} of the solution (parameters of the student network) achieving the interpolation. 
This somewhat contradicts the conventional wisdom  based on self-regularization of  interpolating solutions idea discussed above, 
as per our result \emph{every} interpolating solution generalizes well. 
In fact in order to stress this point, we note that 
the following result holds as well, which is a corollary of our main result described above: even if one forces the value $\widehat{m}$ to be arbitrarily large, 
thus forcing the total norm of the interpolating solution to be at least 
$L\widehat{m}$, the resulting generalization error is still small. Namely, the explanation of the overparametrization phenomena based on choosing say
''small norm'' solutions $\widehat{W}$ does not seem to apply in this case. This is to say that small generalization error for large $\widehat{m}$ cannot be explained
say by turning some of activation weights of $\widehat{W}$ to zero thus effectively reducing the width $\widehat{m}$ of the network. In our setting the norms
of (rows of) $\widehat{W}$ are restricted to be unit values. Likewise, ''small norm'' explanation cannot be explained say by having the norm of the last layer
$\widehat{a}$ of the student network being small: an inspection of our results reveals that even if one forces the norm of $\widehat{a}$ to be any fixed constant the result regarding the small generalization error still holds. 

An interesting question arises as to what extent the interpolation property itself places a restriction on the student network. We partially address
this question for the case of neural networks with one hidden layer ($L=1$) and homogeneous activation functions in 
the following result, which is summarized in Theorem~\ref{thm:interpolant-norm-poly-bd}. When sample size is at least of the order
$\exp(O(d\log d))$, and the weights of the outward layer $\widehat{a}$ of the student network is coordinate-wise non-negative, 
the norm of $\widehat{a}$ is at most $d^{O(1)}$ times the norm of $a^*$ of the teacher network. 
A more precise bound is $O(d^{2})$ for the case of ReLU activation function. As a consequence, our result regarding the generalization ability
of a student network holds regardless of the norm $\widehat{a}$ of the outward layer of the student network. 
This result is summarized in Theorem~\ref{thm:overparam:ReLU-case}. Admittedly this holds under a more stringent
assumption on the sample size, in particular it requires a sample size which is exponential in the dimension $d$, as opposed polynomial in $d$ size when 
the norm of $\widehat{a}$ is bounded by a constant. We note that non-negativity assumption on $\widehat{a}$ is necessary in some sense as without
it one can effectively mimic the teacher network with many student networks by playing with the signs of $a^*$ and introducing many cancellations. 

\subsubsection*{Proof techniques: Polynomial (tensor) regression approximation}
Our approach is based on using the polynomial regression method with carefully  controlled degree as a model relating data $X$ to labels $Y$. Specifically, fixing 
any degree parameter $M$, we consider the model 
\begin{align*}
Y=\sum_{0\leqslant r\leqslant M}\sum_{ \alpha \in \mathbb{N}^d: \sum_{i=1}^d\alpha_i=r}q_{\alpha}\prod_{i=1}^d X^{\alpha_i}_i+\eta,
\end{align*}
where $q_{\alpha}$ are associated regression coefficients and $\eta$ is a noise parameter.  Alternatively,
one can think of this model as tensorized linear regression (linear regression on tensorized data $X^{d\otimes M}$, 
or kernel regression with a trivial kernel). Our main technical result is based on showing that polynomial regression model above serves as a good
approximation model of a neural network model with $L$ layers both for the in-training error and generalization error, provided
that the degree $M$ is of a fixed order which depends on the dimension $d$ and the target generalization error guarantee $\epsilon$. 
In particular, it is $M=O\left(\left(L/\sqrt{\epsilon}\right)^{L}\right)$ for ReLU and $M=O\left(\log^{L}\left(L/\epsilon\right)\right)$ for 
the Sigmoid activations. 

More specifically, we establish the following result. Suppose $(X^{(i)},Y_i), 1\leqslant i\leqslant N$ are
generated according to a neural network with $L$ layers and the same assumptions as above. Namely, the $\ell_1$-norms of internal node weights $W_{j,\ell}$ 
are at most unity, the outward layer $a$ has $\ell_1$-norm at most unity, and the data $X_i$ are i.i.d.\ with i.i.d.\ components supported in $[-1,1]$. 
Fix $M$ and let $q=(q_{\alpha})_{ \alpha \in \mathbb{N}^d, 1\leqslant \sum_{i=1}^d \alpha_i \leqslant M}$ be a solution to the mean squared error minimization problem associated
with $(X_i,Y_i), 1\leqslant i\leqslant N$:
\begin{align*}
\min_\alpha \sum_{i=1}^N\left(Y_i-\sum_{0\leqslant r\leqslant M}\sum_{ \alpha \in \mathbb{N}^d: \sum_{i=1}^d\alpha_i=r}q_{\alpha}\prod_{j=1}^d \left(X_i\right)^{\alpha_j}_{j}\right)^2.
\end{align*}
This solution $\alpha$ can be used for the out-of-sample prediction for any new data $X$ in an obvious way: 
\begin{align}\label{eq:poly-regression}
\widehat{Y}=\sum_{0\leqslant r\leqslant M}\sum_{ \alpha \in \mathbb{N}^d: \sum_{i=1}^d\alpha_i=r}q_{\alpha}\prod_{i=1}^d X^{\alpha_i}_i
\end{align}
which can be compared with the ground truth prediction $Y^*=f(X,\mathcal{W}^*)$. We call the difference $(\widehat{Y}-Y^*)^2$ the (squared) generalization error. We establish that
for every $\epsilon>0$, when the degree $M$ is bounded as above in terms of $d$ and $\epsilon$, and the sample size is $N=\Omega \left( d^M \right)$, the expected error (with expectation wrt distribution of a fresh sample $X$)
is at most $\epsilon$, with high probability with respect to the training sample $(X_i, 1\leqslant i\leqslant N)$. The proof of this result is 
fairly involved, and in particular is based on a) analyzing instead a polynomial orthogonal to the tensors of the data in an appropriate sense, and b) bounding the condition number of the expected $O(d^M)\times O(d^M)$ sample covariance matrix of the tensorized data $X^{d\otimes M}$, a result of potential independent interest in the study of random tensors \cite{vershynin2019concentration}.

A fundamental feature of approximating by the polynomial regression model above is that it does not involve the width $m^*$ of the network, suggesting that, at least when the network is wide enough (large $m^*$), its width is not the right measure of complexity of the learning class, and rather dimension $d$ and polynomial degree $M$ are better measures of complexity. This contrasts with the VC-dimension theory which bounds the complexity by $L(m^*)^2$ (roughly).  However, when the network is not too wide, in particular when $m^*$ is small compared to $d$, the VC-dimension can still be a more fitting measure of complexity. Identifying more precisely as to when VC-dimension stops describing the true model complexity is an interesting open question. 
An independent contribution of our paper is an improved understanding of the strength of the polynomial regression method as a learning model to begin
with. In some sense our results show that the power of the regression method is no less at least theoretically than the power 
of neural networks. 
In order to test this hypothesis in practice we have conducted a series of experiments on MNIST and Fashion-MNIST datasets. These are reported in Section \ref{appendix:realdata}. Due to the explosion of the number of coefficients in the polynomial regression model, and inspired by the practice of neural networks, we have adopted a \emph{convolutional} version of polynomial regression where only certain contiguous 
products of variables corresponding to near-by pixels of an image are used. 
As demonstrated in this section, the polynomial regression method even with degree at most two (involving products of at most two variables), achieves very high-quality prediction error only about 3\% smaller than typical convolutional neural networks used for those tasks. 
An advantage of the polynomial regression model as a learning model is interpretability: the regression coefficients $q_{\alpha}$ 
are amenable to easy interpretation which is not available in the context of neural networks. 

\subsubsection*{Prior literature on tensor regression methods}
The idea of approximating neural networks by polynomial regression, more commonly either by tensor regression or kernel regression,
has been very popular recently. We list here only some of the relevant results. Tensor regression methods were proposed by \cite{janzamin2015beating}, under the assumption that the distribution of the data $X$ has density
which is differentiable up to certain orders. An upper bound on the width of the network also had to be assumed for some of the results. 
\cite{goel2017learning} 
use approximation by polynomials similar to our polynomial regression method to learn $L=1$ hidden layered neural network in polynomial 
time and more general monotone non-linear outward layer assumption. The guarantee bound we obtain when specialized for $L=1$ case are similar
to the ones in~\cite{goel2017learning}. Recently a great deal of attention has been devoted to the so-called neural tangent kernel 
method \cite{jacot2018neural}, \cite{arora2019exact}, \cite{liang2019risk}. Here the motivation is to understand the dynamics of the gradient descent
algorithm on training neural network when initialized with Gaussian distribution on infinitely wide network. Tensor regression estimates somewhat
similar to ours are obtained in \cite{liang2019risk}, where the sampling size $n$ is of the form $d^a$ with $a>1$.
The invertibility of the sample covariance of the tensorized data vector $X$ is also analyzed.
For technical reasons the sampling size has to be also at most this value and also the additional assumption is that the coefficients associated with
first $k$  order Taylor expansion are non-negative, which in our setting would correspond to non-negativity of the coefficients $q_{\alpha}$,
something we do not assume and something that need not hold for general neural networks. 
\subsubsection*{Notation}

The set of real numbers, non-negative real numbers, and natural numbers are denoted respectively by $\R$, $\R_{\geqslant 0}$, and $\mathbb{N}$ (or $\mathbb{Z}^+$). For any $k\in\mathbb{N}$, the set $\{1,2,\dots,k\}$ is denoted by $[k]$. For any matrix $A$, $\sigma_{\min}(A)$ denotes its smallest singular value, and $\sigma_{\max}(A)$ (or $\|A\|_2$, or $\|A\|$) denotes its largest singular value.  
Given any $v\in\R^n$, its Euclidean $\ell_p-$norm $(\sum_{i=1}^n |v_i|^p)^{1/p}$ is denoted by $\|v\|_p$ (or $\|v\|_{\ell_p}$).  For any two vectors $x,y\in \R^n$, their Euclidean inner product, $\sum_{i=1}^n x_i y_i$, is denoted by $\ip{x}{y}$. 
For any set $S$, its cardinality is denoted by $|S|$.  $\Theta(\cdot),O(\cdot)$, $o(\cdot)$, and $\Omega(\cdot)$ are standard (asymptotic) order notations. For a distribution $P$ on the real line, the smallest (in terms of inclusion) closed set $S\subseteq \R$ with $\mathbb{P}(X_i\in S)=1$ (where $X_i$ is distributed according to $P$) is called the support of $P$, denoted by $S={\rm Support}(P)$. For any distribution $P$, $P^{\otimes d}$ is a product distribution, i.e. the distribution of a $d$-dimensional random vector with i.i.d.\ coordinates from $P$. ReLU ($\max\{x,0\}$) and sigmoid ($(1+\exp(-x))^{-1}$) functions are two commonly used activation functions in the literature on neural networks.
\subsubsection*{Paper Organization}
The remainder of the paper is organized as follows. Preliminaries, in particular the activation functions and distributions we consider and the tensor notation we adopt, are the subject of Section \ref{sec:preliminaries}. The algorithm we propose for learning such networks and its guarantees are provided in Section \ref{sec:PolyReg}. The overparametrization phenomena (in the context of teacher/student networks), and the self-regularity of the interpolants are addressed in Section \ref{sec:overParam-Gen}. Our empirical findings are reported in Section \ref{appendix:realdata}. Our results on the condition number of a certain expected covariance matrix, which could be of independent interest, are provided in Section \ref{sec:bound-cond-no}. Finally, the complete proofs of our results are found in Section \ref{sec:pfs-main}.

\section{Model and Preliminaries}\label{sec:preliminaries}
\paragraph{Model.}
Neural network architectures are described by a parameter set
\begin{equation}\label{eq:deepnn-weights}
\mathcal{W}^*= \{a^*\} \cup \{W_p^*:1\leqslant p\leqslant L\}
\end{equation}
where $a^*\in \R^{m^*}$; $W_1^*\in\R^{m^*\times d}$, and $W_p^*\in\R^{m^*\times m^*}$ for any $2\leqslant p\leqslant L$. Namely, $\mathcal{W}^*$ is the set of weights, $m^*\in\mathbb{N}$ is the width of each internal layer; $L\in\mathbb{N}$ is the depth of the network, and $d\in\mathbb{N}$ is the dimension of input data. Here, the $j^{th}$ row of $W_p^*$, denoted by $W_{p,j}^*$, carries the weights associated to $j^{th}$ neuron at $p^{th}$ layer. 

\noindent Such a network, for each input $X\in\R^d$, computes the function
\begin{equation}\label{eq:deepnn-main}
f(\mathcal{W}^*,X) = (a^*)^T \sigma(W_L^* \sigma(W_{L-1}^*\cdots \sigma(W_1^* X))\cdots),
\end{equation}
where  the activation function $\sigma(\cdot)$ acts coordinate-wise. In particular, when $L=1$, this model corresponds to a model with one hidden layer, also known as the {\em shallow} neural network model.

In literature, this model is also known as the ``realizable model"; since the ``labels" $f(\mathcal{W}^*,X)$ are generated using a neural network with planted weights $\mathcal{W}^*$.

\paragraph{Admissible activation functions and distributions} 
Our results hold for a large class of activation functions, provided that either they are polynomials or they can be ``well-approximated" by polynomials. To formalize the latter notion, we restrict ourselves to the case where the domain of the activations is the compact interval $[-1,1]$. 
\begin{definition}\label{def:approximation-order}
Let $\sigma:[-1,1]\to \R$ be an arbitrary function, and $\mathbb{R}[x]$ be the set of all polynomials with real coefficients. Define 
$$
S(r,\sigma,\epsilon)= \left\{P(x)\in\R[x]:{\rm deg}(P)=r,\sup_{x\in[-1,1]} |P(x)-\sigma(x)|\leqslant \epsilon\right\}.
$$
The function $\varphi_{\sigma}:\R^+\to\Z^+\cup\{\infty\}$, defined by $\varphi_{\sigma}(\epsilon) = \inf\{r\geqslant 1:S(r,\sigma,\epsilon)\neq\varnothing\}$
is called the {\bf order of approximation} for $ \sigma(\cdot)$ over $[-1,1]$, where $\varphi_\sigma(\epsilon)=+\infty$, if $\bigcup_{r\in\mathbb{Z}^+} S(r,\sigma,\epsilon)=\varnothing$.
\end{definition}
Namely, for any $\epsilon>0$, $\varphi_\sigma(\epsilon)$ is the smallest possible degree $d$ required to uniformly approximate $\sigma(\cdot)$ over $[-1,1]$ by a polynomial of degree $d$ to within an additive error of at most $\epsilon$. By the Stone-Weierstrass theorem \cite{rudin1964principles}, $\varphi_{\sigma}(\epsilon)<\infty$ for every $\epsilon>0$, provided $\sigma$ is continuous.   

The activations that we consider are the {\bf admissible} ones, which we formalize next.
\begin{definition}\label{def:admissible-activation}
An activation function $\sigma:[-1,1]\to\R$ is called {\bf admissible} if: ${\rm (i)}$ for every $\epsilon>0$, $\varphi_\sigma(\epsilon)<\infty$, ${\rm (ii)}$ for every $\epsilon>0$ there exists a polynomial $P(x)\in S(\varphi_\sigma(\epsilon),\sigma,\epsilon)$ such that $P([-1,1])\subseteq [0,1]$, ${\rm (iii)}$ $\sigma(\cdot)$ is 1-Lipschitz, that is, for every $x,y\in[-1,1]$, $|\sigma(x)-\sigma(y)|\leqslant |x-y|$, and ${\rm (iv)}$ $\sup_{x\in[-1,1]}|\sigma(x)|\leqslant 1$.
\end{definition}
It turns out that, this definition encompasses popular choices of activation functions.
\begin{proposition}\label{prop:activation-admissible}
ReLU and sigmoid activations are both admissible, as per Definition \ref{def:admissible-activation}. The corresponding order of approximation is $O(1/\epsilon)$ for the ReLU, and $O(\log(1/\epsilon))$ for the sigmoid.
\end{proposition}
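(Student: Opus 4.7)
The plan is to verify the four admissibility conditions for both activations and then invoke classical polynomial-approximation results to pin down the orders $\varphi_\sigma(\epsilon)$. Conditions (iii) and (iv) are immediate: for ReLU, a case analysis on the signs of $x,y\in[-1,1]$ gives $|\sigma(x)-\sigma(y)|\leqslant |x-y|$, while $\text{ReLU}([-1,1])=[0,1]$; for sigmoid, $\sigma'(x)=\sigma(x)(1-\sigma(x))\leqslant 1/4$ on all of $\R$, and the range lies in $(0,1)$, giving both $1$-Lipschitzness and $\sup|\sigma|\leqslant 1$. So both functions satisfy (iii) and (iv), and the real content is establishing (i) together with the announced rates, plus (ii).

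For ReLU, I would use the identity $\text{ReLU}(x)=\tfrac{1}{2}(x+|x|)$ and reduce to approximating $|x|$. Bernstein's classical theorem on $[-1,1]$ states that the degree-$n$ best uniform polynomial approximation error of $|x|$ satisfies $E_n(|x|)=\Theta(1/n)$. Since $x$ itself is already a polynomial of degree one, $E_n(\text{ReLU})\leqslant \tfrac{1}{2}E_n(|x|)=O(1/n)$, and so $\varphi_{\text{ReLU}}(\epsilon)=O(1/\epsilon)$. For sigmoid, I would argue via analyticity: $\sigma(z)=1/(1+e^{-z})$ is meromorphic on $\C$ with simple poles at $z=(2k+1)\pi i$, and hence analytic in the horizontal strip $|\Im z|<\pi$. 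The Bernstein ellipse $E_\rho$ with semi-axes $(\rho\pm\rho^{-1})/2$ lies inside that strip for every $\rho\in(1,\pi+\sqrt{\pi^2+1})$. The Bernstein--Walsh theorem on uniform approximation of analytic functions then yields $E_n(\sigma)\leqslant C_\rho\,\rho^{-n}$ for any such $\rho$, so setting this bound equal to $\epsilon$ gives $\varphi_{\text{sigmoid}}(\epsilon)=O(\log(1/\epsilon))$.

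Condition (ii) --- forcing the approximating polynomial into $[0,1]$ --- is handled by an affine rescaling. Given any polynomial $P$ of degree $r$ with $\sup_{x\in[-1,1]}|P(x)-\sigma(x)|\leqslant \delta$, define $\widehat{P}(x)=(P(x)+\delta)/(1+2\delta)$, which has the same degree $r$. Because $\sigma([-1,1])\subseteq [0,1]$, we have $P(x)\in[-\delta,1+\delta]$, so $\widehat{P}([-1,1])\subseteq[0,1]$. A direct computation gives
$$
|\widehat{P}(x)-\sigma(x)|=\frac{|(P(x)-\sigma(x))+\delta(1-2\sigma(x))|}{1+2\delta}\leqslant \frac{2\delta}{1+2\delta}\leqslant 2\delta.
$$
Applying this with $\delta=\epsilon/2$ to the polynomial obtained in the previous paragraph produces a polynomial satisfying both the $\epsilon$-approximation bound and $\widehat{P}([-1,1])\subseteq[0,1]$, of degree $\varphi_\sigma(\epsilon/2)$; since $\varphi_{\text{ReLU}}(\epsilon/2)=O(1/\epsilon)$ and $\varphi_{\text{sigmoid}}(\epsilon/2)=O(\log(1/\epsilon))$, the asymptotic rates are preserved up to absolute constants, and condition (ii) is verified.

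The genuine technical work sits in the approximation-order estimates (second paragraph); everything else is essentially bookkeeping. For ReLU this reduces to a well-known Bernstein estimate, while for sigmoid the main point is identifying the strip of analyticity and invoking the Bernstein--Walsh exponential convergence. The possible mild obstacle is aligning the definition of $S(r,\sigma,\epsilon)$ (which fixes degree exactly $r$) with the above constructions: this is handled by either (a) padding $\widehat{P}$ with an arbitrarily small term $\eta x^{\varphi_\sigma(\epsilon)}$ to attain the exact target degree without violating either the approximation or the range constraint, or (b) absorbing the at-most-constant-factor loss into the $O(\cdot)$ in the quoted rates.
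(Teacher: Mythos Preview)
Your proposal is correct in substance and largely parallels the paper. Conditions (iii) and (iv) are handled the same way; for the approximation rates, both you and the paper reduce ReLU to Bernstein/Jackson on $|x|$, while for the sigmoid you actually supply the argument (analyticity in the strip $|\Im z|<\pi$ plus Bernstein--Walsh) that the paper simply cites from the literature.

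The one place the routes diverge is condition (ii). The paper treats the two activations separately: for sigmoid it observes that $\sigma([-1,1])=[\sigma(-1),\sigma(1)]\subset(0,1)$, so \emph{any} $P$ with $\|P-\sigma\|_\infty\leqslant\epsilon$ already satisfies $P([-1,1])\subseteq[0,1]$ once $\epsilon\leqslant\sigma(-1)$; for ReLU it cites a lemma of Goel et al. Your affine rescaling $\widehat P=(P+\delta)/(1+2\delta)$ is a nice unified device, but note that it delivers a polynomial of degree $\varphi_\sigma(\epsilon/2)$, whereas condition (ii) as written demands degree exactly $\varphi_\sigma(\epsilon)$. Your fix (a) (padding with $\eta x^{\varphi_\sigma(\epsilon)}$) goes the wrong direction, since $\varphi_\sigma(\epsilon/2)\geqslant\varphi_\sigma(\epsilon)$ and padding can only raise the degree. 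Fix (b) is the honest resolution: what the downstream arguments actually use is a polynomial of degree $O(\varphi_\sigma(\epsilon))$ with range in $[0,1]$, and your construction gives that. So the gap is purely a mismatch with the literal wording of Definition~\ref{def:admissible-activation}(ii), not with anything the paper needs; the paper's sigmoid argument avoids even this wrinkle by exploiting the strict interior range.
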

The proof of Proposition \ref{prop:activation-admissible} is provided in Section \ref{sec:pf-prop:activation-admissible}. 

In this paper, we consider {\bf admissible} input distributions,  formalized next.
\begin{definition}\label{def:admissible-distribution}
A random vector $X=(X_i:1\leqslant i\leqslant d)\in[-1,1]^d$, is said to have an {\bf admissible} input distribution, if the coordinates $X_i$ are i.i.d.\ samples of a non-constant distribution $P$. 
\end{definition}
In particular, the uniform distribution on the cube $[-1,1]^d$ and on the Boolean cube $\{-1,1\}^d$ are both {\bf admissible}. The rationale for considering distributions on the bounded domain $[-1,1]^d$ is to be consistent with the fact that the domain of activations we consider are also restricted to $[-1,1]$.  

\paragraph{Improper learning.}
We now discuss the learning setting. Suppose the data $X^{(i)}$, $1\leqslant i\leqslant N$, is drawn from a certain distribution supported on a domain $S^d\subseteq \mathbb{R}^d$, and the corresponding ``labels" $Y_i$, $1\leqslant i\leqslant N$, are  generated per (\ref{eq:deepnn-main}),  that is $Y_i=f(\mathcal{W}^*,X^{(i)})$. Given the knowledge of training data, $\{(X^{(i)},Y_i):i\in[N]\}$, the goal of the learner is to output a function $\widehat{h}(\cdot):S^d\to \R$, called a {\em hypothesis} (or {\em predictor}), for which the ``distance" between the true labels $f(\mathcal{W}^*,X)$  and their predicted values $\widehat{h}(X)$ is small, namely, what is known as the testing error
\begin{equation}\label{eq:test-error}
\mathcal{L}\left(\widehat{h}(\cdot),f(\mathcal{W}^*,{}\cdot{})\right)\triangleq\mathbb{E}\left[\left(\widehat{h}(X)-f(\mathcal{W}^*,X)\right)^2\right]
\end{equation}
remains small \cite{shalev2014understanding}, where the expectation is taken with respect to a fresh sample drawn from the same distribution generating $X^{(i)}$. The testing error is also commonly called as the generalization error; and in this paper we refer to it as \textbf{generalization error} as well. Our focus is on the so-called {\em improper learning} setting, that is, the output hypothesis $\widehat{h}(\cdot)$ need not be a neural network.

We now provide the tensor notation that we use throughout. 
For any $s,t\in\mathbb{N}$ with $1\leqslant s\leqslant t$, define the set of multiplicities
\begin{align}\label{fst}
\mathcal{F}_{s,t}:=\left\{(\alpha_1,\dots,\alpha_d)\in\mathbb{Z}^d:\sum_{i=1}^d \alpha_i=t;\,0\leqslant \alpha_i\leqslant s\text{ for all $i$} \right\}.
\end{align} The dependence of $\mathcal{F}_{s,t}$ on $d$ is suppressed for convenience. The following (elementary) observation will be proven useful to later sections of the paper.
\begin{remark}\label{rem:fst-empty}
It holds $\mathcal{F}_{s,t}\not =\emptyset$ if and only if $t \leq ds.$ 
\end{remark}

Given any $v\in\mathbb{R}^d$, and $\alpha\in\mathbb{Z}^d$, define
$T_\alpha(v) \triangleq \prod_{i=1}^d v_i^{\alpha_i}$. We refer to
\begin{equation}\label{eq:st-tensor}
T(v)=(T_\alpha(v):\alpha\in\mathcal{F}_{s,t})\in\R^{|\mathcal{F}_{s,t}|}
\end{equation} as an $(s,t)-$tensor. 

Finally, we collect below a set of assumptions that we use frequently in our results. 
\begin{Assumption}\label{sump:main}
\begin{itemize}
    \item $X^{(i)}\in\R^d$, $1\leqslant i\leqslant N$, is an i.i.d. sequence of data generated by an admissible distribution $\mathcal{D}=P^{\otimes d}$ per Definition \ref{def:admissible-distribution}, where $S=\mathrm{Support}(P) \subseteq [-1,1]$.
    \item $\sigma(\cdot)$ is either an admissible activation per Definition \ref{def:admissible-activation} with the corresponding order of approximation $\varphi_\sigma(\cdot)$; or an arbitrary polynomial with real coefficients.
    \item The labels $Y_i$ are generated using a teacher neural network of depth $L$ consisting of activations $\sigma(\cdot)$. That is, $Y_i=f(\mathcal{W}^*,X^{(i)})$, per Equation (\ref{eq:deepnn-main}), for every $i\in[N]$.
\end{itemize}
\end{Assumption}
  This model is referred to as the ``teacher network"; since it generates the training data. 

\section{Main Results}

\subsection{Learning Neural Networks with Polynomial Regression}\label{sec:PolyReg}
In this section, we establish that neural networks can be learned by applying the ordinary least squares (OLS) procedure to a polynomial regression setting, provided we have access to sufficiently many training data.

We first describe the reduction to the regression setting. Suppose that $X^{(i)}\in\R^d$, $1\leqslant i\leqslant N$, is a sequence of i.i.d. training data coming from an admissible distribution supported on some (possibly finite) set $S^d$ and $Y_i=f(\mathcal{W}^*,X^{(i)}),$ $1\leqslant i\leqslant N$ is the corresponding label, generated per Equation (\ref{eq:deepnn-main}). We start with the case where the activation is a polynomial function, something we refer to as {\em polynomial networks}. 
We define the following sequence of tensors which are functions of an input vector $X\in\R^d$.

Let $S$ be an (possibly finite) input set, $t \in \mathbb{N}$; and $X \in S^d$. The following quantity would be of instrumental importance in the results that follow: \begin{equation}\label{eq:omega-Pt}
\omega(S,t)\triangleq \min\{|S|-1,t\}\in\mathbb{Z}^+.
\end{equation} In the cases where $X$ is drawn from a product distribution $P^{\otimes d}$ with a slight abuse of notation, but with the goal to reduce notational complexity, we make use of the notation
\begin{equation}\label{eq:omega-Pt-P}
\omega(P,t)\triangleq \omega(\mathrm{Support}(P),t) =\min\{|\mathrm{Support}(P)|-1,t\}\in\mathbb{Z}^+.
\end{equation}

Now let us define the $(\omega(S,t),t)$-tensor $\Xi^{(t)}(X) \in \mathbb{R}^{|\mathcal{F}_{\omega(S,t),t}| }$, as per Equation (\ref{eq:st-tensor}), where 
\begin{align}\label{eq:tensor-Dfnksi}
\Xi^{(t)}_{\alpha}\left( X \right) \triangleq \left(X_{1}\right)^{\alpha_1}\left(X_{2}\right)^{\alpha_2}\cdots \left(X_{d}\right)^{\alpha_d},\quad\text{for each}\quad \alpha=(\alpha_i:i\in[d])\in \mathcal{F}_{\omega(S,t),t}.
\end{align}
Namely, $\Xi^{(t)}(X)$ is a vector of $X_1^{\alpha_1}\cdots X_d^{\alpha_d}$, where $\alpha_i\in\mathbb{Z}$; $\sum_{i=1}^d \alpha_i =t$; and $\alpha_i\leqslant |S|-1$.

In the case where the activation function is a polynomial of degree $k$, it is easy to see (details are in Lemma \ref{lem:poly2}) that when $X \in S^d$  that there exists a sequence of vectors $\mathcal{T}^{(t)} \in \mathbb{R}^{|\mathcal{F}_{\omega(S,t),t}| }
$, indexed by $t\in[k^L]$, for which 
 \begin{equation}\label{eq:tensor-3New2}
f(\mathcal{W}^*,X) = \sum_{t=0}^{k^L} \inner{\Xi^{(t)}\left( X \right),\mathcal{T}^{(t)}},\quad \text{for every}\quad X\in S^d.
\end{equation} 
Now, Equation (\ref{eq:tensor-3New2}) indeed demonstrates that for the case of polynomial networks, the input-output relationship, $f(\mathcal{W}^*,X^{(i)})$, is an instance of a (noiseless) polynomial regression problem: the `measurement vector' consists of the entries of tensors $\Xi^{(t)}\left(X^{(i)} \right)$, $t\in[k^L]$, which, in turn, are obtained from the data; and `coefficients' to be recovered are the entries of $\mathcal{T}^{(t)}$, $t\in[k^L]$. 

For the case where the network consists of any arbitrary admissible activation, the mechanics of the aforementioned reduction is as follows. We establish in Theorem \ref{thm:approximate-deep-with-poly} that any such network can be uniformly approximated by a polynomial network of the same depth, under a certain boundedness assumption on its input and weights. Combining this with the observation recorded in Equation (\ref{eq:tensor-3New2}); we then obtain that the input-output relationship for this case is still an instance of a regression problem with a noise-like term originating from the approximation error. 

More concretely, for learning such networks we propose T-OLS algorithm (Algorithm \ref{algo:tensor-non-poly}),  which is based on the ordinary least squares procedure.
\begin{algorithm}[h!] 
\SetAlgoLined
 \KwIn{Data $(X^{(i)}, Y_i) \in \mathbb{R}^{d } \times \mathbb{R}, 1\leqslant i\leqslant N$, degree $M$ of approximation, cardinality $|S| \in \mathbb{R} \cup \{+\infty\}$ of a set $S$ with $X^{(i)} \in S^d$ for $i=1,2,\ldots,N.$}
 \KwOut{Prediction function $\widehat{h}(\cdot)$. }

For $t\in[M]\cup\{0\}$, $1\leqslant i\leqslant N$, compute
$\Xi^{(t)}\left(X^{(i)}\right)$, based on Equation (\ref{eq:tensor-Dfnksi}).


Solve for $\mathcal{T}^{(t)} \in \mathbb{R}^{|\mathcal{F}_{\omega(S,t),t}|}, $ $t=0,1,2,\ldots,M$ the unconstrained OLS problem $$\min  \sum_{i=1}^N \left(Y_i-
\sum_{t=0}^{M} \left\langle\Xi^{(t)}\left( X^{(i)} \right),\mathcal{T}^{(t)}\right\rangle\right)^2,$$
where the minimization is over $\left(\mathcal{T}^{(t)}:0\leqslant t\leqslant M\right)$.

Output the prediction function $\widehat{h}(x)=\sum_{t=0}^{M} \inner{\Xi^{(t)}\left(x \right),\mathcal{T}^{(t)}}$.


 \caption{Tensor OLS (T-OLS) Algorithm for Learning Neural Networks}
\label{algo:tensor-non-poly}
\end{algorithm}

We are now in a position to state our main results. 

\subsubsection*{Polynomial Networks}
We start with the case of polynomial networks. In this setting, not surprisingly, one can achieve zero generalization error, using T-OLS algorithm.
\begin{theorem}\label{thm:the-main-for-poly}
Let $\sigma(x)=\sum_{t=0}^k\beta_t  x^t$ be an arbitrary polynomial with real coefficients whose degree $k$ is known to the learner; $X^{(i)}$, $1\leqslant i\leqslant N$, be the training data; and $Y_i =f(\mathcal{W}^*,X^{(i)})$ be the corresponding label. 
\begin{itemize}
    \item[(a)] Suppose that Assumption \ref{sump:main} holds. Let $\widehat{h}(\cdot)$ be the output of T-OLS algorithm with inputs $(X^{(i)},Y_i)\in\R^d\times \R$, $1\leqslant i\leqslant N$; $k^L$, and $|S|$ with $|S| = |\mathrm{Support}(P)|$. Then, with probability at least $1-\exp(-c'N^{1/4})$ over $X^{(i)}$, $1\leqslant i\leqslant N$,
    $$
    \widehat{h}(x) = f(\mathcal{W}^*,x),
    $$
    for every $x\in\R^d$, provided 
    \begin{equation}\label{eq:sample-complex-N1}
    N>N_1^*\triangleq d^{24k^L}C^4\left(P,k^L\right).
    \end{equation}
    \item[(b)] Suppose that the coordinates of $X^{(i)}$ are only jointly continuous (and in particular not necessarily admissible). Let $\widehat{h}(\cdot)$ be the output of T-OLS algorithm with inputs $(X^{(i)},Y_i)\in\R^d\times \R$, $1\leqslant i\leqslant N$; $k^L$, and $+\infty$. Then, as soon as 

    \begin{equation}\label{eq:sample-complex-N1-hat}
    N\geqslant \widehat{N}_1^*\triangleq\sum_{t=0}^{k^L}\left|\mathcal{F}_{t,t}\right| = \sum_{t=0}^{k^L}\binom{d+t-1}{t},
    \end{equation}
    it holds
    $$
    \widehat{h}(x) = f(\mathcal{W}^*,x),    
    $$
    for every $x\in\R^d$, with probability one over the randomness of $X^{(i)}$, $1\leqslant i\leqslant N$.
\end{itemize}
\end{theorem}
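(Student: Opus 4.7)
The plan is to recast T-OLS as solving a linear system for the tensor coefficients and establish invertibility of the associated design matrix. Since $\sigma$ is a polynomial of degree $k$, induction on the depth shows that $f(\mathcal{W}^*,\cdot)$ is a polynomial of total degree at most $k^L$. Combined with the coordinate reduction $X_j^{|S|} \in \mathrm{span}\{1, X_j, \ldots, X_j^{|S|-1}\}$ for $X_j$ in the support $S$ of $P$ (when $|S|$ is finite), this yields the representation $f(\mathcal{W}^*, X) = \sum_{t=0}^{k^L} \langle \Xi^{(t)}(X), \mathcal{T}^{*(t)}\rangle$ for $X \in S^d$ with ground-truth tensors $\mathcal{T}^{*(t)}$, exactly as in Equation~\eqref{eq:tensor-3New2}. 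Hence the T-OLS loss attains value zero at $\theta^* := (\mathcal{T}^{*(t)})_{t=0}^{k^L}$, and the only issue is uniqueness of the OLS solution.

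Next, I form the $N \times D$ design matrix $A$ whose $i$-th row is the concatenation of $\Xi^{(t)}(X^{(i)})$ for $t = 0, 1, \ldots, k^L$, with $D = \sum_{t=0}^{k^L} |\mathcal{F}_{\omega(S,t),t}|$. The OLS problem becomes $\min_\theta \|Y - A\theta\|_2^2$, minimized at $\theta^*$ with value zero. If $A$ has full column rank, then $\theta^*$ is the unique minimizer, so the output $\widehat{h}$ coincides with $f(\mathcal{W}^*,\cdot)$ on $S^d$, which suffices.

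For part (a), full column rank is obtained via the expected covariance matrix. Let $M := \mathbb{E}[\Xi(X)\Xi(X)^{T}]$ where $\Xi(X) \in \mathbb{R}^D$ stacks all tensors. The condition-number analysis promised in Section~\ref{sec:bound-cond-no} yields $\sigma_{\min}(M) \geq 1/C^2(P, k^L)$, a lower bound independent of $d$. Since each $|X_j| \leq 1$, every entry of $A^{T}A/N$ is a bounded empirical average, so a concentration argument (entrywise Hoeffding together with a union bound over the $D^2 \leq d^{2k^L}$ entries, or a matrix Bernstein inequality) shows $\|A^{T}A/N - M\|_2 < \sigma_{\min}(M)$ with probability at least $1 - \exp(-c' N^{1/4})$ once $N \geq d^{24 k^L} C^4(P, k^L)$. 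The large exponent $24 k^L$ absorbs both the union-bound cost and the square of $\sigma_{\min}(M)^{-1}$; the unusual $N^{1/4}$ tail rate likely arises from a truncation/Markov step used to shave off a dimension-dependent pre-exponential factor.

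For part (b), with a continuous joint density of $(X^{(i)})_{i \leq N}$ and unrestricted exponents ($\omega(S,t) = t$, so $D = \widehat{N}_1^*$), a standard algebraic argument applies: the determinant of any $D \times D$ submatrix of $A$ is a polynomial in the $Nd$ variables $(X^{(i)}_j)$, and it is not identically zero — a Vandermonde-type construction at $d$-dimensional multi-index evaluation points produces a nonsingular instance. Its zero locus therefore has Lebesgue measure zero, so $A$ has rank $D$ almost surely as soon as $N \geq \widehat{N}_1^*$, and uniqueness again gives $\widehat{h} \equiv f(\mathcal{W}^*,\cdot)$. The main obstacle will be calibrating part (a)'s concentration to match both the claimed sample complexity $d^{24 k^L} C^4(P, k^L)$ and the $\exp(-c' N^{1/4})$ tail simultaneously, in particular tracking how $C(P, k^L)$ controls $\sigma_{\min}(M)$ uniformly in $d$ — a point that reduces to the separate covariance-condition-number result of Section~\ref{sec:bound-cond-no}.
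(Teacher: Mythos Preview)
Your outline is essentially the same as the paper's: represent the network as a linear combination of tensorized monomials (Lemma~\ref{lem:poly2}), reduce T-OLS to the uniqueness of the solution of $Y=\Xi\theta$, and establish full column rank of $\Xi$ either via concentration of $\frac{1}{N}\Xi^T\Xi$ around $\Sigma$ (part (a)) or via the zero-measure locus of a nonvanishing polynomial determinant (part (b)). Part (b) matches the paper almost exactly, including the explicit Vandermonde construction.

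There is one genuine quantitative slip in part (a). You write that Section~\ref{sec:bound-cond-no} gives $\sigma_{\min}(M)\ge 1/C^2(P,k^L)$ ``independent of $d$.'' It does not: Theorem~\ref{cond:main_2} gives $\sigma_{\min}(\Sigma)\ge c(P,k^L)\big(\sum_{i\le k^L}\binom{d+i-1}{i}\big)^{-1}\ge c(P,k^L)\,d^{-2k^L}$, so the smallest eigenvalue \emph{does} decay polynomially in $d$. It is the \emph{condition number} $\kappa(\Sigma)\le C(P,k^L)\,d^{3k^L}$ that carries only polynomial $d$-dependence. This matters because the target deviation in $\|\frac{1}{N}\Xi^T\Xi-\Sigma\|$ is $\sigma_{\min}(\Sigma)$, not a $d$-free constant, and that is where the large power of $d$ in the sample complexity actually enters---not primarily from a union bound over entries.

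On the concentration step, the paper does not use entrywise Hoeffding or matrix Bernstein; it invokes Vershynin's Theorem~\ref{thm:vershy} (for matrices with i.i.d.\ bounded rows) with the specific choice $t=N^{1/8}$, which directly produces the $\exp(-cN^{1/4})$ tail and the operator-norm deviation bound $\gamma=\|\Sigma\|^{1/2}N^{-3/8}|\mathcal C|^{1/2}$. The paper then checks that $N>d^{24k^L}C^4(P,k^L)$ forces $\gamma<\sigma_{\min}(\Sigma)$ via the explicit bounds on $\sigma_{\min}(\Sigma)$, $\sigma_{\max}(\Sigma)$, and $|\mathcal C|$. Your entrywise route can be made to work, but it does not naturally yield the stated $N^{1/4}$ exponent, and your accounting (``union-bound cost'' plus ``square of $\sigma_{\min}(M)^{-1}$'') does not line up with where the paper's constants come from once the correct $d$-dependent $\sigma_{\min}$ bound is used.
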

Here, $c'>0$ is some absolute constant; and $C(\cdot,\cdot)$ is a constant depending only on the input distribution $P$, the depth of the network $L$, and the degree of the polynomial $k$. In particular, it is independent of the dimension $d$ of data. In Section \ref{sec:bound-cond-no}, we show how to calculate $C(P,\cdot)$ exactly: it turns out that for any discrete distribution $P$ and any $k \in \mathbb{N}$, $C(P,k)=\exp(O(k\log k))$, where constant in $O(\cdot)$ depends on the distribution $P$. It is also worth noting that the constant $1/4$ appearing in high probability bound is chosen for convenience, and can potentially be improved.

The implication of Theorem \ref{thm:the-main-for-poly} is as follows. In the case of polynomial networks, it is possible to output a hypothesis $\widehat{h}(\cdot)$ which has zero generalization error provided that the learner has access to $d^{\Theta(k^L)}$ samples. The fact that the generalization error can be driven to zero relies on the following observation: for the case of polynomial networks, the reduction from the problem of learning neural networks to the regression problem, as described above, leads to a noiseless regression setting. An important feature of Theorem \ref{thm:the-main-for-poly} is that the the sample complexity lower bound on $N_1^*$ (and $\widehat{N}_1^*$) does not depend on the width $m^*$ of the internal layers. We exploit this feature later to address the overparametrization phenomena. 

The proof of Theorem \ref{thm:the-main-for-poly} is provided in Section \ref{sec:pf-main-poly-act}. 

\subsubsection*{Networks with Arbitrary Admissible Activations}
Our next main result addresses the problem of learning neural networks with arbitrary admissible activations, and gives the performance guarantee of the T-OLS algorithm.
\begin{theorem}\label{thm:the-main}
Suppose that Assumption \ref{sump:main} holds where $\sigma(\cdot)$ is an admissible activation; and the weights of the teacher network satisfy: $\|a^*\|_{\ell_1}\leqslant 1$, and $\|W_{p,j}^*\|_{\ell_1}\leqslant 1$ for any $p\in[L]$ and $j$. 

Let $\widehat{h}(\cdot)$ be the output of T-OLS algorithm with inputs $(X^{(i)},Y_i)\in \R^d\times\R$, $1\leqslant i\leqslant N$, $M=\varphi_\sigma^L(\sqrt{\epsilon/4L^2})\in\mathbb{N}$; and $|S|$ with $|S| = |\mathrm{Support}(P)|$. Then, 
with probability at least $1-\exp(-cN^{1/4})$
with respect to the training data $(X^{(i)},Y_i)\in\R^d\times \R$, $1\leqslant i\leqslant N$, it holds $$\mathcal{L}\left(\widehat{h}({}\cdot{}),f(\mathcal{W}^*,{}\cdot{})\right)\leqslant \epsilon,$$  where $\mathcal{L}(\cdot,\cdot)$ is defined as per Equation (\ref{eq:test-error}),
provided
\begin{equation}\label{eq:main-sample-complexity}
 N\geqslant N_2^* \triangleq 2^{12}\epsilon^{-6}\exp\left(96\log(d)\varphi_\sigma^L(\sqrt{\epsilon/4L^2})\right) C^{18}\left(P,\varphi_\sigma^L(\sqrt{\epsilon/4L^2})\right).
\end{equation}

\end{theorem}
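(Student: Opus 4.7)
The plan is to reduce the problem to a (noisy) polynomial regression, run the analysis of Theorem~\ref{thm:the-main-for-poly} with an approximation-error term, and use a condition number bound on the expected tensor covariance matrix combined with matrix concentration to control the statistical part.

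First I would invoke Theorem~\ref{thm:approximate-deep-with-poly} to produce a polynomial network $g(X)$ of depth $L$ whose layerwise activation polynomials have degree $\varphi_\sigma(\sqrt{\epsilon/4L^2})$, so the resulting polynomial has total degree at most $M=\varphi_\sigma^L(\sqrt{\epsilon/4L^2})$. Using admissibility (each approximating polynomial maps $[-1,1]$ into $[0,1]$ and $\sigma$ is $1$-Lipschitz) together with the $\ell_1$ bounds $\|W^*_{p,j}\|_1\le 1$ and $\|a^*\|_1\le 1$, an inductive propagation shows that at every layer the sup-norm error grows additively by at most $\sqrt{\epsilon/4L^2}=\sqrt{\epsilon}/(2L)$, so after $L$ layers the overall uniform error satisfies $\sup_{x\in[-1,1]^d}|f(\mathcal{W}^*,x)-g(x)|\le \sqrt{\epsilon}/2$. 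In particular $\mathcal{L}(g,f(\mathcal{W}^*,\cdot))\le \epsilon/4$.

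Next I would apply the polynomial representation step (analogous to Lemma~\ref{lem:poly2}) to write $g(x)=\sum_{t=0}^{M}\langle \Xi^{(t)}(x),\mathcal{T}^{(t)}\rangle$ for some ground-truth coefficient tensors $\mathcal{T}^{(t)}$. Concatenating the features into $\Phi(x)\in\mathbb{R}^D$ with $D=\sum_{t=0}^{M}|\mathcal{F}_{\omega(S,t),t}|=O(d^M)$ and the coefficients into $\mathcal{T}^*\in\mathbb{R}^D$, the labels satisfy
\begin{equation*}
Y_i=\langle \Phi(X^{(i)}),\mathcal{T}^*\rangle+e_i,\qquad |e_i|\le \sqrt{\epsilon}/2,
\end{equation*}
so the T-OLS problem is a \emph{noisy} polynomial regression with deterministic, bounded residuals. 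The OLS solution $\widehat{\mathcal{T}}$ gives $\widehat{h}(x)-g(x)=\langle \Phi(x),\widehat{\mathcal{T}}-\mathcal{T}^*\rangle$ and hence $\mathcal{L}(\widehat{h},g)=(\widehat{\mathcal{T}}-\mathcal{T}^*)^{\!\top}\Sigma(\widehat{\mathcal{T}}-\mathcal{T}^*)$ for $\Sigma=\mathbb{E}[\Phi(X)\Phi(X)^{\!\top}]$.

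The main technical step, and the principal obstacle, is controlling the statistical error. I would use the condition number bound on the expected covariance $\Sigma$ from Section~\ref{sec:bound-cond-no}, which yields $\sigma_{\min}(\Sigma)\ge C(P,M)^{-O(1)}$, and then apply a matrix concentration argument (Bernstein/Rudelson-type, using that $\|\Phi(X)\|_\infty\le 1$ so $\|\Phi(X)\|^2\le D$) to show that the empirical Gram matrix $\widehat{\Sigma}_N=\tfrac{1}{N}\sum_i \Phi(X^{(i)})\Phi(X^{(i)})^{\!\top}$ satisfies $\sigma_{\min}(\widehat{\Sigma}_N)\ge \tfrac{1}{2}\sigma_{\min}(\Sigma)$ with probability at least $1-\exp(-cN^{1/4})$, provided $N\ge d^{O(M)}C(P,M)^{O(1)}$; the exponents $96$ and $18$ appearing in the statement of $N_2^*$ come exactly from making this quantitative. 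Once $\widehat{\Sigma}_N$ is well-conditioned, the normal equations give
\begin{equation*}
\mathcal{L}(\widehat{h},g)\le \kappa(\Sigma)\cdot \frac{1}{N}\sum_{i=1}^N e_i^2\le \kappa(\Sigma)\cdot \frac{\epsilon}{4},
\end{equation*}
which can be tightened via the same condition-number bound to yield $\mathcal{L}(\widehat{h},g)\le \epsilon/4$ under the chosen $N_2^*$.

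Finally, I would close with the standard triangle-type bound $\mathcal{L}(\widehat{h},f(\mathcal{W}^*,\cdot))\le 2\,\mathcal{L}(\widehat{h},g)+2\,\mathcal{L}(g,f(\mathcal{W}^*,\cdot))\le \epsilon/2+\epsilon/2=\epsilon$. The parts that are routine given the earlier theorems are the polynomial approximation and the normal-equation manipulation; the delicate part is calibrating the high-probability invertibility of $\widehat{\Sigma}_N$ so that the exponents in $N_2^*$ (in particular the $d^{96M}$ and $C(P,M)^{18}$ factors) come out correctly, which is essentially a matrix concentration computation on the $O(d^M)$-dimensional tensor design together with the explicit spectral bound on $\Sigma$.
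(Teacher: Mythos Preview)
Your outline is right in spirit, but the step where you bound $\mathcal{L}(\widehat h,g)$ is where the real content lies, and your treatment of it has a gap. The displayed inequality $\mathcal{L}(\widehat h,g)\le\kappa(\Sigma)\cdot\tfrac{1}{N}\sum_i e_i^2$ is correct but useless as written: $\kappa(\Sigma)$ is of order $C(P,M)\,d^{3M}$ by Corollary~\ref{coro:number}, so the right-hand side is $\gg\epsilon$, and, crucially, it does \emph{not} decay with $N$ (the $\tfrac{1}{N}\sum e_i^2$ term is just the average squared approximation error, not a statistical term). Your remark that it ``can be tightened via the same condition-number bound'' does not explain how. One valid tightening is to use least-squares optimality directly: with $v=\widehat{\mathcal T}-\mathcal T^*$ and $\widehat\Sigma_N=\tfrac1N\Phi^\top\Phi$, the normal equations give $v^\top\widehat\Sigma_N v=\tfrac1N\|P_\Phi e\|^2\le\tfrac1N\|e\|^2$, and once concentration yields $\widehat\Sigma_N\succeq\tfrac12\Sigma$ one gets $\mathcal L(\widehat h,g)=v^\top\Sigma v\le 2v^\top\widehat\Sigma_N v\le\epsilon/2$. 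This rescues your route up to constants, but with the stated $M$ the final triangle inequality yields $3\epsilon/2$, not $\epsilon$.

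The paper takes a genuinely different route that avoids this constant loss and, more importantly, produces a training-error term that actually vanishes in $N$. Instead of the direct uniform approximant $g$, it works with the $L^2$-\emph{orthogonal} projection $P$ of $f(\mathcal W^*,\cdot)$ onto the span of the monomials (Propositions~\ref{prop:ortho-poly}--\ref{prop:ortho-poly-error}, Theorem~\ref{thm:approximate-deep-with-poly}). The point of orthogonality is that the residuals $\mathcal N^{(i)}=Y_i-P(X^{(i)})$ satisfy $\mathbb E[X_\alpha^{(i)}\mathcal N^{(i)}]=0$ for every $\alpha\in\mathcal C$, so each coordinate of $\Xi^\top\mathcal N$ is a sum of \emph{centred} i.i.d.\ bounded terms; Hoeffding then gives $\|\Xi^\top\mathcal N\|_\infty=O(N^{11/12})$ rather than $O(N)$, and the training error $\mathbb E[\langle\widehat{\mathcal A}-\mathcal A^*,\mathcal X\rangle^2]\le\tfrac{4}{N^{1/6}}\cdot\tfrac{|\mathcal C|^2\sigma_{\max}(\Sigma)}{\sigma_{\min}(\Sigma)^2}$ genuinely decays. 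This orthogonalization step (together with the accompanying a.s.\ bound on $|\mathcal N|$ via $\sigma_{\min}(\Sigma)$, Claim~\ref{claim:bdd-X_N}) is the key idea your sketch is missing; decomposing around the non-orthogonal $g$ cannot produce a training-error term that shrinks with $N$, which is why your bound stalls at $\kappa(\Sigma)\cdot\epsilon/4$.
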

Here, $c>0$ is an absolute constant.
$C(\cdot,\cdot)$ is a constant depending only on the input distribution $P$, the depth of the network $L$, and the target accuracy $\epsilon>0$, and in particular is independent of the input dimension $d$, as in Theorem \ref{thm:the-main-for-poly}. Details on how to compute $C(\cdot,\cdot)$ are elaborated in Section \ref{sec:bound-cond-no}, and in particular it turns out for any discrete distribution $P$ and $k\in\mathbb{N}$, $C(P,k)=\exp(O(k\log k))$, same as before. Similar to Theorem \ref{thm:the-main-for-poly}, the constant $1/4$ appearing in the high probability bound is chosen for convenience, and can potentially be improved. 

An important remark is that the sample complexity bound $N_2^*$ in (\ref{eq:main-sample-complexity}) does not depend on the width $m^*$ of the internal layers. This will turn out to be of paramount importance for the overparametrization results that follow.

We now highlight the implications of our result in two important cases. Suppose that the activation function $\sigma(\cdot)$ is ReLU for which the corresponding order of approximation is $\varphi_\sigma(\epsilon)=O(1/\epsilon)$ (see Proposition \ref{prop:activation-admissible}). Then, according to our theorem, provided $N$ is roughly of the order $d^{\Theta((L/\sqrt{\epsilon})^L)}$, the generalization error of  $\widehat{h}(\cdot)$ is at most $\epsilon$. Suppose next that the activation function $\sigma(\cdot)$ is sigmoid, for which the corresponding order of approximation is $\varphi_\sigma(\epsilon)=O(\log(1/\epsilon))$.  Then, the generalization error of $\widehat{h}(\cdot)$ is at most $\epsilon$, provided $N$ is roughly of the order $d^{\Theta(\log(L/\epsilon)^L)}$.
The proof of Theorem \ref{thm:the-main} is deferred to Section \ref{sec:pf-thm-main}. 
Before we close this section, we make several remarks. An inspection of Theorems \ref{thm:the-main-for-poly} and \ref{thm:the-main}  reveals that the learner needs only to know the data $(X^{(i)},Y_i)$, the degree  $M$ of approximation; and the cardinality of the superset $S$, as required by the T-OLS algorithm. In particular, the learner does not need the exact knowledge of the underlying architecture and the activations.
An upper bound on the depth $L$ and the degree of approximation $M$ suffice. The latter, for the case of polynomial networks, translates as an upper bound on the degree of activation; and for the case of networks with admissible activation translates as an upper bound on the order of approximation $\varphi_\sigma(\cdot)$. 

Yet another remark, regarding the architecture generating the training data, is the following. The hidden layers need not have the same width. Moreover, the activations need not be the same across all nodes. For instance, the thesis of Theorem \ref{thm:the-main-for-poly} for the polynomial networks still remains valid, if each neuron is equipped with a polynomial activation, not necessarily the same, of degree at most $k$. Similarly, the thesis of Theorem \ref{thm:the-main} for the general networks still remains valid, even when each neuron is equipped with an (admissible) activation $\sigma^{i,j}$ (where  $i$ ranges over the depth, and $j$ ranges over the width), not necessarily the same across all neurons: in this case, it suffices to have $\varphi_\sigma(\cdot)$ as a pointwise upper bound on the functions $\varphi_{\sigma^{i,j}}(\cdot)$, that is, for every $\epsilon>0$, $i\in[L]$, and $j\in[m^*]$, it suffices to ensure $\varphi_\sigma(\epsilon)\geqslant \varphi_{\sigma^{i,j}}(\epsilon)$. 
The proofs can be adapted to these more general cases with straightforward modifications. We do not, however, pursue these herein for the sake of simplicity.

\subsection{Overparametrization vs Generalization}\label{sec:overParam-Gen}
In this section, we study the interplay between overparametrization and generalization in the context of teacher/student networks. In short, we establish that any student network which is arbitrarily overparametrized with respect to the teacher network obtains arbitrarily small generalization error, provided it interpolates enough data, regardless of how the training is done. 

Suppose that $(X^{(i)},Y_i)\in\R^d\times \R$, $1\leqslant i\leqslant N$, is a sequence of data, where $X^{(i)}$ are i.i.d.\ samples of an admissible distribution per Definition \ref{def:admissible-distribution}; and the labels $Y_i$ are generated by a ``teacher" network of depth $L$, and width $m^*$, according to Equation (\ref{eq:deepnn-main}). For convenience, we refer to this network as $\mathcal{N}_1$, and denote $Y_i = f_{\mathcal{N}_1}(\mathcal{W}^*,X^{(i)})$. We now introduce the student network. Fix an arbitrary positive integer $\widehat{m}\geqslant m^*$. Let $\widehat{\mathcal{W}}$ be the set of weights of a wider ``student" network $\mathcal{N}_2$ of depth $L$ and width $\widehat{m}$, trained on $(X^{(i)},Y_i)$, $1\leqslant i\leqslant N$. In this case, if 
$f_{\mathcal{N}_1}(\mathcal{W}^*,X^{(i)})=f_{\mathcal{N}_2}(\widehat{\mathcal{W}},X^{(i)})$, for every $i\in[N]$, we say that the student network interpolates the data.

\subsubsection*{Networks with Polynomial Activations}
We start with the case of {\em polynomial networks}, and establish that any overparametrized ``student" network enjoys zero generalization error (for predicting a ``teacher" network), provided it interpolates a sufficient amount of data.  
\begin{theorem}\label{thm:overly-general-poly}
Suppose that the assumptions of Theorem \ref{thm:the-main-for-poly} hold. 
Fix a positive integer $\widehat{m}\geqslant m^*$, and let $\widehat{\mathcal{W}}$ be the set of weights of a student network $\mathcal{N}_2$ consisting of the same polynomial activations, with width $\widehat{m}$ and depth $L$ interpolating the data; namely $Y_i=f_{\mathcal{N}_2}(\widehat{\mathcal{W}},X^{(i)})$, $i\in[N]$. Then, we have the following.
\begin{itemize}
    \item[(a)] With probability at least $1-2\exp(-c'N^{1/4})$ over $X^{(i)}$, $1\leqslant i\leqslant N$ it holds:
    $$
    f_{\mathcal{N}_1}(\mathcal{W}^*,x)=f_{\mathcal{N}_2}(\widehat{\mathcal{W}},x),
    $$
    for every $x\in\R^d$, provided $N>N_3^* \triangleq d^{24k^L}C^4(P,k^L)$.
    \item[(b)] If, in addition, $X^{(i)}$ has jointly continuous coordinates, then with probability one over $X^{(i)}$, $1\leqslant i\leqslant N$, it holds
    $$
    f_{\mathcal{N}_1}(\mathcal{W}^*,x)=f_{\mathcal{N}_2}(\widehat{\mathcal{W}},x),
    $$
    for every $x\in\R^d$, provided 
    $$
    N \geqslant \widehat{N}_3^*\triangleq  \sum_{t=0}^{k^L}\binom{d+t-1}{t}.
    $$
\end{itemize}
\end{theorem}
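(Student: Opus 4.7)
The plan is to apply Theorem \ref{thm:the-main-for-poly} twice — once treating the labels $Y_i$ as generated by the teacher $\mathcal{N}_1$, and once treating them as generated by the student $\mathcal{N}_2$ — and then conclude by transitivity through the T-OLS predictor $\widehat{h}(\cdot)$. The key observation is that the T-OLS algorithm is defined purely in terms of the training data $(X^{(i)},Y_i)$, the degree parameter $k^L$, and $|\mathrm{Support}(P)|$, and produces a single deterministic function $\widehat{h}(\cdot)$; in particular its output does not depend on which network one views as having produced the labels, as long as the labels are consistent.

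For part (a), I would proceed as follows. Run T-OLS on $(X^{(i)},Y_i)$, $1\leqslant i\leqslant N$, with inputs $k^L$ and $|S|=|\mathrm{Support}(P)|$, obtaining the predictor $\widehat{h}(\cdot)$. Since $Y_i=f_{\mathcal{N}_1}(\mathcal{W}^*,X^{(i)})$ and $\mathcal{N}_1$ is a polynomial network satisfying Assumption~\ref{sump:main}, Theorem~\ref{thm:the-main-for-poly}(a) applied to the teacher yields that, with probability at least $1-\exp(-c'N^{1/4})$ over the training sample, $\widehat{h}(x)=f_{\mathcal{N}_1}(\mathcal{W}^*,x)$ for every $x\in\R^d$, provided $N>d^{24k^L}C^4(P,k^L)$. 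Now observe that since the student interpolates the data, the \emph{same} labels can equivalently be written as $Y_i=f_{\mathcal{N}_2}(\widehat{\mathcal{W}},X^{(i)})$; because $\mathcal{N}_2$ is also a polynomial network of depth $L$ with the same activation of degree $k$, it likewise satisfies the hypotheses of Theorem~\ref{thm:the-main-for-poly}(a) (nothing in that theorem's sample-complexity bound depends on the width, which is precisely what makes this argument work). Applying the same theorem a second time to the student gives $\widehat{h}(x)=f_{\mathcal{N}_2}(\widehat{\mathcal{W}},x)$ for every $x\in\R^d$ with probability at least $1-\exp(-c'N^{1/4})$. A union bound over the two events then yields
\[
f_{\mathcal{N}_1}(\mathcal{W}^*,x)=\widehat{h}(x)=f_{\mathcal{N}_2}(\widehat{\mathcal{W}},x)\quad\text{for every }x\in\R^d,
\]
with probability at least $1-2\exp(-c'N^{1/4})$, as claimed.

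For part (b), the argument is identical in structure but uses Theorem~\ref{thm:the-main-for-poly}(b). Run T-OLS with inputs $k^L$ and $+\infty$. Since the coordinates of $X^{(i)}$ are jointly continuous and $N\geqslant \sum_{t=0}^{k^L}\binom{d+t-1}{t}$, Theorem~\ref{thm:the-main-for-poly}(b) applied to the teacher yields, with probability one, $\widehat{h}(x)=f_{\mathcal{N}_1}(\mathcal{W}^*,x)$ for all $x\in\R^d$. Applied to the student (which again qualifies as a polynomial network of the relevant depth and degree, interpolating the data), the same theorem yields $\widehat{h}(x)=f_{\mathcal{N}_2}(\widehat{\mathcal{W}},x)$ for all $x\in\R^d$ with probability one. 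Since the intersection of two probability-one events has probability one, the two networks coincide everywhere on $\R^d$.

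There is no serious obstacle — the entire content of the theorem is distilled into Theorem~\ref{thm:the-main-for-poly} together with the trivial-but-crucial fact that the T-OLS predictor is width-agnostic, so whatever network generated the labels (teacher of width $m^*$ or student of width $\widehat{m}$) gets perfectly recovered by the same $\widehat{h}(\cdot)$. The only point requiring a small check is that both $\mathcal{N}_1$ and $\mathcal{N}_2$ genuinely satisfy the hypotheses of Theorem~\ref{thm:the-main-for-poly}: this is immediate since the assumption on the teacher in that theorem places no constraint on the width, and both networks use the same polynomial activation of the same degree $k$ and have the same depth $L$, so their compositions are polynomials of degree at most $k^L$ — exactly the degree used as input to T-OLS.
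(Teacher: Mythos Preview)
Your proposal is correct and follows essentially the same approach as the paper: construct the T-OLS predictor $\widehat{h}(\cdot)$, apply Theorem~\ref{thm:the-main-for-poly} once to the teacher and once to the student (using that the same labels are interpolated by both and that the sample-complexity bound is width-agnostic), and conclude by transitivity via a union bound for part~(a) and an intersection of probability-one events for part~(b). The paper's own proof is a brief sketch of exactly this argument.
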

The thesis of Theorem \ref{thm:overly-general-poly} holds provided $\mathcal{N}_2$ interpolates, regardless of $\widehat{m}$ and how the training is done. The sample complexity $\widehat{N}_3^*$ required by part ${\rm (b)}$ turns out to be at most $d^{2k^L}$. 

The proof of Theorem \ref{thm:overly-general-poly} is given in Section \ref{sec:pf-thm:overly-general-poly}. 
\subsubsection*{Networks with Arbitrary Admissible Activations}
We now present our next generalization result addressing the networks with non-polynomial activations, where the weights of the student network $\mathcal{N}_2$ enjoy a boundedness assumption like those of the teacher network $\mathcal{N}_1$.

\begin{theorem}\label{thm:demyst-overparam-regular}
Suppose that the assumptions of Theorem \ref{thm:the-main} hold. 
Let $\mathcal{N}_2$ be a student network of width $\widehat{m}\geqslant m^*$, whose set  $\widehat{\mathcal{W}}$ of weights satisfies ${\rm (i)}$ $\widehat{a}\in\R^{\widehat{m}}$, $\widehat{W}_1\in \R^{\widehat{m}\times d}$, and $\widehat{W}_k\in \R^{\widehat{m}\times \widehat{m}}$ for $2\leqslant k\leqslant L$, ${\rm (ii)}$ $\|\widehat{a}\|_{\ell_1}\leqslant 1$, $\|\widehat{W}_{k,j}\|_{\ell_1}\leqslant 1$, for $1\leqslant k\leqslant L$, and all $j$; where $\widehat{W}_{k,j}$ is the $j^{th}$ row of matrix $\widehat{W}_k$, and ${\rm (iii)}$ $f_{\mathcal{N}_1}(\mathcal{W}^*,X^{(i)})=f_{\mathcal{N}_2}(\widehat{\mathcal{W}},X^{(i)})$, for every $i\in[N]$. 

Then, with probability at least $1-\exp(-cN^{1/4})$
with respect to the training data $(X^{(i)},Y_i)\in\R^d\times \R$, $1\leqslant i\leqslant N$ $$\mathcal{L}\left(f_{\mathcal{N}_1}(\mathcal{W}^*,{}\cdot{}),f_{\mathcal{N}_2}(\widehat{\mathcal{W}},{}\cdot{})\right)\leqslant \epsilon,$$  
provided
\begin{equation}\label{eq:sample-cpmplex-4}
 N\geqslant N_4^*\triangleq  2^{24}\epsilon^{-6}\exp\left(96\log(d)\varphi_\sigma^L(\sqrt{\epsilon/16L^2})\right) C^{18}\left(P,\varphi_\sigma^L(\sqrt{\epsilon/16L^2})\right).
\end{equation}
\end{theorem}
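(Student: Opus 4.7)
The plan is to apply Theorem~\ref{thm:the-main} twice, once to the teacher network and once to the student network, to the \emph{same} training set $(X^{(i)},Y_i)_{1\leqslant i\leqslant N}$, producing a single T\nobreakdash-OLS predictor $\widehat{h}(\cdot)$ that is simultaneously close to both $f_{\mathcal{N}_1}(\mathcal{W}^*,\cdot)$ and $f_{\mathcal{N}_2}(\widehat{\mathcal{W}},\cdot)$ in population $\ell_2$ loss, then close the argument by the triangle inequality. The crucial observation enabling this is that the T\nobreakdash-OLS output depends only on the pairs $(X^{(i)},Y_i)$ and the parameters $M$ and $|S|$; it is entirely agnostic to which network produced the labels. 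Interpolation gives $Y_i=f_{\mathcal{N}_1}(\mathcal{W}^*,X^{(i)})=f_{\mathcal{N}_2}(\widehat{\mathcal{W}},X^{(i)})$ for every $i$, so both applications of Theorem~\ref{thm:the-main} are fed the same input, yield the same $\widehat{h}$, and differ only in what the conclusion bounds.

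Concretely, I would run T\nobreakdash-OLS with degree $M=\varphi_\sigma^L\!\bigl(\sqrt{\epsilon/16L^2}\bigr)$, i.e.\ the parameter that Theorem~\ref{thm:the-main} needs in order to guarantee generalization error at most $\epsilon/4$. Applying Theorem~\ref{thm:the-main} to the teacher (whose weights satisfy the $\ell_1$ hypotheses by Assumption~\ref{sump:main}) gives, with probability at least $1-\exp(-cN^{1/4})$,
\begin{equation*}
\mathcal{L}\!\left(\widehat{h}(\cdot),f_{\mathcal{N}_1}(\mathcal{W}^*,\cdot)\right)=\mathbb{E}\!\left[\bigl(\widehat{h}(X)-f_{\mathcal{N}_1}(\mathcal{W}^*,X)\bigr)^2\right]\leqslant \epsilon/4,
\end{equation*}
as soon as $N$ exceeds the sample complexity that Theorem~\ref{thm:the-main} prescribes for target accuracy $\epsilon/4$; plugging $\epsilon/4$ into formula~\eqref{eq:main-sample-complexity} gives exactly the expression $N_4^\ast$ in~\eqref{eq:sample-cpmplex-4} (up to the constant $2^{24}$, which absorbs the extra $4^6$ coming from the $\epsilon^{-6}$ factor). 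Next, viewing the labels as having been generated by $\mathcal{N}_2$ and invoking Theorem~\ref{thm:the-main} a second time — this is precisely where hypotheses (i) and (ii) of Theorem~\ref{thm:demyst-overparam-regular} are used, since Theorem~\ref{thm:the-main} requires $\|\widehat{a}\|_{\ell_1}\leqslant 1$ and $\|\widehat{W}_{p,j}\|_{\ell_1}\leqslant 1$ — yields, with probability at least $1-\exp(-cN^{1/4})$,
\begin{equation*}
\mathbb{E}\!\left[\bigl(\widehat{h}(X)-f_{\mathcal{N}_2}(\widehat{\mathcal{W}},X)\bigr)^2\right]\leqslant \epsilon/4.
\end{equation*}
A union bound keeps both events simultaneously true with probability at least $1-\exp(-cN^{1/4})$ after a mild constant adjustment to $c$. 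Since $(a-b)^2\leqslant 2(a-c)^2+2(c-b)^2$ pointwise, taking $c=\widehat{h}(X)$ and integrating gives
\begin{equation*}
\mathcal{L}\!\left(f_{\mathcal{N}_1}(\mathcal{W}^*,\cdot),f_{\mathcal{N}_2}(\widehat{\mathcal{W}},\cdot)\right)\leqslant 2\cdot\tfrac{\epsilon}{4}+2\cdot\tfrac{\epsilon}{4}=\epsilon,
\end{equation*}
which is the desired bound.

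The main (and essentially only) conceptual step is the realization that no new technical work is required beyond Theorem~\ref{thm:the-main}: the width $\widehat{m}$ of the student never enters because Theorem~\ref{thm:the-main} is itself width-free, and the same T\nobreakdash-OLS predictor $\widehat{h}$ serves both roles. Therefore there is no genuine obstacle beyond bookkeeping; the only care needed is (a) verifying that the parameter $\sqrt{\epsilon/16L^2}$ — as opposed to $\sqrt{\epsilon/4L^2}$ used in Theorem~\ref{thm:the-main} — arises because one instantiates Theorem~\ref{thm:the-main} with target accuracy $\epsilon/4$ so that the triangle-inequality loss of a factor of two at each of the two terms is absorbed, and (b) checking that the sample-complexity expression $N_4^\ast$ in~\eqref{eq:sample-cpmplex-4} is obtained by substituting $\epsilon/4$ for $\epsilon$ in~\eqref{eq:main-sample-complexity} and simplifying the resulting constants.
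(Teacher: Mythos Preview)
Your proposal is correct and follows essentially the same approach as the paper: apply Theorem~\ref{thm:the-main} with target accuracy $\epsilon/4$ to both the teacher and the student (using the fact that the T-OLS predictor $\widehat{h}$ depends only on the data, not on which network produced the labels), then combine via the inequality $(a-b)^2\leqslant 2(a-c)^2+2(c-b)^2$ and a union bound. The paper's proof additionally spells out an explicit interpolating $\widehat{\mathcal{W}}$ to confirm such student weights exist, but this is inessential given hypothesis~(iii).
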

Namely, an estimator $f_{\mathcal{N}_2}(\widehat{\mathcal{W}},\cdot)$, which takes the form of a potentially larger network generalizes well; provided it interpolates a sufficient number of random data. Theorem \ref{thm:demyst-overparam-regular} is based on the fact that the guarantees of Theorem \ref{thm:the-main} are completely independent of the widths $m^*$ and $\widehat{m}$ of teacher and student networks. The (OLS) estimator $\widehat{h}(\cdot)$ studied under Theorem \ref{thm:the-main} is used herein as an auxiliary device to relate two networks and establish a generalization bound. It is important to note that Theorem \ref{thm:demyst-overparam-regular} is completely oblivious to how the training is done: it suffices that $\widehat{\mathcal{W}}$ interpolates the data,  and the norm of the vectors in $\widehat{\mathcal{W}}$ are bounded by one. 

The proof of Theorem \ref{thm:demyst-overparam-regular} is deferred to Section \ref{sec:pf-main-generalization}.

We now consider a setup where the weights of the student network $\mathcal{N}_2$ are not explicitly bounded. That is, the weights need not enjoy a bounded norm condition. In particular, we consider the following setting: the labels $Y_i$ are generated by a shallow teacher network $\mathcal{N}_1$ of width $m^*$, that is, $Y_i=f_{\mathcal{N}_1}(a^*,W^*,X^{(i)})=\sum_{j=1}^{m^*}a_j^*\sigma(\ip{W_j^*}{X^{(i)}}$, where the activation function $\sigma(\cdot)$ is positive homogeneous, that is, there exists a $\kappa>0$ such that for any $x\in\R$ and $c\in\R^+$, $\sigma(cx)=c^\kappa \sigma(x)$. Here, $W^*\in\R^{m^*\times d}$ is the planted weight matrix with rows $W_j^*\in\R^d$, $1\leqslant j\leqslant m^*$ (carrying the weights of $j^{th}$ neuron), and $a^*\in\R^{m^*}$ is the vector of output weights. 

Now, fix a positive integer $\widehat{m}\geqslant m^*$, and let $(\widehat{a},\widehat{W})\in\R^{\widehat{m}}\times \R^{\widehat{m}\times d}$ be the weights of a (potentially) wider, otherwise arbitrary student network $\mathcal{N}_2$ of width $\widehat{m}$ interpolating the data, that is, 
$$
Y_i=f_{\mathcal{N}_2}(\widehat{a},\widehat{W},X^{(i)})=\sum_{j=1}^{\widehat{m}}\widehat{a_j}\sigma(\langle \widehat{W_j},X^{(i)}\rangle),
$$
for all $i$. Let $\theta>0$ be arbitrary. Since $\sigma(\cdot)$ is assumed to be homogeneous, it holds that for any $X\in\R^d$, 
$$
\widehat{a_j}\sigma(\langle\widehat{W_j},X\rangle) = \widehat{a_j}\theta^{-\kappa}\sigma(\langle\theta\widehat{W_j},X\rangle).
$$
That is, one can effectively ``push" the parameter scales to the output layer, and assume without loss of generality that $\|\widehat{W}_j\|_{\ell_2}=d^{-1/2}$ (and thus, $\|\widehat{W_j}\|_{\ell_1}\leqslant 1$) for every $j\in[\widehat{m}]$. The bound on the $\ell_2-$norm is a technical requirement needed for the proof details for the results that follow. 

The question now arising is whether interpolation implies that the norm $\|\widehat{a}\|_{\ell_1}$ of $\widehat{a}$ is well-controlled. In full generality, this is not necessarily the case, which we now demonstrate. Let $(a^*,W^*)\in\R^{m^*}\times \R^{m^*\times d}$ be the planted weights. We now construct a wider network interpolating the data, whose vector of output weights has arbitrarily large norm, by introducing many cancellations. Fix a $z\in\mathbb{N}$,  a non-zero vector $v\in\R^d$; and a constant $\nu>0$. Construct a new network $(\widehat{a},\widehat{W})$ with $m^*+2z$ nodes, as follows. For $1\leqslant j\leqslant m^*$, set $\widehat{a}_j=a_j^*$, and $\widehat{W}_j=W_j^*$. For any $m^*+1\leqslant j\leqslant m^*+2z$, set $\widehat{a}_{j} = \nu$ if $j$ is even, and $-\nu$, if $j$ is odd. At the same time, set $\widehat{W}_j =v$, for $m^*+1\leqslant j\leqslant m^*+2z$. This network still interpolates the data, while $\|\widehat{a}\|_{\ell_1} = \|a^*\|_{\ell_1}+2z\nu$. Since $\nu>0$, and $z$ is arbitrary, $\|\widetilde{a}\|_{\ell_1}$ can be made arbitrarily large by making $z$ (or $\nu$) arbitrarily large. For this reason, we restrict ourselves to the case of non-negative output weights, that is, $\widehat{a}_j,a_j^*\geqslant 0$. To handle yet another technicality, we also adopt an additional assumption on the data distribution of the coordinates of $X\in\R^d$. In particular, we assume the existence of the density, bounded away from zero.

In this case, it turns out that a certain self-regularization indeed takes place for any ``interpolator" of sufficiently many data, regardless of how the training is done. 

\begin{theorem}\label{thm:interpolant-norm-poly-bd}
Suppose that $X^{(i)}$, $1\leqslant i\leqslant N$ are i.i.d.\ samples of an admissible distribution per Definition \ref{def:admissible-distribution}, where the coordinates of the data have a density bounded away from zero. Let $Y_i$ be the corresponding label generated by a shallow teacher network $\mathcal{N}_1$ of width $m^*$, that is, $Y_i = f_{\mathcal{N}_1}(a^*,W^*,X^{(i)})=\sum_{j=1}^{m^*}a_j^* \sigma(\langle W_j^*,X^{(i)}\rangle)$, where ${\rm (i)}$ for $1\leqslant j\leqslant m^*$, $\|W_j^*\|_{\ell_1}\leqslant 1$ and $a_j^*\geqslant 0$; and ${\rm (ii)}$ $\sigma(\cdot)$ is non-negative, non-decreasing on $[0,\infty)$, and is positive homogeneous with constant $\kappa$. Let $\widehat{m}\geqslant m^*$ be an integer, and $(\widehat{a},\widehat{W})\in\R_{\geqslant 0}^{\widehat{m}}\times \R^{\widehat{m}\times d}$ be the weights of a student network $\mathcal{N}_2$ interpolating the data, that is, $Y_i=f_{\mathcal{N}_2}(\widehat{a},\widehat{W},X^{(i)})$ for every $i\in[N]$. Suppose $\|\widehat{W}_j\|_{\ell_2}=1/\sqrt{d}$.  Then, with probability at least $1-\exp(-cN^{1/3})$ over $X^{(i)}$, $1\leqslant i\leqslant N$, we have
$$
\|\widehat{a}\|_{\ell_1}\leqslant d^{\kappa+1}2^{\kappa+1} \|a^*\|_{\ell_1},
$$
provided \begin{equation}\label{eq:sample-complex-N5-star}
    N>N_5^*\triangleq \exp(3d\log(d)).
\end{equation}
\end{theorem}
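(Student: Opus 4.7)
}

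The strategy is to combine the interpolation constraints with the non-negativity of $\widehat{a}_j$ and a \emph{uniform} lower bound on the empirical average $\frac{1}{N}\sum_{i=1}^{N}\sigma(\langle w,X^{(i)}\rangle)$ as $w$ ranges over the sphere $\{w\in\R^{d}: \|w\|_{\ell_{2}}=1/\sqrt{d}\}$. Concretely, suppose we can show that, with probability at least $1-\exp(-cN^{1/3})$, there exists a quantity $c_d>0$ with
\[
\inf_{\|w\|_{\ell_{2}}=1/\sqrt{d}}\;\frac{1}{N}\sum_{i=1}^{N}\sigma(\langle w,X^{(i)}\rangle)\;\ge\;c_{d}.
\]
Then interpolation yields
\[
c_{d}\|\widehat{a}\|_{\ell_{1}} \;\le\; \sum_{j=1}^{\widehat{m}}\widehat{a}_{j}\cdot \frac{1}{N}\sum_{i=1}^{N}\sigma(\langle \widehat{W}_{j},X^{(i)}\rangle) \;=\; \frac{1}{N}\sum_{i=1}^{N}Y_{i}\;\le\; \sigma(1)\|a^{*}\|_{\ell_{1}},
\]
using $\widehat{a}_{j}\ge 0$, and for the final inequality: positive $\kappa$-homogeneity together with $\|W_{j}^{*}\|_{\ell_{1}}\le 1$ and $a_{j}^{*}\ge 0$, so that each teacher summand is at most $\sigma(1)a_{j}^{*}$. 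Rearranging gives $\|\widehat{a}\|_{\ell_{1}}\le \sigma(1)\|a^{*}\|_{\ell_{1}}/c_{d}$, and the target bound $d^{\kappa+1}2^{\kappa+1}\|a^{*}\|_{\ell_{1}}$ will follow once we prove $c_{d}\gtrsim d^{-(\kappa+1)}$. Note that this "pull-out" trick works precisely because $\widehat{a}_{j}\ge 0$; it bypasses the width $\widehat{m}$ entirely.

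\paragraph{Step 1 (Pointwise expectation lower bound).} For fixed $w$ with $\|w\|_{\ell_{2}}=1/\sqrt{d}$, Cauchy–Schwarz forces the existence of a dominant coordinate $k^{*}$ with $|w_{k^{*}}|\ge 1/d$; WLOG $w_{k^{*}}>0$. Conditioning on $X_{k}$ for $k\ne k^{*}$ and setting $c=\sum_{k\ne k^{*}}w_{k}X_{k}$, the admissible-density hypothesis ($f_{X_{k^{*}}}\ge\rho$ on $[-1,1]$) gives
\[
\E\bigl[\sigma(\langle w,X\rangle)\,\big|\,(X_{k})_{k\ne k^{*}}\bigr]
\;\ge\;\rho\int_{-1}^{1}\sigma(w_{k^{*}}x+c)\,dx
\;\ge\;\frac{\rho}{w_{k^{*}}}\int_{0}^{w_{k^{*}}}y^{\kappa}\sigma(1)\,dy
\;=\;\frac{\rho\sigma(1)w_{k^{*}}^{\kappa}}{\kappa+1},
\]
whenever $c$ sits in a favourable window around $0$ (so that the change of variables produces an interval containing $[0,w_{k^{*}}]$). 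A short argument using the density lower bound on $X_{k^{*}}$ to control the "location" of $c$ yields that this favourable event has probability at least of order $1/d$, so that unconditionally $\E[\sigma(\langle w,X\rangle)]\gtrsim C(\rho,\kappa)\,d^{-(\kappa+1)}$, with constants depending only on $\rho$ and $\kappa$. This pointwise estimate is the source of the polynomial-in-$d$ rate.

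\paragraph{Step 2 (From pointwise to uniform via a net).} The map $w\mapsto \sigma(\langle w,x\rangle)$ is Lipschitz on the sphere with constant $\|x\|_{\ell_{2}}\cdot\mathrm{Lip}(\sigma)\le O(\sqrt{d})$, and $\sigma(\langle w,X^{(i)}\rangle)\in[0,\sigma(1)]$. Taking an $\eta$-net of the sphere of cardinality $(C\sqrt{d}/\eta)^{d}$, applying Hoeffding to each net point, and union-bounding gives, for every $\eta>0$,
\[
\Pr\Bigl[\sup_{w}\Bigl|\tfrac{1}{N}\sum_{i}\sigma(\langle w,X^{(i)}\rangle)-\E\sigma(\langle w,X\rangle)\Bigr|>\eta\Bigr]\;\le\;2(C\sqrt{d}/\eta)^{d}\exp(-c_{0}N\eta^{2}).
\]
Choosing $\eta\asymp N^{-1/3}$ yields a tail of the required form $\exp(-cN^{1/3})$, and the hypothesis $N\ge \exp(3d\log d)$ ensures $\eta\ll C(\rho,\kappa)d^{-(\kappa+1)}/2$, so that the uniform empirical lower bound $c_{d}/2$ with $c_{d}=C(\rho,\kappa)d^{-(\kappa+1)}$ holds with the claimed probability. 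Substituting into the display at the beginning then yields $\|\widehat{a}\|_{\ell_{1}}\le 2\sigma(1)\|a^{*}\|_{\ell_{1}}/c_{d}\le d^{\kappa+1}2^{\kappa+1}\|a^{*}\|_{\ell_{1}}$ after absorbing the explicit constants (this is where the explicit numerical constants $d^{\kappa+1}2^{\kappa+1}$ in the statement crystallise).

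\paragraph{Main obstacle.} The hardest point is the pointwise estimate in Step 1: a naive sign-matching argument ("all coordinates of $X$ aligned with $\mathrm{sign}(w)$") produces $\E[\sigma(\langle w,X\rangle)]\gtrsim (\rho/2)^{d}\cdot d^{-\kappa/2}$, which is exponentially small in $d$ and would only give a bound $\|\widehat{a}\|_{\ell_{1}}\lesssim (2/\rho)^{d}\,d^{\kappa/2}\|a^{*}\|_{\ell_{1}}$. Obtaining a polynomial-in-$d$ decay requires exploiting that only the single dominant coordinate $k^{*}$ (with $|w_{k^{*}}|\ge 1/d$) needs to drive the positive contribution, while all other coordinates can be integrated directly against the density without paying a factor of $\rho$ per coordinate. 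The remaining ingredients---the Lipschitz/net argument, Hoeffding, and the union bound---are standard, and the enormous sample size $N\ge\exp(3d\log d)$ is exactly what absorbs the exponentially large covering number of the sphere.
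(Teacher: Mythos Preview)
Your pull-out strategy and the upper bound $\frac{1}{N}\sum_i Y_i \le \sigma(1)\|a^*\|_{\ell_1}$ are correct and match the paper. The fatal gap is Step~1: the pointwise lower bound $\E{\sigma(\langle w,X\rangle)}\gtrsim d^{-(\kappa+1)}$ is \emph{false} uniformly over $\|w\|_{\ell_2}=1/\sqrt{d}$, and so is your uniform empirical lower bound.

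Here is a concrete counterexample for ReLU ($\kappa=1$). The density-lower-bound hypothesis allows $\E{X_1}=\mu>0$ (take density $\rho$ on $[-1,1]$ plus extra mass near $+1$). With $w=(-1/d,\ldots,-1/d)$ one has $\|w\|_{\ell_2}=1/\sqrt{d}$ and $\langle w,X\rangle=-\bar X$, so by Hoeffding $\Pr[\langle w,X\rangle>0]=\Pr[\bar X<0]\le e^{-d\mu^2/2}$. Since $\sigma(t)=0$ for $t\le 0$, this gives $\E{\sigma(\langle w,X\rangle)}\le\sigma(1)e^{-d\mu^2/2}$, exponentially small in $d$. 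By the law of large numbers the empirical average inherits this, so $c_d$ cannot be of order $d^{-(\kappa+1)}$. Your ``favourable window'' argument breaks here: $c=\sum_{k\ne k^*}w_kX_k$ concentrates around $-\mu$, far from $[0,w_{k^*}]\subset[0,1/d]$; and since $c$ does not depend on $X_{k^*}$, the density of $X_{k^*}$ cannot help locate $c$.

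The paper avoids averaging over all $N$ samples. It first proves a covering lemma: with probability $\ge 1-\exp(-cN^{1/3})$ the sample contains points $X_{n(1)},\ldots,X_{n(2d)}$ within $1/(4d)$ of each of the $2d$ signed basis vectors $\pm e_1,\ldots,\pm e_d$. For \emph{any} $w$ with $\|w\|_{\ell_2}=1/\sqrt{d}$, the dominant coordinate $|w_{k^*}|\ge 1/d$ ensures that the sample near $\mathrm{sign}(w_{k^*})e_{k^*}$ satisfies $\langle w,X_{n(i)}\rangle\ge 1/(2d)$, hence $\sum_{i=1}^{2d}\sigma(\langle w,X_{n(i)}\rangle)\ge\sigma(1/(2d))$. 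Summing the interpolation identity only over these $2d$ points (not all $N$) and using your upper bound on each $Y_{n(i)}$ gives
\[
\sigma(1/(2d))\,\|\widehat a\|_{\ell_1}\;\le\;\sum_{i=1}^{2d}Y_{n(i)}\;\le\;2d\,\sigma(1)\,\|a^*\|_{\ell_1},
\]
whence $\|\widehat a\|_{\ell_1}\le 2d\cdot(2d)^\kappa\|a^*\|_{\ell_1}=2^{\kappa+1}d^{\kappa+1}\|a^*\|_{\ell_1}$. The essential difference from your plan is that only \emph{one} of the $2d$ special samples needs to fire for each $w$; averaging over all $N$ samples would drown that single positive contribution by a factor $1/N$, which is exponentially small.
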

\begin{remark}
Note that since the activation $\sigma(\cdot)$ is homogeneous, the assumption $\|\widehat{W}_j\|_{\ell_2}=1/\sqrt{d}$ is without any loss of generality.
\end{remark}
For the ReLU activation, we thus obtain $\|\widehat{a}\|_{\ell_1}\leqslant 4d^{2} \|a^*\|_{\ell_1}$ for any such interpolator, with high probability. 

The proof of Theorem \ref{thm:interpolant-norm-poly-bd} is provided in Section \ref{sec:pf-thm-interpolant-norm}. 

Having controlled $\|\widehat{a}\|_{\ell_1}$ (and also the norm of $\widehat{W}_j$ due to homogeneity) for any interpolator, we are now in a position to state our final main result, addressing the generalization under the aforementioned setting:
\begin{theorem}\label{thm:overparam:ReLU-case}
Suppose that the assumptions of Theorem \ref{thm:interpolant-norm-poly-bd} hold, and let $(\widehat{a},\widehat{W})\in\R_{\geqslant 0}^{\widehat{m}}\times \R^{\widehat{m}\times d}$ be the weights of  a student network $\mathcal{N}_2$ of width $\widehat{m}$ interpolating the data. Then, with probability at least $1-2\exp(-cN^{1/4})$ over data $X^{(i)}$, $1\leqslant i\leqslant N$, it holds: $$\mathcal{L}\left(f_{\mathcal{N}_1}(a^*,W^*,{}\cdot{}),f_{\mathcal{N}_2}(\widehat{a},\widehat{W},{}\cdot{})\right)\leqslant \epsilon,$$  provided 
\begin{equation}\label{eq:sample-complex-N6}
N>N_6^*\triangleq 2^{12\kappa+24}\epsilon^{-6}d^{12\kappa+12}\exp\left(96\log(d)\varphi_\sigma\left(\sqrt{\epsilon 2^{-2\kappa-7}d^{-2\kappa-2}}\right)\right) C^{18}\left(P,\varphi_\sigma\left(\sqrt{\epsilon 2^{-2\kappa-7}d^{-2\kappa-2}}\right)\right).
\end{equation}

\end{theorem}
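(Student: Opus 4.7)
The strategy is to reduce to Theorem \ref{thm:demyst-overparam-regular} by rescaling both the teacher and the student network so that the resulting output-layer vectors satisfy the unit $\ell_1$-norm hypothesis required there. The bridge between the two theorems is provided by Theorem \ref{thm:interpolant-norm-poly-bd}, which is already tailored to control $\|\widehat{a}\|_{\ell_1}$ for \emph{any} interpolating student network in this homogeneous setting.

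First I would invoke Theorem \ref{thm:interpolant-norm-poly-bd} to obtain, on an event $\mathcal{E}_1$ of probability at least $1 - \exp(-cN^{1/3})$ (which holds whenever $N > \exp(3d\log d)$, a bound comfortably dominated by $N_6^*$), the estimate
$$\|\widehat{a}\|_{\ell_1} \;\leq\; B \;:=\; 2^{\kappa+1}d^{\kappa+1}\|a^*\|_{\ell_1} \;\leq\; 2^{\kappa+1}d^{\kappa+1},$$
using that $\|a^*\|_{\ell_1}\leqslant 1$ is inherited from the standing assumptions. Simultaneously, the normalization $\|\widehat{W}_j\|_{\ell_2} = 1/\sqrt{d}$ combined with Cauchy--Schwarz yields $\|\widehat{W}_j\|_{\ell_1}\leqslant \sqrt{d}\cdot\|\widehat{W}_j\|_{\ell_2}=1$, so on $\mathcal{E}_1$ the rescaled weights $(\widehat{a}/B,\widehat{W})$ are entry-wise within the unit-$\ell_1$-ball hypotheses of Theorem \ref{thm:demyst-overparam-regular}. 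The teacher-side weights $(a^*/B, W^*)$ trivially satisfy these bounds as well.

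Next I would introduce the rescaled networks $g_1 := f_{\mathcal{N}_1}/B$ and $g_2 := f_{\mathcal{N}_2}/B$. Since the interpolation property is preserved under the common scaling, $g_1(X^{(i)}) = g_2(X^{(i)})$ for every $i \in [N]$, and both networks have depth $L=1$. All hypotheses of Theorem \ref{thm:demyst-overparam-regular} are then in place, and I would apply it with target accuracy $\epsilon' := \epsilon/B^2$, obtaining an event $\mathcal{E}_2$ of probability at least $1 - \exp(-c'N^{1/4})$ on which
$$\mathcal{L}(g_1,g_2) \;\leq\; \epsilon',$$
provided $N$ exceeds the sample-complexity bound of Theorem \ref{thm:demyst-overparam-regular} evaluated at $\epsilon'$ and $L=1$. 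Substituting $\epsilon' = \epsilon\cdot 2^{-2\kappa-2}d^{-2\kappa-2}$ into that bound turns the prefactor $2^{24}(\epsilon')^{-6}$ into $2^{12\kappa+36}\epsilon^{-6}d^{12\kappa+12}$ and the argument of $\varphi_\sigma$ into $\sqrt{\epsilon\cdot 2^{-2\kappa-c_0}d^{-2\kappa-2}}$ for a small numerical constant $c_0$; absorbing these constants into the universal ones then recovers the stated $N_6^*$.

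Finally, homogeneity of the $L^2$ loss gives $\mathcal{L}(f_{\mathcal{N}_1},f_{\mathcal{N}_2}) = B^2\,\mathcal{L}(g_1,g_2) \leqslant B^2\epsilon' = \epsilon$ on $\mathcal{E}_1\cap\mathcal{E}_2$, and a union bound produces probability at least $1-\exp(-cN^{1/3})-\exp(-c'N^{1/4})\geqslant 1-2\exp(-\tilde{c}N^{1/4})$ for the stated range of $N$. The substantive technical content is imported wholesale from Theorems \ref{thm:interpolant-norm-poly-bd} and \ref{thm:demyst-overparam-regular}, so the only real obstacle here is careful bookkeeping: one must verify that the rescaled student network genuinely inherits the unit-$\ell_1$-norm hypothesis of Theorem \ref{thm:demyst-overparam-regular} (which is precisely why the $\|\widehat{W}_j\|_{\ell_2}=1/\sqrt{d}$ normalization is imposed as the WLOG choice), and one must track how the worst-case scale factor $B$ from Theorem \ref{thm:interpolant-norm-poly-bd} propagates through the $(\epsilon')^{-6}$ and $\varphi_\sigma(\cdot)$ terms to produce exactly the $d^{12\kappa+12}$ factor and the rescaled argument inside $\varphi_\sigma$ that appear in $N_6^*$.
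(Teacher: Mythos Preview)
Your approach is correct and follows essentially the same path as the paper. Both proofs first invoke Theorem~\ref{thm:interpolant-norm-poly-bd} to bound $\|\widehat{a}\|_{\ell_1}\leqslant B_d=2^{\kappa+1}d^{\kappa+1}$ with high probability, then feed this into the machinery underlying Theorems~\ref{thm:the-main}/\ref{thm:demyst-overparam-regular}, and finish with the triangle-inequality argument. The only stylistic difference is that the paper re-opens the proof of Theorem~\ref{thm:the-main} and directly enlarges the approximation degree to $M=\varphi_\sigma(\sqrt{\epsilon/32B_d^2})$ so that the per-node error $\delta$ satisfies $\delta\|\widehat{a}\|_{\ell_1}\leqslant\sqrt{\epsilon}/2$, whereas you rescale both networks by $1/B$ and invoke Theorem~\ref{thm:demyst-overparam-regular} as a black box with $\epsilon'=\epsilon/B^2$. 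Your reduction is arguably cleaner; the paper's direct adjustment yields slightly different numerical constants (prefactor $2^{12\kappa+24}$ and $\varphi_\sigma$-argument $\sqrt{\epsilon\,2^{-2\kappa-7}d^{-2\kappa-2}}$ versus your $2^{12\kappa+36}$ and $\sqrt{\epsilon\,2^{-2\kappa-6}d^{-2\kappa-2}}$), but these are immaterial and, as you note, absorbable into the unspecified universal constants.
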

In spirit, Theorem \ref{thm:overparam:ReLU-case} is similar to Theorem \ref{thm:demyst-overparam-regular}: an estimator of the form of a ``student" network of width $\widehat{m}$ generalizes to a ``teacher" network of width $m^*$ (where $\widehat{m}\geqslant m^*$) to within an error of at most $\epsilon$, provided it interpolates a sufficient number of data, irrespective of how the interpolation is done. The weights, however, are not explicitly bounded this time: Theorem \ref{thm:interpolant-norm-poly-bd} yields that any interpolating weights are necessarily self-regularized. 

For the case of ReLU networks (for which $\varphi_\sigma(\epsilon)=O(1/\epsilon)$ and $\kappa=1$) Theorem \ref{thm:overparam:ReLU-case} yields a sample complexity bound $N_6^*$ of the order $\exp(\Theta(d^2\log d/\sqrt{\epsilon}))$. 
The proof of Theorem \ref{thm:overparam:ReLU-case} is provided in Section \ref{sec:proof-of-overparam:ReLU-case}.

\section{Testing Polynomial Regression Method on Real Datasets} \label{appendix:realdata}
In this section, we present our main computational findings. 
Our polynomial regression method is implemented on the MNIST \cite{lecun1998mnist} and Fashion-MNIST \cite{xiao2017fashion} datasets, both of which are predominantly approached using neural networks and serve as widely used testbeds for deep learning algorithms. Both datasets contains $N=60,000$ training images, each consisting of $28\times 28$ gray-scale pixels, that is, $d=784$. We study both datasets as a classification task with 10 classes. 
\paragraph{Main Prediction Model} Our prediction model is constructed by stacking 10 different polynomial regression models, one per class, according to (\ref{eq:poly-regression}). In particular, the choice of coefficients of $q$ for each class is made independently. The regression model
 then outputs, for any test data, a $\widehat{Y}\in \R^{10}$ where $\widehat{Y}_i$ is the predicted value for the class corresponding to index $i$.  Finally, we predict the class based on the rule $\argmax_{1\leqslant i\leqslant 10} \widehat{Y}_i$. Our focus is on the case of polynomial regression with degree $M=2$.

\paragraph{Two modifications}
Even the case of $M=2$ is already computationally prohibitive to the resources of a personal computer: computing the standard OLS estimator requires a time that is quadratic in the number of features; and the number $d+\binom{d}{2}$ of features becomes $307,720$ if we consider all terms with degree at most $2$. To circumvent this complication and successfully implement our algorithm, we apply the following two modifications.
\begin{itemize}
    \item[(a)] We resort first to a ``convolutional" version of the regression where the only products of  variables $X_iX_j$ that are used correspond to nearby pixels. More specifically, we only take pairs of variables that can be covered simultaneously by a filter of size $3\times 3$. This step drops the number of features to $18,740$.
    
    \item[(b)] We resort to a batching and averaging heuristic. Fix an $n$ and $B$. At each step $i\in[n]$, we sample $B$ samples from the collection of $N$ samples, uniformly at random; and run the ``convolutional" version of our prediction model for these samples. This yields $n$ different sets of coefficients. These coefficients are then averaged over $n$ to form our final predictor. 
\end{itemize}

\protect\begin{figure}
\protect\begin{subfigure}[h]{0.5\textwidth}
\protect\includegraphics[width=\textwidth]{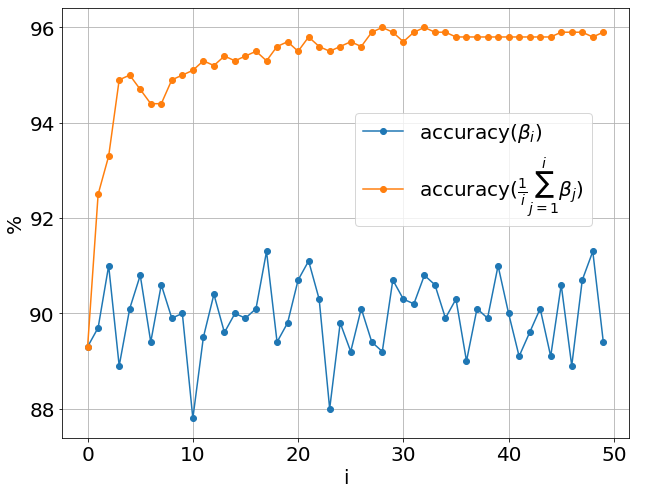}
\protect\caption{MNIST digits}
\protect\end{subfigure}
\hfill
\protect\begin{subfigure}[h]{0.5\textwidth}
\protect\includegraphics[width=\textwidth]{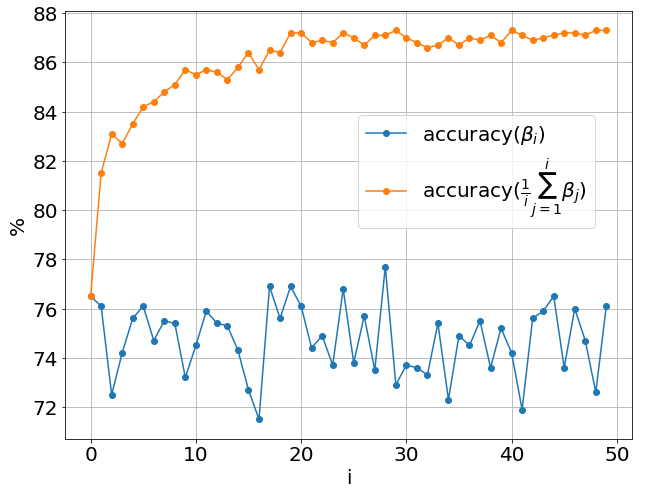}
\protect\caption{Fashion-MNIST}
\protect\end{subfigure}%
\protect\caption{Out-of-sample accuracy of the multi-dimensional polynomial regression method, with $B = 1000$ and $n = 50$.}
\protect\label{fig:result}
\protect\end{figure}

\paragraph{Results} The results of the aforementioned heuristic batch training process for MNIST and Fashion-MNIST datasets are reported in Figure \ref{fig:result}. We have chosen $n=50$ and  $B=1000$. The rationale behind this choice is to ensure that the number $nB$ of samples we have used throughout is comparable with the size $N$ of the whole dataset. The accuracy we plot is evaluated out-of-sample. The bottom curve corresponds to the prediction results on each individual batch $i$ (one 10-dimensional regression) and the top curve reports the results when aggregated over batches as $i$ increases from $1$ to $n$ ($i$ $10$-dimensional regression coefficients averaged together). This procedure takes 26 minutes, and the out-of-sample accuracies are reported in Table \ref{tab:results}.

\paragraph{Comparison} The results we obtained are certainly below the state-of-the-art reported in papers \cite{stateofartmnist,stateofartfashion}. 

However, in order to ensure a fair comparison on the same platform, we have implemented a Convolutional Neural Network (CNN) model with 2 layers, each composed of a convolution, a ReLU activation; and a batch normalization followed by a fully connected layer of size $1000$. This is a standard architecture for document recognition, similar to LeNet-5 \cite{cnnarchitecture}.
This architecture was trained for a similar amount of time (27.5 minutes), using Stochastic Gradient Descent for $10$ epochs with a batch of size $100$. Our results are also reported in Table \ref{tab:results}. 

Despite being below the state-of-the-art, our method is: 1) comparable with sophisticated deep network architectures, 2) conceptually simple, and 3) is fully interpretable. We also expect the performance to improve further by increasing $M$.

\begin{table}[h!]
  \begin{center}
    \begin{tabular}{c|c|c|c}
      \toprule
      \textbf{Dataset} & \textbf{Polynomial regression} & \textbf{Convolutional Neural Net} & \textbf{State-of-the-art}\\
      \midrule
      \textbf{MNIST} & 95.94\% & 99.03\% & 99.79\%\\
      \textbf{Fashion-MNIST} & 87.11\% & 90.80\% & 96.35\%\\
      \bottomrule
    \end{tabular}
    \caption{Out-of-sample accuracy comparison between methods.}
    \label{tab:results}
  \end{center}
\end{table}
\subsection*{Noise robustness}
 We now explore the noise robustness of the polynomial regression algorithm. For this purpose, we modify the image to be classified in two different ways: 1) change {\bf every} pixel by some amount by adding random noise; and 2) change the pixels {\bf only in a smaller region}, where each pixel is allowed to change by a ``large" amount. For convenience, we refer to the former modification as the {\em global noise} and the latter as the {\em local noise}. These modifications, in fact, are in line with existing literature on adversarial examples, see \cite{shamir2019simple} and references therein. 

In a nutshell, our results show essentially the following: while the CNN architecture still achieves a higher prediction accuracy when the noise level is small, polynomial regression method starts outperforming the CNN architectures once the noise level is increased beyond a certain level.
We now report the details of our computational findings in a greater detail.
\begin{itemize}
    \item[(a)]  We first consider the {\em global noise} setting, where the details are as follows. We perturb the image to be classified by adding, to each pixel, a random noise drawn from Gaussian distribution with mean zero and standard deviation $\sigma$, namely from $\mathcal{N}(0,\sigma^2)$. The noise added is i.i.d.\ across each pixel. We report our findings by providing a plot of the accuracy against the standard deviation $\sigma$ of the noise.
    
    Now, let $N_{i,j}$, $1\leqslant i,j\leqslant 28$, be i.i.d.\ samples of $\mathcal{N}(0,\sigma^2)$. For every pair $(i,j)$, we first add the noise to $X_{i,j}$:
    $$
      \widetilde{X}_{i,j} = X_{i,j} + N_{i,j}.
    $$
    We then truncate $\widetilde{X}_{i,j}$ so as to ensure the resulting image is still gray-scale, that is, the resulting pixel values are between $0$ and $1$. For this we consider $\max\{0,\min\{1,\widetilde{X}_{i,j}\}\}$.
    \item[(b)] We next consider the {\em local noise} setting. To the image being classified, we add a rectangular patch, consisting of black pixels; whose center is chosen randomly. More specifically, for a given patch area $A$, we first choose a random pixel location $(i,j)$ uniformly around the center of the image ($6 \leqslant i,j \leqslant 22$),  and  an aspect ratio $D$, which is distributed uniformly on $(1/2,2)$. We then create a patch centered at $(i,j)$, of width $w = \lfloor D \sqrt{A} \rfloor$ and height $h = \lfloor \sqrt{A} / D \rfloor$,  where $\lfloor r\rfloor$ denotes the largest integer not exceeding $r\in\mathbb{R}$. For each pixel covered by the patch, we set its value to $0$. That is, we make that pixel black. 
    
    The rationale for restricting the center of patch to $6\leqslant i,j\leqslant 22$ is to ensure that on average a patch does not cover the dark background region of the images, but covers the region with more informative content, e.g. the pixels making up the digit to be recognized. 
\end{itemize}
 
\protect\begin{figure}[!h]
\protect\begin{subfigure}[h]{0.5\textwidth}
\protect\includegraphics[width=\textwidth]{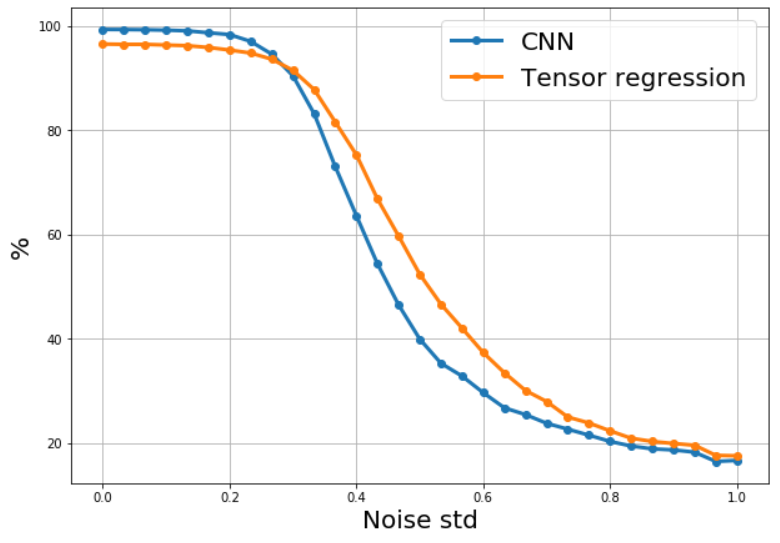}
\protect\caption{MNIST}
\protect\end{subfigure}
\hfill
\protect\begin{subfigure}[h]{0.5\textwidth}
\protect\includegraphics[width=\textwidth]{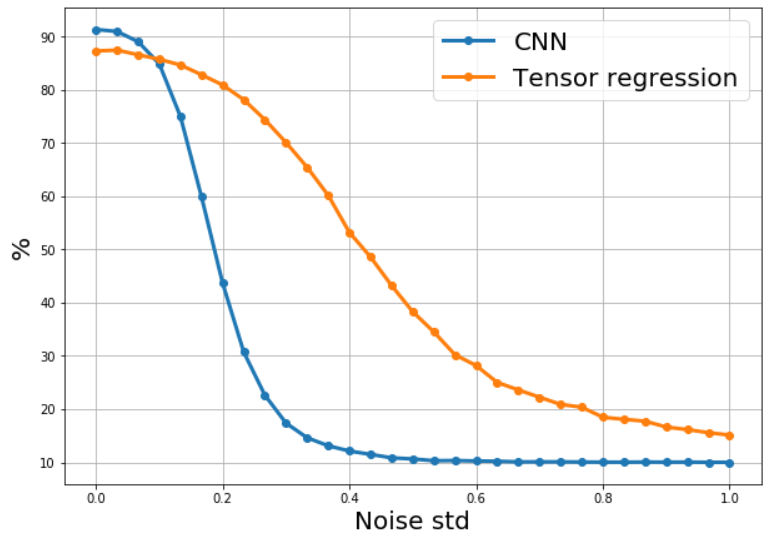}
\protect\caption{Fashion-MNIST}
\protect\end{subfigure}%
\protect\caption{Robustness of models against normal noise}
\protect\label{fig:rob1}
\protect\end{figure}


\protect\begin{figure}[!h]
\protect\begin{subfigure}[h]{0.5\textwidth}
\protect\includegraphics[width=\textwidth]{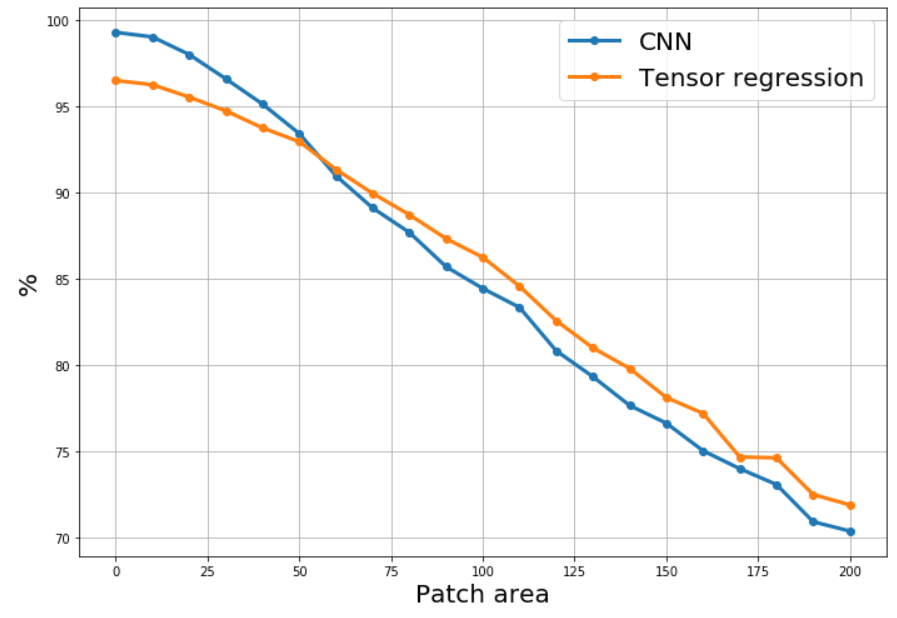}
\protect\caption{MNIST}
\protect\end{subfigure}
\hfill
\protect\begin{subfigure}[h]{0.5\textwidth}
\protect\includegraphics[width=\textwidth]{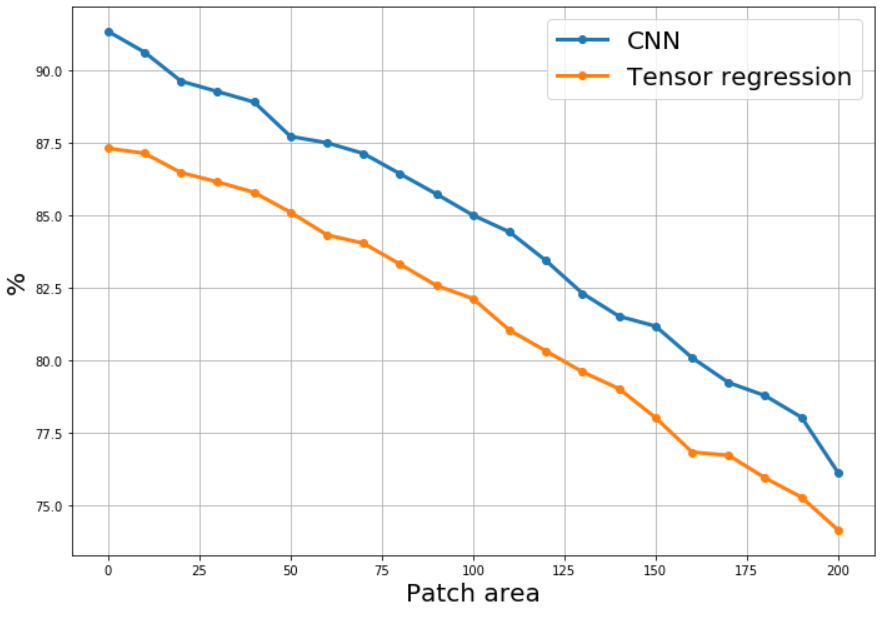}
\protect\caption{Fashion-MNIST}
\protect\end{subfigure}%
\protect\caption{Robustness of models against black patching}
\protect\label{fig:rob2}
\protect\end{figure}

Our results are reported in Figures \ref{fig:rob1} and \ref{fig:rob2}. The details are as follows.
  \begin{itemize}
     \item[(a)] Our computational findings pertaining the MNIST dataset under the {\em global noise} setting are reported in Figure \ref{fig:rob1}${\rm (a)}$. It is evident from the plot that while the CNN architectures outperform the polynomial regression (in terms of prediction accuracy) for a noise standard deviation up to $0.3$ , the polynomial regression starts outperforming the CNN architectures, if the noise standard deviation is pushed beyond 0.3. For Fashion-MNIST, reported in Figure \ref{fig:rob1}${\rm (b)}$, the difference between the robustness of two models is even more dramatic. 
     
     We believe that the aforementioned findings are a consequence of the simplicity of the polynomial regression algorithm, in contrast to CNN architectures. 
     \item[(b)] We now investigate the case of {\em local noise} where a black patch with a random  center and aspect ratio as noted above is inserted within the image. We start with the MNIST dataset, reported in Figure \ref{fig:rob2}${\rm (a)}$. While the CNN architectures outperform the polynomial regression algorithm for the patch area up to 55 pixels (which corresponds roughly to the $7\%$ of the image), the polynomial regression starts outperforming if the patch area is increased beyond 55 pixels. For the Fashion-MNIST dataset, however, the situation is different, as reported in Figure  \ref{fig:rob2}${\rm (b)}$: while the accuracy drop for the both models appear to follow the same trend as the patch area is increased, polynomial regression  clearly performs uniformly worse than CNN architectures. That is, for all values of the patch area we have investigated, the prediction accuracy of polynomial regression is  worse than that of CNN architectures. 
 \end{itemize} 



\section{General Bounds on the Condition Number}\label{sec:bound-cond-no}
As we have mentioned earlier, the correctness of T-OLS algorithm (Algorithm \ref{algo:tensor-non-poly}) heavily relies on properly bounding the condition number of a certain expected covariance matrix of the tensorized data. In  this section, we present our bounds to this end. We believe that our results are of potential independent interest in the study of random tensors \cite{vershynin2019concentration}.


\paragraph{Multiplicities set} We first define {\em multiplicities set}. Recall Equation (\ref{fst}). In this section we focus on input $X$, drawn from some product distribution $P^{\otimes d}$. That is,  $P$ is the distribution of the coordinates of $X$. For this reason, we use throughout this section the notation per \eqref{eq:omega-Pt-P}
$$\omega(P,T):=\min\{|\mathrm{Support}(P)|-1,T\}\quad\text{where}\quad T\in\mathbb{N}.$$

Let $M\in\mathbb{N}$ and $P$ an input distribution. The set $\mathcal{C}$ is called as the $(d,M)-$multiplicities set, if
\begin{equation}\label{def:multiplicities}
\mathcal{C} = \bigcup_{0\leqslant t\leqslant M'} \mathcal{F}_{\omega(P,M),t},
\end{equation}
where $\mathcal{F}_{\omega(P,M),t} :=  \{(0,0,\dots,0)\}$ for $t=0$ and $M':=\min\{M,d\left(|\mathrm{Support}(P)|-1\right)\}.$ 

The definition and use of $M'$ is based on Remark \ref{rem:fst-empty} and ensures based on an elementary calculation that all components of the union in the definition of $\mathcal{C}$ are non-empty, that is for all $0\leqslant t\leqslant M'$ it holds
\begin{equation}\label{eq:fst-nonempty}|\mathcal{F}_{\omega(P,M),t}| \geq 1.\end{equation} The proof of this fact follows by combining Remark \ref{rem:fst-empty} with the observation that $M'\leqslant d \omega \left(P,M\right).$

\paragraph{Further Notation} In this section, we will utilize multiplicities set introduced in \eqref{def:multiplicities} with $k$ being entered in place of $M$ and by $P$ the distribution on the (iid) input's coordinates. By $\mathcal{C}$ we refer to the $(d,k)$-multiplicities set, \eqref{def:multiplicities}. 
For $n=0,1,2,\ldots,k,$ we denote $c_n:=\E{X^n}$ (where $X$ is drawn from $P$) the $n$-th moment of $P$ and \begin{align}\label{detn}D_n:=\det\left( (c_{i+j})_{i,j=0,1,2,\ldots,n}\right).\end{align}   

\subsection{Main Result}
\begin{theorem}\label{cond:main_2}
Let $d,k \in \mathbb{N}$. Suppose input $X \in [-1,1]^d$ follows an admissible distribution $\mathcal{D}=P^{\otimes d}$, and $\mathcal{X} = ((X_{\alpha})_{\alpha \in \mathcal{C}})^T\in\R^{|\mathcal{C}|}$ the associated column vector of monomials indexed by $\mathcal{C}$. Then for $$\Sigma = \E{\mathcal{X} \mathcal{X}^T}$$ there exists $f\left(P,k\right),c\left(P,k\right)>0$ independent of $d$ for which it holds
\begin{align}\label{minEgein2}
\lambda_{\min}(\Sigma) \geqslant c\left(P,k\right) \left(\sum_{i=0}^k \binom{d+i-1}{i}\right)^{-1}.
\end{align}
 and \begin{align}\label{maxEgein2}
\lambda_{\max}(\Sigma) \leqslant f\left(P,k\right) \sum_{i=0}^k \binom{d+i-1}{i}.
\end{align}

\end{theorem}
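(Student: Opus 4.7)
The approach is to diagonalize $\Sigma$ via a tensorized family of orthonormal polynomials, and then reduce both spectral bounds to dimension-free univariate quantities. Since $P$ is admissible and supported in $[-1,1]$, I would apply Gram--Schmidt to $1,x,\ldots,x^{\omega(P,k)}$ in $L^2(P)$ to obtain orthonormal polynomials $q_0=1,q_1,\ldots,q_{\omega(P,k)}$; the cutoff at $\omega(P,k)\leqslant|\mathrm{Support}(P)|-1$ is precisely what ensures the Hankel determinants $D_0,\ldots,D_{\omega(P,k)}$ are strictly positive, so the construction does not degenerate. Setting $Q_\alpha(X):=\prod_{i=1}^d q_{\alpha_i}(X_i)$, independence of the coordinates gives $\mathbb{E}[Q_\alpha Q_\beta]=\delta_{\alpha,\beta}$. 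Let $a$ be the $(\omega(P,k){+}1)\times(\omega(P,k){+}1)$ lower-triangular matrix defined by $x^m=\sum_{n\leqslant m}a_{m,n}q_n(x)$, and define $A_{\alpha,\beta}:=\prod_{i=1}^d a_{\alpha_i,\beta_i}$ for $\alpha,\beta\in\mathcal{C}$. Expanding coordinatewise yields $\mathcal{X}=A\mathcal{Q}$ with $\mathcal{Q}=(Q_\alpha(X))_{\alpha\in\mathcal{C}}$, and therefore $\Sigma=AA^T$.

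The upper bound is then immediate: $\lambda_{\max}(\Sigma)\leqslant\mathrm{tr}(\Sigma)=\sum_{\alpha\in\mathcal{C}}\mathbb{E}[X^{2\alpha}]\leqslant|\mathcal{C}|\leqslant\sum_{i=0}^k\binom{d+i-1}{i}$, since every $X_i\in[-1,1]$ forces $X^{2\alpha}\leqslant 1$ pointwise. Thus $f(P,k)=1$ suffices for \eqref{maxEgein2}.

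For the lower bound, I would use $\lambda_{\min}(\Sigma)=\sigma_{\min}(A)^2\geqslant 1/\|A^{-1}\|_F^2$ and exploit the fact that $\mathcal{C}$ is a \emph{downset} under componentwise inequality: if $\alpha\in\mathcal{C}$ and $\beta\leqslant\alpha$, then $\beta\in\mathcal{C}$ as well. Combined with the lower-triangularity of $a$, a direct telescoping argument (in which every intermediate multi-index stays inside $\mathcal{C}$ precisely by the downset property) shows that
$$(A^{-1})_{\alpha,\beta}=\prod_{i=1}^d(a^{-1})_{\alpha_i,\beta_i}\quad\text{for }\beta\leqslant\alpha,$$
and zero otherwise. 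Writing $F(m):=\sum_{n=0}^m(a^{-1})_{m,n}^2$, the Frobenius norm factorizes as $\|A^{-1}\|_F^2=\sum_{\alpha\in\mathcal{C}}\prod_{i=1}^d F(\alpha_i)$. Since $q_0=1$ forces $(a^{-1})_{0,0}=1$ and hence $F(0)=1$, only coordinates with $\alpha_i\geqslant 1$ contribute non-trivially; and by membership in $\mathcal{C}$ the number of such coordinates is at most $\sum_i\alpha_i\leqslant M'\leqslant k$. Setting $G(P,k):=\max_{1\leqslant m\leqslant\omega(P,k)}F(m)$, a quantity depending only on the first $2k$ moments of $P$, one concludes $\prod_i F(\alpha_i)\leqslant G(P,k)^k$ uniformly over $\alpha\in\mathcal{C}$, hence $\|A^{-1}\|_F^2\leqslant G(P,k)^k|\mathcal{C}|$, giving \eqref{minEgein2} with $c(P,k)=G(P,k)^{-k}$.

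The main obstacle is exactly this control of $\|A^{-1}\|_F^2$: any naive tensor bound would produce $\|A^{-1}\|_F^2\leqslant\|a^{-1}\|_F^{2d}$, which is useless since it is exponential in $d$. The combination of the downset structure of $\mathcal{C}$ with the total-degree cap $M'\leqslant k$ is what converts this would-be exponential into a dimension-free factor $G(P,k)^k$; the cap $k$ on the number of nonzero entries is consumed by at most $k$ applications of $G$, and nothing else. Translating $G(P,k)$ into the $\exp(O(k\log k))$ scaling advertised after the statement is then a standard univariate calculation using the explicit Hankel-determinant formulas \eqref{detn} for the coefficients of $q_m$, but the dimension-free constants $c(P,k),f(P,k)$ in the spectral bound already follow from the argument above.
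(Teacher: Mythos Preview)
Your proof is correct and genuinely more elementary than the paper's. Both arguments start from the same factorization $\Sigma=AA^T$ through tensorized orthogonal polynomials (the paper writes this as $VDV^T$ with $A=VD^{1/2}$), but diverge in how the singular values of $A$ are controlled. For the upper bound the paper bounds $\sigma_{\max}^2(V)\leqslant|\mathcal{C}|^2\|V\|_\infty^2$ via Schur's row/column--sum inequality and then controls $\|V\|_\infty$ by Cauchy--Schwarz, whereas your trace bound $\lambda_{\max}(\Sigma)\leqslant\mathrm{tr}(\Sigma)\leqslant|\mathcal{C}|$ is immediate from $|X_\alpha|\leqslant 1$ and yields the cleaner constant $f(P,k)=1$. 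For the lower bound the paper does not invert $A$ at all: it invokes an external singular-value inequality \cite{hlavackova2010new} relating $\sigma_{\min}(V)$, $\sigma_{\max}(V)$, $\|V\|_F$ and $\det V=1$, together with the observation that the columns of $V$ indexed by $|\beta|=k$ are standard basis vectors, to obtain $\sigma_{\min}^2(V)\geqslant|\mathcal{F}_{\omega(P,k),k}|/|\mathcal{C}|\geqslant 1/|\mathcal{C}|$. Your route---exploiting the downset structure of $\mathcal{C}$ so that $(A^{-1})_{\alpha,\beta}=\prod_i(a^{-1})_{\alpha_i,\beta_i}$ exactly, and then bounding $\|A^{-1}\|_F^2=\sum_{\alpha\in\mathcal{C}}\prod_i F(\alpha_i)\leqslant G(P,k)^k|\mathcal{C}|$ via the total-degree cap---is self-contained and avoids the external lemma entirely. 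The inequality $G(P,k)\geqslant F(1)=(1+\mu_1^2)/\mathrm{Var}(X)\geqslant 1$ (which you implicitly use when raising $G$ to the $k$th power rather than to the number of nonzero coordinates) follows from $\mathrm{Var}(X)\leqslant 1$ on $[-1,1]$, so the bound is watertight. What the paper's argument buys is an explicit formula for $c(P,k)$ in terms of the Hankel-determinant ratios $D_i/D_{i-1}$; your $G(P,k)^{-k}$ is equally dimension-free but packaged differently.
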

The following corollary can be immediately deduced,
\begin{Corollary}\label{coro:number}
Let $d,k \in \mathbb{N}$ with $d \geqslant 4$. Suppose $X_i, i=1,2,\ldots,d$ are iid random variables coming from a distribution $P$, $\mathcal{X} = ((X_{\alpha})_{\alpha \in \mathcal{C}})^T\in\R^{|\mathcal{C}|}$ the associated column vector of monomials. Then for $$\Sigma = \E{\mathcal{X} \mathcal{X}^t}$$
for some constant $C\left(P,k\right)>0$ independent of $d$ for which,
\begin{align}\label{maxCond}
\kappa(\Sigma) = \frac{\lambda_{\max}(\Sigma)}{\lambda_{\min}(\Sigma)} \leqslant C\left(P,k\right) d^{3k}.
\end{align}
where we can take $C\left(P,k\right)= \frac{f\left(P,k\right)}{c\left(P,k\right)}$ for $f\left(P,k\right),c\left(P,k\right)$ defined in Theorem \ref{cond:main_2}.

\end{Corollary}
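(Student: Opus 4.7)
The proof is a direct consequence of Theorem \ref{cond:main_2}, so the plan is short. First I would combine the two bounds of Theorem \ref{cond:main_2}: since $\lambda_{\min}(\Sigma)\geqslant c(P,k)\bigl(\sum_{i=0}^k \binom{d+i-1}{i}\bigr)^{-1}$ and $\lambda_{\max}(\Sigma)\leqslant f(P,k)\sum_{i=0}^k \binom{d+i-1}{i}$, forming the ratio gives
\begin{equation*}
\kappa(\Sigma) \;\leqslant\; \frac{f(P,k)}{c(P,k)}\Bigl(\sum_{i=0}^{k}\binom{d+i-1}{i}\Bigr)^{2}.
\end{equation*}
The remaining work is purely combinatorial: I need to show that the squared sum on the right is at most a $k$-dependent constant times $d^{3k}$ whenever $d\geqslant 4$.

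For this, I would apply the hockey stick identity $\sum_{i=0}^{k}\binom{d+i-1}{i}=\binom{d+k}{k}$, and then use the elementary bound $\binom{d+k}{k}\leqslant (d+k)^{k}/k!$. I would split into two cases based on whether $k\leqslant d$ or $k>d$. When $k\leqslant d$, we have $(d+k)^{k}\leqslant (2d)^{k}=4^{k}d^{k}$, and since $d\geqslant 4$ implies $4^{k}\leqslant d^{k}$, this gives $\binom{d+k}{k}^{2}\leqslant d^{3k}/(k!)^{2}$. When $k>d\geqslant 4$, we instead bound $(d+k)^{2k}\leqslant (2k)^{2k}$, which is a constant depending only on $k$, and the ratio $(2k)^{2k}/d^{3k}$ is controlled by $(2k)^{2k}/4^{3k}$; this yields $\binom{d+k}{k}^{2}\leqslant C'(k)\,d^{3k}$ for an explicit $C'(k)$.

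Combining the two cases, one obtains $\bigl(\sum_{i=0}^{k}\binom{d+i-1}{i}\bigr)^{2}\leqslant C'(k)\,d^{3k}$ with $C'(k)=\max\bigl(1/(k!)^{2},\,(2k)^{2k}/(64^{k}(k!)^{2})\bigr)$. Plugging back into the first display gives the claim with $C(P,k)=f(P,k)C'(k)/c(P,k)$; absorbing the purely combinatorial factor $C'(k)$ into the constants $f(P,k), c(P,k)$ recovers the stated bound $\kappa(\Sigma)\leqslant C(P,k)d^{3k}$ with $C(P,k)=f(P,k)/c(P,k)$. There is no substantive obstacle here — the only place one must be careful is the case analysis $k\leqslant d$ versus $k>d$, which is exactly where the hypothesis $d\geqslant 4$ enters (via $4^{k}\leqslant d^{k}$); otherwise the argument is entirely mechanical once Theorem \ref{cond:main_2} is in hand.
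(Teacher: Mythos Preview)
Your argument is correct, but the paper's route is shorter and avoids both the hockey-stick identity and the case split. After forming the ratio exactly as you do, the paper simply bounds each summand by $\binom{d+i-1}{i}\leqslant d^{i}$ (the number of ordered $i$-tuples from $[d]$), so that $\sum_{i=0}^{k}\binom{d+i-1}{i}\leqslant \sum_{i=0}^{k}d^{i}\leqslant k d^{k}$ (up to a harmless constant), and then squares and uses $k^{2}\leqslant (2^{k})^{2}=4^{k}\leqslant d^{k}$ for $d\geqslant 4$. This is where the hypothesis $d\geqslant 4$ enters in the paper, in one stroke for all $k$.

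Two small points on your write-up. First, $(2d)^{k}=2^{k}d^{k}$, not $4^{k}d^{k}$; the factor $4^{k}$ appears only after squaring, and the rest of your chain is then correct. Second, the ``absorb $C'(k)$ into $f(P,k),c(P,k)$'' step is not legitimate as written: the corollary fixes $C(P,k)=f(P,k)/c(P,k)$, so you really need $\bigl(\sum_{i}\binom{d+i-1}{i}\bigr)^{2}\leqslant d^{3k}$ on the nose. In fact your own bounds already give this, since $C'(k)\leqslant 1$ in both cases (trivially when $k\leqslant d$, and via $(2k)^{2k}\leqslant 64^{k}(k!)^{2}$ when $k>d\geqslant 4$, which follows from Stirling); you should state this rather than invoke an absorption that would change the specified constant.
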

\begin{proof}{(of Corollary \ref{coro:number})}
It suffices to establish
$$
\left(\sum_{i=0}^k \binom{d+i-1}{i}\right)^2\leqslant d^{3k}.
$$
Observe that for any $t \in \mathbb{N},$ $\binom{d+t-1}{t}$ is the number of $t-$tuples $(i_1,\dots,i_t)$ with $1\leqslant i_1\leqslant i_2\leqslant \cdots\leqslant i_t\leqslant d$, and thus is trivially upper bounded by $d^t$. With  this, we have
$$
\left(\sum_{i=0}^k \binom{d+i-1}{i}\right)^2 \leqslant \left(\sum_{i=0}^k d^i\right)^2 \leqslant \left(k d^k\right)^2 \leqslant d^{3k},
$$ where for the last inequality we used that $d \geqslant 4$ and $2^k \geqslant k$ for all $k \in \mathbb{N}.$ The desired conclusion follows.
\end{proof}

Several remarks are now in order.
\begin{remark}
In Theorem \ref{cond:main_2} one can take 
\begin{align}\label{cpk} c\left(P,k\right)=\min\{\min_{i=0,1,\ldots,\omega(P,k)}\left(\frac{D_i}{D_{i-1}}\right)^{\frac{k}{2}},1\} \end{align} and \begin{align}\label{dpk}f\left(P,k\right)=\max\{c^{k}_{2\omega(P,k)},1\} \frac{\max\{\max_{i=0,1,\ldots,\omega(P,k)}\left(\frac{D_i}{D_{i-1}}\right)^{k},1\}}{\min\{\min_{i=0,1,\ldots,\omega(P,k)}\left(\frac{D_i}{D_{i-1}}\right)^{k},1\} },\end{align}where $D_n$ is defined in (\ref{detn}). Later in Lemma \ref{ortho1} we establish that for all non-constant $P$, for all $n \geqslant 0,$ $D_n>0$ which justifies the use of $D_n$ in the denominator. 
\end{remark}

\begin{remark}
If $P$ is a discrete distribution, $\omega(P,k)\leqslant |\mathrm{Support}(P)|-1=O(1).$ Hence, from \eqref{cpk} and \eqref{dpk}, since the maxima and minima are defined over bounded many terms, we conclude that $$c\left(P,k\right)=\exp\left(-O\left(k\right) \right)\quad\text{and}\quad f(P,k)=\exp\left(O\left(k\right) \right).$$ In particular, $$
C\left(P,k\right)=\exp\left(O\left(k\right) \right).
$$
\end{remark}

\begin{remark}
Albeit the focus of the present paper is on admissible input distributions $P^{\otimes d}$ where $P$ is supported on $[-1,1]$, Theorem \ref{cond:main_2} remains true under the weaker assumption that $P$ has finite moments of all orders. The proof of  Theorem \ref{cond:main_2} provided in the following subsections transfer identically under this weaker assumption, which could be of independent interest.
\end{remark}
\begin{remark}\label{remark:on-order-between-c-C-f}
An inspection of (\ref{cpk}) and (\ref{dpk}) reveal immediately that, $c(P,k)\leqslant 1$; and $f(P,k)\geqslant 1$. This, in particular, yield the following for $C(P,k)=\frac{f(P,k)}{c(P,k)}$, the constant defined in Corollary \ref{coro:number}: $C(P,k)\geqslant 1$ and $C(P,k)\geqslant \max\{f(P,k),c(P,k)^{-1}\}$.
\end{remark}

 The remaining
 subsections are devoted to the proof of Theorem \ref{cond:main_2}. The proof utilizes the theory of orthogonal polynomials. We start with a series of auxiliary results.

\subsection{Auxilary Results: Orthogonal Polynomials}
First we define $\mathcal{P}_{\omega(P,k)}$ the vector space of univariate polynomials of degree at most $\omega(P,k)$, which is naturally spanned by the basis
$\mathcal{B}_1=\{1,x,x^2,\ldots,x^{\omega(P,k)}\}.$

Note that since $\omega(P,k) \leqslant |\mathrm{Support}\left(P\right)|-1$ the following property holds: if $p \in \mathcal{P}_{\omega(P,k)}$ satisfies $\E{p(X)^2}=0$, where $X$ is drawn from $P$, then $p$ is the zero polynomial. This follows since any such $p$ must satisfy $p(x)=0$ for $|\mathrm{Support}\left(P\right)|$-many points and $|\mathrm{Support}\left(P\right)| \geqslant k+1 >\mathrm{degree}(p).$ For this reason the operation defined for $p,q \in \mathcal{P}_{\omega(P,k)}$ $\inner{p,q}_P:=\E{p(X)q(X)}$, where $X$ is drawn from $P$, is a well-defined non-degenerate inner product defined on the space $\mathcal{P}_{\omega(P,k)}.$

Using this definition of an inner product we can orthogonalize using the Gram-Schmidt operation the basis of monomials $\mathcal{B}_1$ and get the orthonormal basis $\mathcal{B}_2=\{T_0,T_1,\ldots,T_{\omega(P,k)}\}$. We know that for $i,j=0,1,\ldots,\omega(P,k)$ it holds
\begin{align}\label{orthonorm1}
\E{T_i(X)T_j(X)}=1(i=j).
\end{align}

We define for any $i=0,1,2,\ldots,\omega(P,k)$ and any $x \in \mathbb{R}$ 
$$
P_i(x):=\left[ {\begin{array}{ccccc}
   c_0 & c_1 & c_2 & \cdots & c_{i} \\
   c_1 & c_2 & c_3 & \cdots & c_{i+1} \\
   \cdots & \cdots & \cdots& \cdots & \cdots \\
   c_{i-1} & c_{i} & c_{i+1} & \cdots & c_{2i-1} \\
   1 & x & x^2 & \cdots &x^{i} \\
  \end{array} } \right] \in \mathbb{R}^{(i+1)\times (i+1)}.
$$
\begin{lemma}\label{ortho1}The following properties hold for $\mathcal{B}_2$ and for a non-constant distribution $P$ with finite $2\omega(P,k)$-moment.
\begin{itemize}
  \item[(1)] $D_0=1$ and $T_0(x)=1$ for every $x \in \mathbb{R}$. Furthermore, for $i=1,\ldots,\omega(P,k)$,
  $$D_i=\frac{1}{(i+1)!}\E{\prod_{m,n=0,1,\ldots,i} \left(X_m-X_n\right)^2}>0,$$where $X_m,m=0,1,2,\ldots,i$ are iid draws of $P$ and $$T_i(x)=\frac{1}{(D_iD_{i-1})^{\frac{1}{2}}}\det\left( 
  P_i(x) \right),$$for every $x \in \mathbb{R}.$
    \item[(2)]  For every $i,j=0,1,\ldots,\omega(P,k)$ with $i<j$,   $$\E{X^iT_j(X)}=0.$$
    \item[(3)]  For every $i=1,\ldots,\omega(P,k)$, $$\E{X^iT_i(X)}=\sqrt{\frac{D_i}{D_{i-1}}} .$$
\end{itemize}
\end{lemma}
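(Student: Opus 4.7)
The plan is to recognize $\mathcal{B}_2$ as the sequence of normalized orthogonal polynomials with respect to the inner product $\langle \cdot, \cdot\rangle_P$ and then derive all three claims from two classical identities: (i) the determinantal formula that gives the orthogonal polynomials directly from the Hankel moment matrices, and (ii) Heine's identity that expresses $D_n$ as an expected squared Vandermonde determinant. Both are standard for any measure with $|\mathrm{Support}(P)|\geqslant n+1$, so under our hypothesis $n\leqslant \omega(P,k)\leqslant |\mathrm{Support}(P)|-1$ they apply without modification.

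For part (1), I would first observe that $D_0=c_0=\E{1}=1$ and that $T_0$, being a constant with $\E{T_0^2}=1$ and positive leading coefficient (the Gram--Schmidt convention), equals the constant polynomial $1$. For the determinantal expression of $T_i$, I would consider the candidate $Q_i(x)=\det(P_i(x))$ and expand along the bottom row to see that $Q_i$ is a polynomial of degree exactly $i$ in $x$ with leading coefficient equal to the cofactor of $x^i$, which is $D_{i-1}$. To check orthogonality to all $x^j$ with $j<i$, I would take expectation of $x^jQ_i(x)$ and push it inside the determinant on the bottom row, yielding a matrix whose bottom row becomes $(c_j,c_{j+1},\dots,c_{j+i})$, which coincides with the $(j{+}1)$-st row of $P_i$; two equal rows force the determinant to vanish. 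Applying the same operation with $j=i$ reproduces exactly $D_i$, so $\E{x^i Q_i(X)}=D_i$, and combining this with the leading-coefficient observation gives $\E{Q_i(X)^2}=D_{i-1}\E{x^i Q_i(X)}=D_{i-1}D_i$. Hence $T_i=Q_i/\sqrt{D_iD_{i-1}}$, which is the claimed formula. Positivity of $D_i$ then follows either from $D_i=\prod_{j=0}^i\|p_j\|_P^2$ where $p_j$ is the unnormalized Gram--Schmidt output (each norm being strictly positive because $\deg(p_j)=j\leqslant \omega(P,k)\leqslant |\mathrm{Support}(P)|-1$), or more explicitly via Heine's identity $D_i=\frac{1}{(i+1)!}\E{\det(V)^2}$ for the Vandermonde $V=(X_m^n)_{m,n=0,\dots,i}$ with $X_0,\dots,X_i$ i.i.d.\ $\sim P$, which immediately yields the expectation-of-Vandermonde formula claimed in the lemma.

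For parts (2) and (3), both follow as one-line consequences of the calculation performed in part (1). Since $T_0,T_1,\dots,T_{i}$ span the same subspace as $\{1,x,\dots,x^i\}$, the monomial $x^i$ lies in $\mathrm{span}(T_0,\dots,T_i)\subseteq \mathrm{span}(T_0,\dots,T_{j-1})$ for any $j>i$, and by orthonormality $\E{x^i T_j(X)}=0$. For (3), I would substitute the determinantal expression: $\E{x^i T_i(X)}=\E{x^i Q_i(X)}/\sqrt{D_iD_{i-1}}=D_i/\sqrt{D_iD_{i-1}}=\sqrt{D_i/D_{i-1}}$. The only mildly subtle point is ensuring $D_{i-1}>0$ so the division is valid, which is exactly what the inductive positivity from part (1) provides. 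No genuine obstacle is expected; the entire argument is a careful bookkeeping of Gram--Schmidt and cofactor expansion, and the non-constancy assumption on $P$ together with $i\leqslant \omega(P,k)$ is used precisely to guarantee non-degeneracy of $\langle\cdot,\cdot\rangle_P$ on $\mathcal{P}_{\omega(P,k)}$.
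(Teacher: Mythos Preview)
Your proposal is correct and follows essentially the same approach as the paper: both rely on the classical determinantal formula for orthogonal polynomials and Heine's Vandermonde identity, and both extract parts (2) and (3) by pushing $\E{X^i\cdot}$ into the bottom row of $\det(P_j(x))$. The only cosmetic difference is that the paper cites Szeg\H{o} for part (1) rather than sketching it, and carries out the explicit determinant computation for part (2) (producing the matrix $C_{i,j}$ with a repeated row) where you instead invoke the span argument; your version is if anything slightly more self-contained.
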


\begin{proof}
$D_0=1$ and $T_0(x)=1$ for every $x \in \mathbb{R}$ follows straightforwardly from the definition of the first element of the basis $\mathcal{B}_1$, which is the constant monic polynomial, and Gram-Schmidt operation.  

Since the monomials $\mathcal{B}_1$ correspond to a basis in $\mathcal{P}_{\omega(P,k)}$, the two displayed identities in the first part of the Lemma are standard results on orthogonal polynomials on the real line, see \cite[Section~2]{szeg1939orthogonal}. More specifically, the second identity is described identically in \cite[Equation~2.1.5]{szeg1939orthogonal}. The first identity  can be derived in a straightforward way from the identity \cite[Equation~2.1.6]{szeg1939orthogonal}, using the standard determinant formula of a Vandermonde matrix.  In both cases, in \cite{szeg1939orthogonal}'s notation we choose the choice of basis $f_i(x)=x^i,i=0,1,2,\ldots,\omega(P,k)$ and the choice of measure $\alpha(x)$ in \cite{szeg1939orthogonal}'s notation  correspond to the CDF of our distribution $P$.

To establish the rest two parts of the Lemma, let us first fix $i,j=0,1,\ldots,\omega(P,k)$. We denote for $n=1,\ldots,j,j+1$ by $P_{j,n}$ the $j \times j $ matrix which is produced by the matrix $P_j(x)$ after we delete the $j+1$-th row and the $n+1$-th column. Note that we do not indicate any dependence on $x$ of $P_{j,n}$ as all rows, except the $j+1$-th row, of the matrix $P_j(x)$ are constant with respect to $x$. By expanding the determinant of $P_j(x)$ with respect to the last row we have for all $x \in \mathbb{R},$
  \begin{align} \label{detS1}
  T_j(x)=\frac{1}{(D_jD_{j-1})^{\frac{1}{2}}}\sum_{n=1}^{j+1} (-1)^{n+j+1}x^{n-1} \det\left(P_{j,n}\right).
  \end{align}
  Using (\ref{detS1}) we have
  \begin{align} \label{detS1-2}
 \E{X^iT_j(X)}& =\frac{1}{(D_jD_{j-1})^{\frac{1}{2}}}\E{X^i\sum_{n=1}^{j+1} (-1)^{n+j+1}X^{n-1} \det\left(P_{j,n}\right)}\\
 &=\frac{1}{(D_jD_{j-1})^{\frac{1}{2}}} \sum_{n=1}^{j+1} (-1)^{n+j+1}\E{X^{i+n-1}} \det\left(P_{j,n}\right)\\
  &=\frac{1}{(D_jD_{j-1})^{\frac{1}{2}}} \sum_{n=1}^{j+1} (-1)^{n+j+1}c_{i+n-1} \det\left(P_{j,n}\right).
  \end{align}We recognize the last quantity as the expansion of $$\det\left( \left[ {\begin{array}{ccccc}
   c_0 & c_1 & c_2 & \cdots & c_{j} \\
   c_1 & c_2 & c_3 & \cdots & c_{j+1} \\
   \cdots & \cdots & \cdots& \cdots & \cdots \\
   c_{j-1} & c_{j} & c_{j+1} & \cdots & c_{2j-1} \\
   c_i & c_{i+1} & c_{i+2} & \cdots & c_{i+j} \\
  \end{array} } \right]\right)$$which implies that for every $i,j=0,1,\ldots,\omega(P,k)$  $$\E{X^iT_j(X)}=\frac{1}{(D_jD_{j-1})^{\frac{1}{2}}} \det\left(C_{i,j}\right)$$where $$C_{i,j}= \left[ {\begin{array}{ccccc}
   c_0 & c_1 & c_2 & \cdots & c_{j} \\
   c_1 & c_2 & c_3 & \cdots & c_{j+1} \\
   \cdots & \cdots & \cdots& \cdots & \cdots \\
   c_{j-1} & c_{j} & c_{j+1} & \cdots & c_{2j-1} \\
   c_i & c_{i+1} & c_{i+2} & \cdots & c_{i+j} \\
  \end{array} } \right].$$
  
  To prove the second part of the Lemma, let us note that $i<j$ the $j+1$-th row is repeated again as the $i-1$-row in the matrix, implying that the determinant of $C_{i,j}$ equals to zero.
  
  To prove the third part follows, note that for all $i=1,2,\ldots,\omega(P,k)$, $C_{i,i}=(c_{m+n})_{m,n=0,1,2,\ldots,i}.$ Hence, we simply have $\det\left(C_{i,i}\right)=D_i$ from the definition of $D_i$. Hence,
  $$\E{X^iT_i(X)}=\frac{1}{(D_iD_{i-1})^{\frac{1}{2}}} D_i=\sqrt{\frac{D_i}{D_{i-1}}}.$$

\end{proof}
Next we consider the following space. Let $\mathcal{P}_{\mathcal{C},d}$ be the vector space of $d$-variate polynomials defined on $X:=(X_1,\ldots,X_d)$, spanned by the set $\mathcal{B}_{1,d}=\{X_{\alpha},\alpha \in \mathcal{C}\}.$ 

\begin{lemma}\label{fund}
Under the assumptions of Theorem \ref{cond:main_2} the following holds. If $p(x)=\sum_{\alpha \in \mathcal{C}} \lambda_{\alpha}x_{\alpha}$ defined for $x \in \mathbb{R}^d$ satisfies that $p(x)=0$ for all $x \in \mathrm{Support}(P)$, then $\lambda_{\alpha}=0$ for all $\alpha \in \mathcal{C}.$

\end{lemma}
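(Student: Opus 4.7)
The plan is to prove that the monomials $\{x^\alpha : \alpha\in\mathcal{C}\}$ are linearly independent as functions on $S^d$, where $S=\mathrm{Support}(P)$, from which the vanishing of the coefficients $\lambda_\alpha$ follows at once. The crucial structural fact I will exploit is that, by construction of $\mathcal{C}$ via the sets $\mathcal{F}_{\omega(P,k),t}$, every $\alpha=(\alpha_1,\ldots,\alpha_d)\in\mathcal{C}$ satisfies $\alpha_i\leqslant \omega(P,k)\leqslant |S|-1$ for each $i\in[d]$. Thus it suffices to prove the stronger statement that the larger family of monomials $\{x^\alpha : \alpha\in\mathbb{Z}_{\geqslant 0}^d,\; \alpha_i\leqslant |S|-1\text{ for all }i\}$ is linearly independent on $S^d$, and then restrict to the sub-family indexed by $\mathcal{C}$.

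I would establish this stronger statement by induction on the dimension $d$. The base case $d=1$ reduces to the classical Vandermonde/polynomial-interpolation fact: a univariate polynomial $\sum_{i=0}^{|S|-1}\lambda_i x^i$ of degree at most $|S|-1$ that vanishes on a set of $|S|$ distinct points must be identically zero, so all $\lambda_i=0$. For the inductive step, view $p(x_1,\ldots,x_d)$ as a polynomial in $x_d$ of degree at most $|S|-1$ whose coefficients lie in $\mathbb{R}[x_1,\ldots,x_{d-1}]$, namely $p(x_1,\ldots,x_d)=\sum_{j=0}^{|S|-1} q_j(x_1,\ldots,x_{d-1})\, x_d^{\,j}$ where each $q_j$ is supported on multi-indices with entries at most $|S|-1$. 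For every fixed $(x_1,\ldots,x_{d-1})\in S^{d-1}$, the univariate polynomial in $x_d$ so obtained vanishes on $S$, hence by the base case each $q_j(x_1,\ldots,x_{d-1})=0$. Since this holds for every $(x_1,\ldots,x_{d-1})\in S^{d-1}$, the inductive hypothesis applied to each $q_j$ gives that all its coefficients vanish, completing the induction.

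Finally, given any $p(x)=\sum_{\alpha\in\mathcal{C}}\lambda_\alpha x_\alpha$ vanishing on $\mathrm{Support}(P)^d$, the inclusion $\mathcal{C}\subseteq\{\alpha:\alpha_i\leqslant |S|-1 \text{ for all } i\}$ lets us embed $p$ into the vector space covered by the induction and conclude $\lambda_\alpha=0$ for all $\alpha\in\mathcal{C}$. No serious obstacle is anticipated: the only subtlety is simply recording that the constraint $\alpha_i\leqslant \omega(P,k)$ built into the definition of $\mathcal{C}$ is exactly what is needed to apply the univariate vanishing argument in each coordinate, and that in the degenerate case $|S|=+\infty$ the bound $\alpha_i\leqslant |S|-1$ is vacuous so the same induction goes through using ordinary polynomial identity.
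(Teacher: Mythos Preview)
Your proposal is correct and follows essentially the same approach as the paper: induction on the dimension $d$, peeling off the last coordinate and using the univariate fact that a polynomial of degree at most $|S|-1$ vanishing on $S$ must be identically zero, then applying the inductive hypothesis to the coefficient polynomials. Your framing via the larger monomial family $\{\alpha:\alpha_i\leqslant |S|-1\text{ for all }i\}$ is a mild cosmetic tidying that avoids having to check that the coefficient polynomials $q_j$ land back in a $\mathcal{C}$-type index set, but the underlying argument is identical to the paper's.
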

\begin{proof}

We proceed by induction on $d$. If $d=1$, as discussed earlier the result follows as a standard application of the fundamental theorem of algebra since $\mathrm{degree}(P) \leqslant \omega(P,k)<|\mathrm{Support}\left(P\right)|$.

We assume now that the statement is true for all $1 \leqslant d \leqslant D$ for some $D \in \mathbb{N}$ and proceed to prove it for $d=D+1.$ We decompose $p$ as following, \begin{align*}
p(x)&=\sum_{u=0}^{\omega(P,k)} \sum_{\alpha \in \mathcal{C} :\alpha_{D+1}=u} \lambda_{\alpha}x_{\alpha}\\
&=\sum_{u=0}^{\omega(P,k)} \sum_{\alpha \in \mathcal{C} :\alpha_{D+1}=u} \left(\lambda_{\alpha}\left(\prod_{i=1}^D x_i^{\alpha_i}\right) x^u_{D+1}\right)\\
&=\sum_{u=0}^{\omega(P,k)} p_u \left(x_{-(D+1)}\right)  x^u_{D+1},
\end{align*}
where $x_{-(D+1)}=(x_1,\ldots,x_D)$; and
\begin{align}\label{pu}
p_u(x_{-(D+1)})=\sum_{\alpha \in \mathcal{C} :\alpha_{D+1}=u} \lambda_{\alpha}\prod_{i=1}^D x_i^{\alpha_i}.
\end{align}

For arbitrary $y \in \mathrm{Support}\left(P\right)^{D-1}$ let us fix $x_{-(D+1)}=y \in \mathrm{Support}\left(P\right)^{D-1}.$ Then we know $p(y,x)=0$ for all $x \in \mathrm{Support}\left(P\right)$ or equivalently $$\sum_{u=0}^{\omega(P,k)} p_u \left(y\right)  x^u=0,$$for all $x \in \mathrm{Support}\left(P\right).$ By the induction base case of $d=1$ we have \begin{align}\label{puzero2}p_u \left(y\right) =0,\end{align} for all $u=0,1,2,\ldots,\omega(P,k).$ Using now that $y$ was arbitrary we conclude \begin{align}\label{puzero}p_u \left(y\right) =0,\end{align} for all $y \in \mathrm{Support}\left(P\right)$ and for all $u=0,1,2,\ldots,\omega(P,k).$ Using now the induction hypothesis for $d=D-1$ to $p_u$ for each $u=0,1,2,\ldots,\omega(P,k)$ from its definition in we have that for all $u=0,1,2,\ldots,\omega(P,k)$ for any $\alpha \in \mathcal{C}$ with $\alpha_{d+1}=u$ it holds $\lambda_{\alpha}=0.$ Since $u$ can only take these $\omega(P,k)+1$ values we have $\lambda_{\alpha}=0$ for all $\alpha \in \mathcal{C},$as we wanted.

\end{proof}
Under the assumptions of Theorem \ref{cond:main_2} the following holds. \begin{itemize} \item[(a)] the set $\mathcal{B}_{1,\mathcal{C}}$ is a basis for the vector space $\mathcal{P}_{\mathcal{C},d}$. \item[(b)] $\mathcal{P}_{\mathcal{C},d}$ can be equipped with the non-degenerate inner product defined for $p,q \in \mathcal{P}_{\mathcal{C},d}$, given $\inner{p,q}_{P^{\otimes k}}:=\E{p(X)q(X)}$, where $X$ is drawn from the product measure $P^{\otimes k}$.
\end{itemize} 
Indeed, for part ${\rm (a)}$, note that $\mathcal{B}_{1,d}$ is an independent set in $\mathcal{P}_{\mathcal{C},d}$. Indeed if $\sum_{\alpha \in \mathcal{C}} \lambda_{\alpha}x_{\alpha}=0$ for all $x \in \mathbb{R}^d$ then 
$\sum_{\alpha \in \mathcal{C}} \lambda_{\alpha}x_{\alpha}=0$ for all $x \in \mathrm{Support}\left(P\right)^d$. The result then follows from Lemma \ref{fund}.

For part ${\rm (b)}$, it suffices to show the non-degeneracy property as the multilinearity of $\inner{\cdot,\cdot}_P$ follows from the linearity of expectation. Let $p \in \mathcal{P}_{\mathcal{C},d}$ with $\E{p(X)^2}=0$, where $X$ is drawn from the product measure $P^{\otimes k}$. Then $p(x)=0$ for all $x \in \mathrm{Support}\left(P\right)^d$ which according to  Lemma \ref{fund} implies that it is the zero polynomial as desired.

We now define for every $\alpha \in \mathcal{C}$, the following $d$-variate polynomials 
\begin{align}\label{generalT}
    T_{\alpha}(X)=\prod_{i=1}^d T_{\alpha_i}(X_i)
\end{align} where $(T_n)_{n=0,1,\ldots,\omega(P,k)} \in \mathcal{B}_2$ and \begin{align}
    \mathcal{T}:=\{T_{\alpha}, \alpha \in \mathcal{C}\}.
\end{align}Note that from the definition of $\mathcal{F},$ $$\mathcal{T} \subseteq \mathcal{P}_{\mathcal{C},d}.$$

Furthermore we have the following lemma.
\begin{lemma}\label{OrthoMulti}The following properties hold for $\mathcal{T}$ and $X$ drawn from the product measure $P^{\otimes d}.$
\begin{itemize}
\item[(1)] For $\alpha,\beta \in \mathcal{C}$, \begin{align}\label{orthonorm2}
\E{T_{\alpha}(X)T_{\beta}(X)}=1(\alpha=\beta).
\end{align}
    \item[(2)] $\mathcal{T}$ is an orthonormal basis of $\mathcal{P}_{\mathcal{C},d}$, with respect to the inner product $\inner{\cdot,\cdot}_{P^{\otimes k}}$.
    \item[(3)] For every $\alpha, \beta \in \mathcal{C}$ with $\alpha_i<\beta_i$ for some $i=0,1,2,\ldots,d$, $$\E{X_{\alpha}T_{\beta}(X)}=0.$$
    \item[(4)] For every $\alpha \in \mathcal{C}$ , $$\E{X_{\alpha}T_{\alpha}(X)}=\prod_{i: \alpha_i \not = 0} \left(\frac{D_{\alpha_i}}{D_{\alpha_i-1}}\right)^{\frac{1}{2}}.$$
    \item[(5)] It holds $$\min_{\alpha \in \mathcal{C}} \E{X_{\alpha}T_{\alpha}(X)} \geqslant \min\{\min_{i=0,1,\ldots,\omega(P,k)}\left(\frac{D_i}{D_{i-1}}\right)^{\frac{k}{2}},1\}$$ and \begin{align}\label{secondG}\max_{\alpha \in \mathcal{C}} \E{X_{\alpha}T_{\alpha}(X)} \leqslant \max\{\max_{i=0,1,\ldots,\omega(P,k)}\left(\frac{D_i}{D_{i-1}}\right)^{\frac{k}{2}},1\}.\end{align}  
    \item[(6)] It holds
    $$\max_{\alpha,\beta \in \mathcal{C}} \E{X_{\alpha}T_{\beta}(X)}^2 \leqslant \max \{ c^{k}_{2\omega(P,k)},1\}.$$ 
\end{itemize}
\end{lemma}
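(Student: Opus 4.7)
The plan is to reduce everything to the univariate orthogonality facts established in Lemma \ref{ortho1}, exploiting that under $P^{\otimes d}$ the coordinates $X_1,\dots,X_d$ are independent, so any expectation of a product of functions of distinct coordinates factorizes. Specifically, for part (1) I would write
\[
\E{T_\alpha(X)T_\beta(X)}=\E{\prod_{i=1}^d T_{\alpha_i}(X_i)T_{\beta_i}(X_i)}=\prod_{i=1}^d \E{T_{\alpha_i}(X_i)T_{\beta_i}(X_i)}=\prod_{i=1}^d 1(\alpha_i=\beta_i),
\]
using \eqref{orthonorm1} in the last step, which equals $1(\alpha=\beta)$. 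Part (2) then follows by a dimension count: $\mathcal{T}\subseteq \mathcal{P}_{\mathcal{C},d}$, part (1) shows the elements of $\mathcal{T}$ are pairwise orthonormal (hence linearly independent), and $|\mathcal{T}|=|\mathcal{C}|=\dim \mathcal{P}_{\mathcal{C},d}$ by the basis fact recorded just before the lemma, so $\mathcal{T}$ must be an orthonormal basis.

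For parts (3) and (4), the same factorization gives $\E{X_\alpha T_\beta(X)}=\prod_{i=1}^d \E{X_i^{\alpha_i}T_{\beta_i}(X_i)}$. If there exists an index $i$ with $\alpha_i<\beta_i$, Lemma \ref{ortho1}(2) makes that single factor (and hence the whole product) vanish, giving (3). For $\beta=\alpha$, Lemma \ref{ortho1}(1) gives $T_0\equiv 1$, so indices with $\alpha_i=0$ contribute a factor of $1$; the remaining indices contribute $\sqrt{D_{\alpha_i}/D_{\alpha_i-1}}$ by Lemma \ref{ortho1}(3), yielding (4) after multiplication.

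Part (5) follows from (4) together with the observation $|\{i:\alpha_i\neq 0\}|\leqslant \sum_i \alpha_i\leqslant k$, so the product in (4) has at most $k$ factors. For the lower bound, if $\min_j (D_j/D_{j-1})^{1/2}\leqslant 1$ each factor is at least this minimum and the product is at least its $k$-th power, namely $\min_j (D_j/D_{j-1})^{k/2}$; otherwise every factor is at least $1$, so the product is at least $1$; taking the minimum of the two cases produces the stated bound. The upper bound in (5) is entirely symmetric. For part (6), I would apply Cauchy--Schwarz and then use orthonormality from (1):
\[
\E{X_\alpha T_\beta(X)}^2\leqslant \E{X_\alpha^2}\,\E{T_\beta(X)^2}=\E{X_\alpha^2}=\prod_{i=1}^d c_{2\alpha_i}.
\]
Lyapunov's inequality gives $c_{2\alpha_i}\leqslant c_{2\omega(P,k)}^{\alpha_i/\omega(P,k)}$ for each $\alpha_i\leqslant \omega(P,k)$; combined with $\sum_i\alpha_i\leqslant k$ and a case split on whether $c_{2\omega(P,k)}\geqslant 1$ or not, this bounds the product by $\max\{c_{2\omega(P,k)}^k,1\}$.

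Almost all the real work is already contained in Lemma \ref{ortho1}, so parts (1)--(4) reduce to an algebraic exercise once the product structure is invoked. The only points where some care will be required are (5) and (6): keeping track of the case splits depending on whether the univariate ratios $D_j/D_{j-1}$ or the moment $c_{2\omega(P,k)}$ lie above or below $1$, and checking that $\sum_i \alpha_i\leqslant k$ really controls the number of nonzero factors in the relevant products.
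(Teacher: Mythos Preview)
Your proposal is correct and follows essentially the same approach as the paper: both arguments reduce everything to Lemma~\ref{ortho1} via the product factorization under independence, obtain part~(2) from part~(1) by a cardinality count, and handle parts~(5)--(6) via the bound $\|\alpha\|_0\leqslant|\alpha|\leqslant k$ together with Cauchy--Schwarz and a moment comparison (the paper phrases the latter as H\"older, you as Lyapunov, but it is the same inequality). The only cosmetic difference is that in part~(6) the paper bounds each nonzero factor $c_{2\alpha_i}$ uniformly by $\max\{c_{2\omega(P,k)},1\}$ and then uses $\|\alpha\|_0\leqslant k$, whereas you track the exponents via $\sum_i\alpha_i\leqslant k$; both routes land at the same bound.
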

\begin{proof} 

 Notice that because $X=(X_1,\ldots,X_d)$ has i.i.d.\ coordinates drawn from $P$,
 \begin{align}\label{OrthoProd0} \E{T_{\alpha}(X)T_{\beta}(X)}=\prod_{i=1}^d \E{T_{\alpha_i}(X_i)T_{\beta_i}(X_i)},
\end{align}and\begin{align}\label{OrthoProd} \E{X_{\alpha}T_{\beta}(X)}=\prod_{i=1}^d \E{X^{\alpha_i}_{i}T_{\beta_i}(X_i)},
\end{align}

The first part follows from (\ref{OrthoProd0}) and (\ref{orthonorm1}).

Notice that the first part implies that $\mathcal{T}$ is an orthonormal family of polynomials in  $\mathcal{P}_{\mathcal{C},d}$ with respect to the inner product $\inner{\cdot,\cdot}_{P^{\otimes k}}.$ Furthermore, since $|\mathcal{T}|=|\mathcal{B}_{1,\mathcal{C}}|$ we conclude that $\mathcal{T}$ is an orthonormal basis of $\mathcal{P}_{\mathcal{C},d}$. This proves the second part of the Lemma.

The third part of the Lemma follows from (\ref{OrthoProd}) and the second part of Lemma \ref{ortho1}, while the fourth part of the Lemma follows from (\ref{OrthoProd}) and the third part of Lemma \ref{ortho1}.

For the fourth part notice that any $\alpha \in \mathcal{C}$ it holds by definition \begin{align}\label{inftynorm}\|\alpha\|_{\infty} \leqslant \omega(P,k) \end{align} and second, as all $\alpha_i$'s are nonegative integers, \begin{align}\label{zeronorm}\|\alpha\|_0 \leqslant |\alpha| \leqslant k.\end{align} Hence, for any $\alpha \in \mathcal{C}$, using the third part
\begin{align*} \E{X_{\alpha}T_{\alpha}(X)} 
& = \prod_{i: \alpha_i \not = 0} \left(\frac{D_{\alpha_i}}{D_{\alpha_i-1}}\right)^{\frac{1}{2}}\\
& \geqslant  \prod_{i: \alpha_i \not = 0} \min_{j=1,2,\ldots,\omega(P,k)}\left(\frac{D_{j}}{D_{j-1}}\right)^{\frac{1}{2}}\\
& \geqslant \prod_{i: \alpha_i \not = 0} \min\{\min_{j=1,2,\ldots,\omega(P,k)}\left(\frac{D_{j}}{D_{j-1}}\right)^{\frac{1}{2}},1\}\\
&=\min\{\min_{i=0,1,\ldots,\omega(P,k)}\left(\frac{D_i}{D_{i-1}}\right)^{\frac{k}{2}},1\}.
\end{align*}

Similarly one concludes (\ref{secondG}).

For the sixth part, notice that for all $i$, using Cauchy-Schwartz inequality and the first part of the present Lemma in that order, we have \begin{align}\label{CS}\E{X_{\alpha}T_{\beta}(X)}^2 \leqslant \E{X^2_{\alpha}}\E{T^2_{\beta}(X)}=\E{X^2_{\alpha}}.\end{align}
Now by independence,
\begin{align} \E{X^2_{\alpha}}&=\prod_{i=1}^d \E{X^{2\alpha_i}}\\
&=\prod_{i=1}^d c_{2\alpha_i} \notag \\
&=\prod_{i: \alpha_i \not =0} c_{2\alpha_i} \notag  \\
&\leqslant \prod_{i: \alpha_i \not =0} \max\{c_{2\omega(P,k)},1\}, \text{, by Holder's inequality and (\ref{inftynorm})} \notag \\
& \leqslant \max\{c^k_{2\omega(P,k)},1\} \text{, by (\ref{zeronorm})}. \label{CS2}
\end{align}

\end{proof}

\subsection{Auxilary Results: Decomposition of $\Sigma$ }
Let us define $V$ the $|\mathcal{C}| \times |\mathcal{C}|$ matrix with $$V_{\alpha,\beta}=\frac{1}{\E{X_{\beta}T_{\beta}(X)}}\E{X_{\alpha}T_{\beta}(X)}, \alpha, \beta \in \mathcal{C}.$$
Furthermore, let us define $D$ the diagonal $|\mathcal{C}| \times |\mathcal{C}|$ matrix with $$D_{\alpha,\beta}=\E{X_{\alpha}T_{\alpha}(X)}^2 1(\alpha=\beta), \alpha, \beta \in \mathcal{C}.$$ 

 \begin{lemma}\label{cond:sigmbound2} It holds that
\begin{itemize}
\item[(1)] \begin{align}\label{cond:decomp} 
\Sigma=VDV^t,
\end{align}
\item[(2)]  \begin{align}\label{cond:eigenbound31} 
\lambda_{\min}(\Sigma) \geqslant  \min_{\alpha \in \mathcal{C}} \E{X_{\alpha}T_{\alpha}(X)}^2  \sigma^2_{\min}(V),
\end{align}and 
\item[(3)]  \begin{align}\label{cond:eigenbound31-2} 
\lambda_{\max}(\Sigma) \leqslant  \max_{\alpha \in \mathcal{C}} \E{X_{\alpha}T_{\alpha}(X)}^2 \sigma^2_{\max}(V).
\end{align}
\end{itemize}

\end{lemma}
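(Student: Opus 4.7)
The plan is to read off a change-of-basis decomposition of $\Sigma$ from the fact, established in Lemma \ref{OrthoMulti}(2), that $\mathcal{T}=\{T_\alpha:\alpha\in\mathcal{C}\}$ is an orthonormal basis of $\mathcal{P}_{\mathcal{C},d}$ with respect to $\inner{\cdot,\cdot}_{P^{\otimes d}}$. Expanding each monomial $X_\alpha$ in this basis gives
$$X_\alpha \;=\; \sum_{\beta\in\mathcal{C}}\E{X_\alpha T_\beta(X)}\,T_\beta(X),$$
so if I let $U$ be the $|\mathcal{C}|\times|\mathcal{C}|$ matrix with entries $U_{\alpha,\beta}=\E{X_\alpha T_\beta(X)}$ and let $\mathbf{T}(X)=(T_\beta(X))_{\beta\in\mathcal{C}}$, then as random vectors $\mathcal{X}=U\,\mathbf{T}(X)$. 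By Lemma \ref{OrthoMulti}(1), $\E{\mathbf{T}(X)\mathbf{T}(X)^T}=I_{|\mathcal{C}|}$, hence
$$\Sigma \;=\; \E{\mathcal{X}\mathcal{X}^T} \;=\; U\,\E{\mathbf{T}(X)\mathbf{T}(X)^T}\,U^T \;=\; UU^T.$$

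Next I will identify how $V$ and $D$ relate to $U$. Define the diagonal matrix $\Delta$ by $\Delta_{\beta,\beta}=\E{X_\beta T_\beta(X)}^{-1}$; this is well defined because Lemma \ref{OrthoMulti}(4)--(5) shows $\E{X_\beta T_\beta(X)}>0$ for every $\beta\in\mathcal{C}$. Directly from the definition in the text, $V_{\alpha,\beta}=U_{\alpha,\beta}\,\Delta_{\beta,\beta}$, that is $V=U\Delta$. Also from the definition, $D=\Delta^{-2}$, and since $D$, $\Delta$ are both diagonal they commute, so $\Delta D\Delta=\Delta^2 D=I$. Therefore
$$V D V^T \;=\; (U\Delta)\,D\,(U\Delta)^T \;=\; U(\Delta D\Delta)U^T \;=\; UU^T \;=\; \Sigma,$$
which proves part (1).

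For parts (2) and (3) I will use the variational characterization of eigenvalues together with the singular value bound $\sigma_{\min}(V)\|x\|\leqslant \|V^T x\|\leqslant \sigma_{\max}(V)\|x\|$. For any unit vector $x$,
$$x^T\Sigma x \;=\; (V^T x)^T D (V^T x),$$
and since $D$ is diagonal with non-negative entries,
$$\lambda_{\min}(D)\,\|V^T x\|^2 \;\leqslant\; (V^T x)^T D(V^T x) \;\leqslant\; \lambda_{\max}(D)\,\|V^T x\|^2.$$
Combining these two chains of inequalities and observing that $\lambda_{\min}(D)=\min_{\alpha\in\mathcal{C}}\E{X_\alpha T_\alpha(X)}^2$ while $\lambda_{\max}(D)=\max_{\alpha\in\mathcal{C}}\E{X_\alpha T_\alpha(X)}^2$, the bounds \eqref{cond:eigenbound31} and \eqref{cond:eigenbound31-2} follow after taking the appropriate minimum and maximum over unit $x$.

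There is no serious obstacle here: once part (1) is in hand, parts (2) and (3) are routine Rayleigh-quotient bounds. The only point requiring care is the bookkeeping with $\Delta$ (which does not appear in the statement): I need to notice that the $\E{X_\beta T_\beta(X)}^{-1}$ normalization embedded in the definition of $V$ is exactly the scaling that makes $\Delta D\Delta=I$, collapsing $VDV^T$ onto the clean Gram form $UU^T=\Sigma$.
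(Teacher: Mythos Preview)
Your proof is correct and follows essentially the same route as the paper. The paper also expands $\mathcal{X}$ in the orthonormal basis $\{T_\alpha\}$ to obtain $\Sigma=VDV^T$ (writing $\mathcal{X}=VD^{1/2}\tau$, which is exactly your $U\,\mathbf{T}(X)$ since $VD^{1/2}=U$), and then derives (2) and (3) via the equivalent PSD-ordering argument $D-\lambda_{\min}(D)I\succeq 0$, $\lambda_{\max}(D)I-D\succeq 0$; your direct Rayleigh-quotient computation is a slightly cleaner phrasing of the same idea.
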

\begin{proof}
Consider the column vector $\tau:=(T_{\alpha}(X))_{\alpha \in \mathcal{C}}$. Using the second part of Lemma \ref{OrthoMulti} and standard properties of an orthonormal basis of a vector space we have for every $\alpha \in \mathcal{C}$, $$X_{\alpha}=\sum_{\beta \in \mathcal{C}} \E{X_{\alpha}T_{\beta}(X)}T_{\beta}(X).$$ Hence we can immediately deduce that $\mathcal{X}$ defined in the statement of Theorem \ref{cond:main_2} satisfies \begin{align}\mathcal{X}=VD^{\frac{1}{2}}\tau. \end{align}Hence,
$$\E{\mathcal{X} \mathcal{X}^t}=VD^{\frac{1}{2}}\E{\tau \tau^t}D^{\frac{1}{2}}V^t.$$ From the first part of Lemma \ref{OrthoMulti}  we have $$\E{\tau \tau^t}=\E{(T_{\alpha}(X)T_{\beta}(X))_{\alpha,\beta \in \mathcal{C}}}= I_{|\mathcal{C}| \times |\mathcal{C}|}.$$Hence,
\begin{align*} 
\Sigma=VD^{\frac{1}{2}} D^{\frac{1}{2}}V^t=VDV^t,
\end{align*} which is (\ref{cond:decomp}) as desired.

Now we have that for each $\alpha \in \mathcal{C},$ $D_{\alpha,\alpha} \geqslant \min_{\alpha \in \mathcal{C}} \E{X_{\alpha}T_{\alpha}(X)}^2. $ Therefore the matrix $D-\min_{\alpha \in \mathcal{C}} \E{X_{\alpha}T_{\alpha}(X)}^2I_{|\mathcal{C}| \times |\mathcal{C}|}$ is a diagonal matrix with non-negative entries. In particular, $$D - \min_{\alpha \in \mathcal{C}} \E{X_{\alpha}T_{\alpha}(X)}^2 I_{|\mathcal{C}| \times |\mathcal{C}|},$$is a positive semidefinite matrix. Using the simple property that if $A$ is a positive semi-definite matrix then for any square matrix $B$, the matrix $B^tAB$ is also positive semi-definite we conclude that the matrix $$V(D - \min_{\alpha \in \mathcal{C}} \E{X_{\alpha}T_{\alpha}(X)}^2 I_{|\mathcal{C}| \times |\mathcal{C}|})V^t=\Sigma-\min_{\alpha \in \mathcal{C}} \E{X_{\alpha}T_{\alpha}(X)}^2 VV^t,$$is also positive semi-definite.

Now notice that if $A-B$ is a positive semi-definite where $A,B$ are both symmetric square matrices, then \begin{align}\label{goalEigen}\lambda_{\min}(A) \geqslant \lambda_{\min}(B).\end{align} Indeed for a unit-norm $v$ such that $v^tAv=\lambda_{\min}(A)$ it holds $$v^t(A-B)v \geqslant 0$$or equivalently $$\lambda_{\min}(A) \geqslant v^tBv.$$By the Courant-Fischer principle, for any unit norm $v,$ $$v^tBv \geqslant \lambda_{\min}(B).$$Combining the last two displayed equation yields (\ref{goalEigen}). Hence as $$\Sigma-\min_{\alpha \in \mathcal{C}} \E{X_{\alpha}T_{\alpha}(X)}^2 VV^t$$ is positive semi-definite we conclude that \begin{align}\label{cond:eigenbound2} 
\lambda_{\min}(\Sigma) \geqslant \min_{\alpha \in \mathcal{C}} \E{X_{\alpha}T_{\alpha}(X)}^2\lambda_{\min}(VV^t)= \min_{\alpha \in \mathcal{C}} \E{X_{\alpha}T_{\alpha}(X)}^2\sigma^2_{\min}(V),
\end{align}as we wanted.

The third part follows by a similar argument by observing that for each $\alpha \in \mathcal{C}$ $D_{\alpha,\alpha} \leqslant \max_{\alpha \in \mathcal{C}} \E{X_{\alpha}T_{\alpha}(X)}^2$ and therefore the matrix $\max_{\alpha \in \mathcal{C}} \E{X_{\alpha}T_{\alpha}(X)}^2I_{|\mathcal{C}| \times |\mathcal{C}|}-D$ is a positive semi-definite diagonal matrix.
\end{proof}
\begin{lemma}\label{cond:eigenfinal2}
The following holds for the matrix $V$.
\begin{itemize}
    \item[(1)]

\begin{align}\label{det2} \mathrm{det}(V)=1.\end{align}

\item[(2)] 
\begin{align}\label{singular2} 
\sigma_{\min}^2(V) \geqslant \frac{|\mathcal{F}_{\omega(P,k),k}|}{|\mathcal{C}|} \geqslant \frac{1}{|\mathcal{C}|}.
\end{align}

\item[(3)] 
\begin{align}\label{singular3} 
\sigma_{\max}^2(V) \leqslant |\mathcal{C}|^2\|V\|^2_{\infty}.
\end{align}

\end{itemize}
\end{lemma}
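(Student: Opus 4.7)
The plan is to exploit the triangular structure of $V$, which emerges from the orthogonality relations in part (3) of Lemma \ref{OrthoMulti}, together with a refined block structure obtained by separating off the multi-indices of maximal total degree.

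For part (1), I would first invoke part (3) of Lemma \ref{OrthoMulti} to observe that $\E{X_\alpha T_\beta(X)}=0$ whenever $\alpha_i<\beta_i$ for some coordinate $i$. Consequently $V_{\alpha,\beta}=0$ unless $\alpha\geqslant\beta$ in the componentwise partial order on $\mathcal{C}$. Pick any total order on $\mathcal{C}$ refining this partial order (for instance, graded lexicographic). With respect to this ordering $V$ becomes upper triangular, with diagonal entries $V_{\beta,\beta}=\E{X_\beta T_\beta(X)}/\E{X_\beta T_\beta(X)}=1$, so $\det V=\prod_\beta V_{\beta,\beta}=1$.

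For part (2), the key structural observation is that the columns of $V$ indexed by $S:=\mathcal{F}_{\omega(P,k),k}$ are standard unit vectors: for $\beta\in S$, any $\alpha\in\mathcal{C}$ with $\alpha\geqslant\beta$ must satisfy $|\alpha|\geqslant|\beta|=k$; combined with $|\alpha|\leqslant k$ this forces $\alpha=\beta$, and hence $V_{\alpha,\beta}=\delta_{\alpha\beta}$. Ordering the indices so that $S$ comes last in both rows and columns, $V$ decomposes as the block lower-triangular matrix
$$V=\begin{pmatrix} V_{11} & 0 \\ V_{21} & I_{|S|} \end{pmatrix},$$
where $V_{11}$ is itself upper triangular with unit diagonal (so $\det V_{11}=1$). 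Correspondingly,
$$V^TV=\begin{pmatrix} V_{11}^TV_{11}+V_{21}^TV_{21} & V_{21}^T \\ V_{21} & I_{|S|}\end{pmatrix},$$
so $V^TV$ has $I_{|S|}$ as a principal submatrix, and by Cauchy interlacing at least $|S|$ eigenvalues of $V^TV$ are $\geqslant 1$ (and at least $|S|$ are $\leqslant 1$). I would then combine this with the determinant constraint $\prod_i\lambda_i(V^TV)=1$ (from part (1)) and with the trace identity $\operatorname{tr}(V^TV)=\|V\|_F^2\geqslant|\mathcal{C}|$ (since the diagonal of $V$ is all ones) to extract the quantitative lower bound on $\lambda_{\min}(V^TV)=\sigma_{\min}^2(V)$: at most $|\mathcal{C}|-|S|$ eigenvalues can fall strictly below $1$, their product is at most $1$, and an AM/GM-type comparison then apportions the ``excess mass'' of the top $|S|$ eigenvalues to yield the $|S|/|\mathcal{C}|$ lower bound. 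The companion inequality $|\mathcal{F}_{\omega(P,k),k}|/|\mathcal{C}|\geqslant 1/|\mathcal{C}|$ is immediate from $|\mathcal{F}_{\omega(P,k),k}|\geqslant 1$, already recorded in \eqref{eq:fst-nonempty}.

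Part (3) is the most direct: since every singular value is dominated by the Frobenius norm,
$$\sigma_{\max}^2(V)\leqslant\|V\|_F^2=\sum_{\alpha,\beta\in\mathcal{C}} V_{\alpha,\beta}^2\leqslant|\mathcal{C}|^2\,\|V\|_\infty^2,$$
where the last inequality simply uses that the sum has $|\mathcal{C}|^2$ terms, each at most $\|V\|_\infty^2$.

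The main obstacle is part (2): the ``Cauchy interlacing plus $\det V=1$'' setup must be turned into the clean bound $|S|/|\mathcal{C}|$ rather than some weaker quantity. Two natural routes are available. The first is an induction on $k$: since $V_{11}$ coincides with the analogous $V$-matrix for the truncated multiplicities set $\mathcal{C}\setminus S$ (as the components $\mathcal{F}_{\omega(P,k),t}$ for $t\leqslant k-1$ do not depend on the highest degree), the inductive hypothesis lower-bounds $\sigma_{\min}^2(V_{11})$, after which the block UDL factorization $V^TV=L\operatorname{diag}(V_{11}^TV_{11},I_{|S|})L^T$ with $L=\bigl(\begin{smallmatrix}I&V_{21}^T\\0&I\end{smallmatrix}\bigr)$ propagates the bound to $V$. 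The second is a direct spectral argument exploiting that the eigenvalues of $V^TV$ are already pinned to lie in a structured way between the identity block and the determinant constraint. Carrying out this quantitative step carefully, while preserving the sharp constant $|S|/|\mathcal{C}|$, is where the real work lies.
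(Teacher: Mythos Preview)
Your arguments for parts (1) and (3) are correct; for (3) the paper uses Schur's bound $\sigma_{\max}^2(V)\leqslant\max_{i,j}\|V^Te_i\|_1\|Ve_j\|_1$ rather than the Frobenius norm, but either route reaches the same conclusion.

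Part (2), however, has a genuine gap. The information you extract --- the block form $V=\bigl(\begin{smallmatrix}V_{11}&0\\V_{21}&I_{|S|}\end{smallmatrix}\bigr)$ with $V_{11}$ unit upper--triangular, the identity $\det(V^TV)=1$, Cauchy interlacing against the principal $I_{|S|}$ block, and the trace bound --- is \emph{not} enough to force $\lambda_{\min}(V^TV)\geqslant|S|/|\mathcal{C}|$. Concretely: take $V_{11}=I$ and let $V_{21}$ have a single entry $c$ and zeros elsewhere. This $V$ has exactly your block structure, yet $V^TV$ has eigenvalues $1$ (with multiplicity $|\mathcal{C}|-2$) together with $\tfrac12\bigl(2+c^2\pm c\sqrt{c^2+4}\bigr)$, and the smallest behaves like $1/c^2\to 0$ as $c\to\infty$. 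All of your constraints (interlacing, product~$=1$, trace~$\geqslant|\mathcal{C}|$) are satisfied, so no ``AM/GM--type comparison'' on these data alone can produce the bound. Your UDL route fails for the same reason: the factor $L=\bigl(\begin{smallmatrix}I&V_{21}^T\\0&I\end{smallmatrix}\bigr)$ is not orthogonal and $\sigma_{\min}(L)$ is uncontrolled (of order $1/\|V_{21}\|$), so the estimate $\lambda_{\min}(LDL^T)\geqslant\sigma_{\min}^2(L)\,\lambda_{\min}(D)$ degrades with $\|V_{21}\|$ rather than yielding the fixed ratio $|S|/|\mathcal{C}|$.

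The paper's argument for (2) is different in kind: it does not use interlacing or induction, but instead invokes an external singular--value inequality of Hlav\'a\v{c}kov\'a--Schindler \cite[Theorem~3.1(a)]{hlavackova2010new}, which for $\det V=1$ and $\sigma_{\max}(V)\neq\sigma_{\min}(V)$ asserts
\[
\sigma_{\min}^2(V)\;\geqslant\;\frac{\sigma_{\max}^2(V)-\|V\|_F^2/|\mathcal{C}|}{\sigma_{\max}^2(V)-1}.
\]
The paper then bounds $\|V\|_F^2/|\mathcal{C}|$ from above by splitting the columns: the $|S|$ columns indexed by $\mathcal{F}_{\omega(P,k),k}$ have norm exactly $1$, while each remaining column norm is at most $\sigma_{\max}(V)$. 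This gives $\|V\|_F^2/|\mathcal{C}|\leqslant |S|/|\mathcal{C}|+\sigma_{\max}^2(V)\bigl(1-|S|/|\mathcal{C}|\bigr)$, and substituting into the displayed inequality and simplifying yields $|S|/|\mathcal{C}|$ directly. So the column--norm observation you made is used, but the quantitative engine is the cited determinantal inequality rather than any block or spectral--interlacing argument.
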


\begin{proof}
To compute the determinant and bounds on the singular values of $V$ we can reorder the columns and rows of $V$ using a common permutation at will. For this reason for the purposes of this proof, we choose the following ordering. We order the columns/rows by first placing the elements of $\mathcal{F}_{\omega(P,k),k}$, then the elements of $\mathcal{F}_{\omega(P,k),k-1}$, and downwards all the way down to $\mathcal{F}_{\omega(P,k),0}.$ We choose the ordering between the elements of each of the $k+1$ groups arbitrarily.

From the definition of $V$ and part (3) of Lemma \ref{OrthoMulti} we have that $V_{\alpha,\beta}=0$ unless $\alpha_i \geqslant \beta_i $ for all $i=1,2,\ldots,d$. Hence, according to this choice of ordering makes $V$ an upper triangular matrix. To see this is let us consider $\alpha,\beta$ with $V_{\alpha,\beta} \not = 0$. We have proven that it necessarily holds $|\alpha| \geqslant |\beta|.$ Let us distinguish now two cases to proceed. If $|\alpha|=|\beta|$, then the ordering constraint implies that it should be true that $\alpha=\beta$. If $|\alpha|>|\beta|$, then as $\alpha \in \mathcal{F}_{\omega(P,k),|\alpha|}$ and $\beta \in \mathcal{F}_{\omega(P,k),|\beta|}$ by the definition of the ordering the column/row $\alpha$ is places earlier compared to the column/row $\beta$. Therefore the entry $(\alpha,\beta)$ is indeed placed above the diagonal, and $V$ is proven to be upper triangular. Finally, as $V_{\alpha,\alpha}=1$ for all $\alpha$, $V$ is an upper triangular matrix with diagonal elements equal to one. The first equality (\ref{det2}) follows.

The ordering implies that the first $|\mathcal{F}_{\omega(P,k),k}|$ columns of $V$ have one non-zero entry equal to one and the rest equal to zero. This follows from the fact that for any $\beta \in \mathcal{C}$ with $|\beta|=k$ the only $\alpha \in \mathcal{C}$ with $\alpha_i \geqslant \beta_i$ for all $i=1,2,\ldots,d$, must satisfy $\alpha=\beta$ as otherwise $|\alpha|$ should have been strictly larger than $|\beta|=k$. Hence, if $c_i \in \mathbb{R}^{\mathcal{C}}, i=1,2,\ldots,|\mathcal{C}|$ is an enumeration of the columns of $V$, it holds \begin{align}\label{cond:columns}\|c_i\|^2_2=1 \text{, for } i=1,\ldots,|\mathcal{F}_{\omega(P,k),k}|. \end{align}

We now consider two cases. If $\sigma_{\max}(V)=\sigma_{\min}(V)$, then all the singular values of $V$ are equal. Since $\mathrm{det}(V)=1$ we conclude from elementary linear algebra $\prod_{i=1}^{\mathcal{C}}\sigma_i(V)=1$. Combined with the observation that all singular values in this case are equal to each other, we conclude $\sigma_{\min}(V)=1$. In particular as $|\mathcal{F}_{\omega(P,k),k}| \leqslant |\mathcal{C}|,$ (\ref{singular2}) holds in this case.

We now consider the case where $\sigma_{\max}(V) \not =\sigma_{\min}(V)$. Then using \cite [Theorem 3.1. (a)]{hlavackova2010new} and that $\mathrm{det}(V)=1$ the following identity holds, 
\begin{align}\label{singular}
\sigma_{\min}^2(V) \geqslant \frac{\sigma^2_{\max}(V)-\frac{\|V\|^2_F}{|\mathcal{C}|}}{\sigma^2_{\max}(V)-1}, \end{align}where $\|V\|^2_F:=\sum_{i,j=1}^{|\mathcal{C}|}V^2_{i,j}.$ Using the max-min principle for singular values and $e_i,i=1,\ldots,|\mathcal{C}|$ the standard basis elements of $\mathbb{R}^{|\mathcal{C}|}$ we have \begin{align}\label{cond:inf}\sigma_{\max}(V)=\max_{x:\|x\|_2=1}\|Vx\|_2 \geqslant \max_{i=1,\ldots,|\mathcal{C}|} \|Ve_i\|_2= \max_{i=1,\ldots,|\mathcal{C}|} \|c_i\|_2 \end{align}

Now we have the following inequalities:
\begin{align}
\frac{\|V\|^2_F}{|\mathcal{C}|}&=\frac{1 }{|\mathcal{C}|}\sum_{i=1}^{|\mathcal{C}|} \|c_i\|^2_2 \notag \\
&=\frac{1 }{|\mathcal{C}|}\sum_{i=1}^{\sum_{i=0}^k|\mathcal{F}_{\omega(P,k),i}|} \|c_i\|^2_2, \quad\quad\left(\text{using } |\mathcal{C}|=  \notag \sum_{i=0}^k|\mathcal{F}_{\omega(P,k),i}| \notag \right)\\
& \leqslant \frac{1}{|\mathcal{C}|} \left(\sum_{i=1}^{|\mathcal{F}_{\omega(P,k),k}|} \|c_i\|^2_2+\sum_{i=1}^{\sum_{i=0}^{k-1} |\mathcal{F}_{\omega(P,k),i}|}\max_{i=1,\ldots,|\mathcal{C}|} \|c_i\|^2_2 \right) \notag \\
& \leqslant \frac{1}{|\mathcal{C}|}\left( |\mathcal{F}_{\omega(P,k),k}|+\sigma^2_{\max}(V) \sum_{i=0}^{k-1} |\mathcal{F}_{\omega(P,k),i}| \right), \quad\quad\left(\text{using (\ref{cond:columns}, \ref{cond:inf})}\right) \notag\\
&=\frac{|\mathcal{F}_{\omega(P,k),k}|}{|\mathcal{C}|}+\sigma^2_{\max}(V) \left(1-\frac{|\mathcal{F}_{\omega(P,k),k}|}{|\mathcal{C}|}\right),
\end{align}
which equivalently gives 
\begin{align}
\sigma^2_{\max}(V)- \frac{\|V\|^2_F}{|\mathcal{C}|}  \geqslant \frac{|\mathcal{F}_{\omega(P,k),k}|}{|\mathcal{C}|}\left(\sigma^2_{\max}(V)-1\right) \label{cond:ineqFrob}
\end{align}
Combining (\ref{singular}) with (\ref{cond:ineqFrob}) implies the first inequality of (\ref{singular2}). The second inequality of (\ref{singular2}) follows from the way $\mathcal{C}$ is defined and in particular \eqref{eq:fst-nonempty}.  This completes the proof of the second part of the lemma Lemma.

For the third part, notice that by a standard result by Schur \cite{schur1911bemerkungen,nikiforov2007revisiting} we have $$\sigma^2_{\max}(V) \leqslant \max_{i,j=1,\ldots,d}\|V^te_i\|_1\|Ve_j\|_1.$$ Using the crude bound that for all $i,j=1,\ldots,d$ we have $\max\{\|V^te_i\|_1, \|Ve_j\|_1\} \leqslant |\mathcal{C}| \|V\|_{\infty}$ yields (\ref{singular3}).


\end{proof}

\subsection{Proof of Theorem \ref{cond:main_2}}
We now proceed with the proof of the Theorem \ref{cond:main_2}.
\begin{proof}[Proof of Theorem \ref{cond:main_2}]

We start with two elementary counting arguments.
First notice that $\mathcal{C}$ is contained in the set of all $\alpha \in \mathbb{N}^d$ with $|\alpha| \leqslant k$. Hence, by standard counting arguments 
\begin{align}\label{count1}
    |\mathcal{C}| \leqslant \sum_{i=0}^k \binom{d+i-1}{i}.
    \end{align}

Now to prove the first part of the Theorem we first combine the second part of Lemma \ref{cond:sigmbound2}, the second part of Lemma \ref{cond:eigenfinal2} and  the fifth part of Lemma \ref{OrthoMulti} to conclude,
\begin{align}\label{step1}
    \lambda_{\min}(\Sigma) \geqslant \frac{1}{|\mathcal{C}|} \min\{\min_{i=0,1,\ldots,\omega(P,k)}\left(\frac{D_i}{D_{i-1}}\right)^{\frac{k}{2}},1\} 
\end{align}
Now 
Combining (\ref{count1},\ref{step1}) we conclude (\ref{minEgein2}) , that is the first part of the Theorem \ref{cond:main_2}.

For the second part we use the third part of Lemma \ref{cond:sigmbound2}, the third part of Lemma \ref{cond:eigenfinal2} and conclude
\begin{align}\label{step1_2}
    \lambda_{\max}(\Sigma) \leqslant \max_{\alpha \in \mathcal{C}}\E{X_{\alpha}T_{\alpha}(X)}^2 \left( \frac{\max_{\alpha,\beta \in \mathcal{C}} \E{X_{\alpha}T_{\beta}(X)}}{\min_{\alpha \in \mathcal{C}} \E{X_{\alpha}T_{\alpha}(X)} }|\mathcal{C}|\right)^2
\end{align}
Now using the fifth and sixth parts of Lemma \ref{OrthoMulti} we have 
\begin{align}\label{step2_2}
    \lambda_{\max}(\Sigma) \leqslant \max\{c^{k}_{2\omega(P,k)},1\}  \left( \frac{\max\{\max_{i=0,1,\ldots,\omega(P,k)}\left(\frac{D_i}{D_{i-1}}\right)^{k},1\}}{\min\{\min_{i=0,1,\ldots,\omega(P,k)}\left(\frac{D_i}{D_{i-1}}\right)^{k},1\} }\right)|\mathcal{C}|^2
\end{align}
Combining (\ref{step2_2},\ref{count1}) we conclude (\ref{maxEgein2}) , that is the second part of the Theorem \ref{cond:main_2}.

\end{proof}
\section{Proofs}\label{sec:pfs-main}
\subsection*{Auxiliary Results}
Our results are crucially based on controlling the spectrum of a certain random matrix, constructed using the tensorized data. The main tool that we use is the following concentration result by Vershynin \cite{vershynin2010introduction}, which we record herein verbatim for convenience. 
\begin{theorem}(\cite[Theorem~5.44]{vershynin2010introduction})\label{thm:vershy}
Let $A$ be an $N\times n$ matrix whose rows $A_i$ are independent random vectors in $\R^n$ with the common second moment matrix $\Sigma=\mathbb{E}[A_iA_i^T]$. Let $m$ be a number such that $\|A_i\|_2\leqslant \sqrt{m}$ almost surely for all $i$. Then, for every $t\geqslant 0$, the following inequality holds with probability at least $1-n\cdot \exp(-ct^2)$:
$$
\left\|\frac{1}{N} A^T A - \Sigma\right\|\leqslant \max\left(\|\Sigma\|^{1/2}\delta,\delta^2\right)\quad\text{where}\quad \delta=t\sqrt{m/N}.
$$
Here, $c>0$ is an absolute constant. 
\end{theorem}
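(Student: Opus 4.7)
The plan is to view this as a corollary of the non-commutative (matrix) Bernstein inequality applied to the independent centered symmetric random matrices
$$X_i \;:=\; \tfrac{1}{N}\bigl(A_i A_i^T - \Sigma\bigr), \qquad i=1,\dots,N,$$
so that $\tfrac{1}{N}A^TA - \Sigma = \sum_{i=1}^N X_i$. I would first record the almost-sure bound $\|\Sigma\|\leqslant m$, which follows from $\|\Sigma\| = \sup_{\|u\|=1}\mathbb{E}\langle A_i,u\rangle^2 \leqslant \mathbb{E}\|A_i\|_2^2 \leqslant m$, since $\|A_i\|_2\leqslant \sqrt m$ a.s. This in turn yields the uniform boundedness constant
$$K \;:=\; \max_i \|X_i\| \;\leqslant\; \tfrac{1}{N}\bigl(\|A_iA_i^T\| + \|\Sigma\|\bigr) \;=\; \tfrac{1}{N}\bigl(\|A_i\|_2^2 + \|\Sigma\|\bigr) \;\leqslant\; \tfrac{2m}{N}.$$

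Next I would estimate the matrix variance parameter. A direct computation gives
$$\mathbb{E}\bigl[(A_iA_i^T)^2\bigr] \;=\; \mathbb{E}\bigl[\|A_i\|_2^2\, A_iA_i^T\bigr] \;\preceq\; m\,\Sigma,$$
in the PSD order, where I used the a.s.\ bound $\|A_i\|_2^2\leqslant m$. Subtracting $\Sigma^2\succeq 0$ and summing over $i$ gives
$$\sigma^2 \;:=\; \Bigl\|\textstyle\sum_i \mathbb{E}[X_i^2]\Bigr\| \;\leqslant\; \tfrac{m\|\Sigma\|}{N}.$$
I would then invoke matrix Bernstein: for any $s\geqslant 0$,
$$\mathbb{P}\!\left(\Bigl\|\textstyle\sum_i X_i\Bigr\| \geqslant s\right) \;\leqslant\; 2n\cdot \exp\!\left(-\,\frac{s^2/2}{\sigma^2 + Ks/3}\right).$$

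The remaining step is to verify that plugging $s = \max(\|\Sigma\|^{1/2}\delta,\delta^2)$ with $\delta=t\sqrt{m/N}$ into the above bound produces an exponent $\Theta(t^2)$ in both regimes. In the \emph{variance-dominated} regime $\delta\leqslant \|\Sigma\|^{1/2}$ (so $s=\|\Sigma\|^{1/2}\delta$), one has $s^2/\sigma^2 = (\|\Sigma\|\delta^2)/(m\|\Sigma\|/N) = t^2$, and the variance term dominates the Bernstein denominator; in the \emph{boundedness-dominated} regime $\delta\geqslant \|\Sigma\|^{1/2}$ (so $s=\delta^2$), the linear term $Ks/3\asymp m\delta^2/N$ dominates and $s/K \asymp \delta^2N/m = t^2$. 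In both cases the exponent becomes $-ct^2$ for an absolute constant $c>0$, yielding the claimed probability $1-n\exp(-ct^2)$ after absorbing the factor $2$ into $c$.

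The main obstacle, and really the only delicate point, is the case analysis that matches the two-regime expression $\max(\|\Sigma\|^{1/2}\delta,\delta^2)$ to the Bernstein mixed tail $\sigma^2 + Ks/3$. This is a purely bookkeeping step, but it is what forces the specific form of the bound: the variance term drives the sub-Gaussian regime $s\lesssim \|\Sigma\|^{1/2}\delta$, while the boundedness term drives the sub-exponential regime $s\lesssim \delta^2$. Everything else—verifying boundedness, computing the variance, and independence of rows—is immediate from the hypotheses of the theorem, and the non-trivial analytic input is entirely supplied by the matrix Bernstein inequality, which itself is obtained via the Ahlswede--Winter / Tropp Laplace-transform technique for sums of independent Hermitian matrices.
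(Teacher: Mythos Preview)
The paper does not prove this theorem at all; it is quoted verbatim from \cite{vershynin2010introduction} as an auxiliary tool. Your matrix-Bernstein sketch is correct and is in fact essentially the argument Vershynin gives in the cited reference: the variance bound $\sigma^2\leqslant m\|\Sigma\|/N$ and the almost-sure bound $K\leqslant 2m/N$ are exactly what drive the two regimes $\|\Sigma\|^{1/2}\delta$ and $\delta^2$, and your case analysis matching these to the sub-Gaussian and sub-exponential parts of the Bernstein tail is the standard way to extract the stated form. The only cosmetic point is the factor $2$ in front of $n$ from matrix Bernstein, but as you note, for small $t$ the bound $1-n\exp(-ct^2)$ is vacuous anyway, so this is harmless.
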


Finally, some of our results will also make use of the following version of Hoeffding's inequality.
\begin{theorem}(\cite{hoeffding1994probability})\label{thm:vershy-2}
Let $X_1,\dots,X_N$ be independent centered random variables, where there exists a $K>0$ such  that $|X_i|\leqslant K$ almost surely, for every $i$. Then for every $a=(a_1,\dots,a_N)\in\R^N$ and every $t\geqslant 0$, we have
$$
\mathbb{P}\left(\left|\sum_{i=1}^N a_iX_i\right|\geqslant t\right)\leqslant e\cdot\exp\left(-\frac{ct^2}{K^2\|a\|_2^2}\right),
$$
where $c>0$ is an absolute constant. 
\end{theorem}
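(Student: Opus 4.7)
The plan is to use the classical Chernoff--Laplace transform approach. By considering the bound for $\sum_i a_i X_i$ and, separately, for $-\sum_i a_i X_i = \sum_i (-a_i)X_i$ (noting $\|-a\|_2 = \|a\|_2$ and the symmetric assumption $|-X_i| \leqslant K$), the two-sided bound will follow from a one-sided bound via the union bound; any resulting leading constant $2$ is absorbed into the allowed factor $e$.

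For the one-sided bound, I would first apply Markov's inequality to the exponential moment: for every $\lambda > 0$,
$$\mathbb{P}\left(\sum_{i=1}^N a_i X_i \geqslant t\right) \leqslant e^{-\lambda t}\, \mathbb{E}\!\left[\exp\!\left(\lambda \sum_{i=1}^N a_i X_i\right)\right] = e^{-\lambda t} \prod_{i=1}^N \mathbb{E}\!\left[e^{\lambda a_i X_i}\right],$$
where the final equality uses the independence of the $X_i$. The key estimate needed now is Hoeffding's lemma: if $Y$ is a zero-mean random variable with $|Y| \leqslant M$ almost surely, then $\mathbb{E}[e^{\lambda Y}] \leqslant \exp(\lambda^2 M^2/2)$ for every $\lambda \in \mathbb{R}$. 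This lemma is proved by writing $e^{\lambda y}$ as a convex combination of $e^{-\lambda M}$ and $e^{\lambda M}$ for $y \in [-M,M]$, taking expectations using $\mathbb{E}[Y]=0$, and then bounding the resulting log-moment-generating function by a second-order Taylor expansion around $\lambda=0$.

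Applying the lemma to $Y_i = a_i X_i$, which is centered and satisfies $|Y_i| \leqslant |a_i|K$, I get $\mathbb{E}[e^{\lambda a_i X_i}] \leqslant \exp(\lambda^2 a_i^2 K^2 /2)$. Multiplying across $i$ and substituting into the Chernoff bound gives
$$\mathbb{P}\left(\sum_{i=1}^N a_i X_i \geqslant t\right) \leqslant \exp\!\left(-\lambda t + \frac{\lambda^2 K^2 \|a\|_2^2}{2}\right).$$
Optimizing over $\lambda > 0$ with the choice $\lambda^* = t/(K^2 \|a\|_2^2)$ produces the one-sided bound $\exp(-t^2/(2K^2 \|a\|_2^2))$. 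Combining with the symmetric lower-tail bound and bounding $2 \leqslant e$, I recover the stated inequality with absolute constant $c = 1/2$.

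The only non-routine ingredient is Hoeffding's lemma itself; the rest is the standard Chernoff optimization. The main subtlety to watch is ensuring the bound on $\mathbb{E}[e^{\lambda Y}]$ holds uniformly in $\lambda$ (including large $\lambda$), which is why one must use convexity of $e^{\lambda y}$ on $[-M,M]$ rather than a naive Taylor bound at the origin; once this is in hand, optimizing $\lambda$ over $(0, \infty)$ is straightforward and yields the claimed sub-Gaussian tail.
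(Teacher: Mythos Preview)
Your argument is correct and is exactly the standard Chernoff--Hoeffding derivation, yielding the stated bound with $c=1/2$. Note that the paper does not actually prove this theorem: it is recorded as an auxiliary result quoted from \cite{hoeffding1994probability} and used as a black box, so there is no ``paper's own proof'' to compare against---your proposal simply supplies the classical proof that the paper omits.
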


\subsection{Proof of Proposition \ref{prop:activation-admissible}}\label{sec:pf-prop:activation-admissible}
\begin{proof}
Recall that ${\rm ReLU}(x)=\max(x,0)=(x+|x|)/2$.  Jackson's theorem \cite{newman1964rational,goel2016reliably} states that the absolute value function $|x|$ has order of approximation $O(1/\epsilon)$. It then follows that, $\varphi_{\rm ReLU}(\epsilon) = O(1/\epsilon)$. Clearly, $\sigma(\cdot)$ is $1-$Lipschitz. Finally, as shown in \cite[Lemma~2.12]{goel2016reliably} for every $\epsilon>0$ there exists a $P(x)\in S(\varphi_\sigma(\epsilon),\sigma,\epsilon)$ such that $P([-1,1])\subseteq [0,1]$.

For the sigmoid activation, $\sigma(x)=\exp(x)/(1+\exp(x))$, the approximation order is known to be $\varphi_{\sigma}(\epsilon) = O(\log(1/\epsilon))$ (see \cite{goel2017learning} and references therein). Furthermore, mean value theorem yields that $\sigma(\cdot)$ is $1-$Lipschitz: note that, for any $x<y$,
$$
\sigma'(x) = \frac{e^x}{(1+e^x)^2}\leqslant 1\Rightarrow |\sigma(x)-\sigma(y)|\leqslant \sup_{z\in[x,y]}|\sigma'(z)|\cdot |x-y|\leqslant |x-y|.
$$
Moreover, any $P(x)\in S(\varphi_\sigma(\epsilon),\sigma,\epsilon)$ also enjoys $P([-1,1])\subseteq [0,1]$ provided $\epsilon\leqslant \sigma(-1)= 1/(1+e)$: let $P(x)$ be such that $\sup_{x\in[-1,1]}|\sigma(x)-P(x)|\leqslant \epsilon$. Then,
$0\leqslant \sigma(-1)-\epsilon\leqslant P(x)\leqslant \sigma(1)+\epsilon\leqslant 1$, for every $x\in[-1,1]$, using the observation that $\sigma(1)+\sigma(-1)=1$.
\end{proof}

\subsection{Proof of Theorem \ref{thm:the-main-for-poly}}\label{sec:pf-main-poly-act} 
This section is devoted to the proof of Theorem \ref{thm:the-main-for-poly}. 
We first record the following auxiliary result.
\begin{theorem}\label{thm:auxiliary}
Let $p\in\mathbb{Z}^+$, and $P(x):\R^p\to R$ be a polynomial. Then, either $P(x)=0$ identically, or the set $\{x\in\R^p: P(x)= 0 \}$ has (Lebesgue) measure $0$. 
\end{theorem}
Namely, a polynomial of real variables is either identically zero, or almost surely non-zero. This fact is an essentially folklore result. See \cite{caron2005zero}  for  a proof.

The next result establishes that the input-output relationship for a polynomial network is essentially an instance of a (noiseless) regression problem. 
\begin{lemma}\label{lem:poly2}
Let $X \in S^d$ for some (possible finite) input $S \subseteq \mathbb{R}$, and let the activation function be a polynomial with degree $k$. Let $\omega(S,k^L)=\min\{|S|-1, k^{L}\},$ per Equation \eqref{eq:omega-Pt}. Then there exists a sequence of vectors $\mathcal{T}^{(t)} \in \mathbb{R}^{|\mathcal{F}_{\omega(S,k^L),t}| }$, $t\in[k^L]$, such that for every $X\in S^d$
\begin{equation}\label{eq:tensor-3New22}
f(\mathcal{W}^*,X) = \sum_{t=0}^{k^L}\ip{\Xi^{(t)}(X)}{\mathcal{T}^{(t)}}.
\end{equation}
\end{lemma}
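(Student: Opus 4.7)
The plan is to prove the lemma in three stages: first establish that $f(\mathcal{W}^*,X)$ is a polynomial in $X_1,\ldots,X_d$ of total degree at most $k^L$, then use the finiteness of $S$ (when $|S|<\infty$) to reduce every exponent $\alpha_i$ to be at most $|S|-1$, and finally collect terms by total degree to match the prescribed form.

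For the first step, I would induct on the depth $p\in\{1,2,\ldots,L\}$ to show that the output of every neuron at layer $p$ is a polynomial in $X_1,\ldots,X_d$ of total degree at most $k^p$. The base case $p=1$ is immediate: the pre-activation at layer $1$ is the linear form $\langle W_{1,j}^*, X\rangle$ (degree $1$), and applying $\sigma$ of degree $k$ yields a polynomial of degree $k$. The inductive step uses the fact that affine combinations of polynomials of degree at most $k^{p-1}$ still have degree at most $k^{p-1}$, and composition with $\sigma$ multiplies the degree by at most $k$. Taking the inner product with $a^*$ at the top layer then yields a polynomial of total degree at most $k^L$, so we can write
\begin{equation*}
f(\mathcal{W}^*,X)=\sum_{0\leqslant |\alpha|\leqslant k^L} c_\alpha \prod_{i=1}^d X_i^{\alpha_i},
\end{equation*}
for suitable real coefficients $c_\alpha$, valid as an identity on all of $\mathbb{R}^d$.

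For the second step, I would exploit the restriction $X\in S^d$. If $|S|<\infty$, write $S=\{s_1,\ldots,s_{|S|}\}$ and observe that the univariate annihilator $Q(u)\triangleq\prod_{j=1}^{|S|}(u-s_j)$ vanishes on $S$. Hence the monic polynomial $Q$ of degree $|S|$ yields, for every integer $a\geqslant |S|$ and via iterated polynomial division by $Q$, an explicit polynomial $R_a(u)$ of degree at most $|S|-1$ satisfying $u^a=R_a(u)$ for all $u\in S$. Applying this coordinate-wise to every monomial $\prod_i X_i^{\alpha_i}$ in the expansion above, each factor $X_i^{\alpha_i}$ with $\alpha_i\geqslant |S|$ is replaced by a polynomial in $X_i$ of degree at most $|S|-1$, and this replacement does not increase the total degree of the resulting expansion. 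Consequently, for all $X\in S^d$ we can rewrite
\begin{equation*}
f(\mathcal{W}^*,X)=\sum_{0\leqslant |\alpha|\leqslant k^L,\ \alpha_i\leqslant |S|-1\ \forall i} c'_\alpha \prod_{i=1}^d X_i^{\alpha_i},
\end{equation*}
for some new coefficients $c'_\alpha\in\mathbb{R}$. When $|S|=\infty$ no reduction is needed, and the constraint $\alpha_i\leqslant \omega(S,k^L)=k^L$ is already implied by $|\alpha|\leqslant k^L$.

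For the final step, I would simply regroup the monomials above by their total degree $t=|\alpha|$. Since $\alpha_i\leqslant \min\{|S|-1,k^L\}=\omega(S,k^L)$ and $|\alpha|=t$, each $\alpha$ in the sum lies in $\mathcal{F}_{\omega(S,k^L),t}$, so setting $\mathcal{T}^{(t)}_\alpha\triangleq c'_\alpha$ for $\alpha\in\mathcal{F}_{\omega(S,k^L),t}$ and recalling the definition \eqref{eq:tensor-Dfnksi} of $\Xi^{(t)}(X)$ yields the claimed identity \eqref{eq:tensor-3New22}. The only point requiring any care is the consistency of the index set in \eqref{eq:tensor-3New22}: one must verify that $\mathcal{F}_{\omega(S,t),t}=\mathcal{F}_{\omega(S,k^L),t}$ for every $t\leqslant k^L$, which follows immediately from the elementary observation that $\mathcal{F}_{s,t}$ is unchanged whenever $s\geqslant t$. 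None of the steps presents a genuine obstacle; the only mildly delicate aspect is tracking the degree bound through the finite-$S$ reduction, and this is handled by the univariate division argument above.
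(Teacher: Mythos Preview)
Your proposal is correct and follows essentially the same route as the paper: write $f(\mathcal{W}^*,X)$ as a degree-$k^L$ polynomial, reduce each exponent below $|S|$ on $S^d$, and regroup by total degree. The only cosmetic difference is that the paper performs the exponent reduction via invertibility of the Vandermonde matrix on $S$ rather than via division by the annihilator $Q(u)=\prod_{s\in S}(u-s)$; these are equivalent, and you are in fact slightly more careful than the paper in checking the consistency $\mathcal{F}_{\omega(S,t),t}=\mathcal{F}_{\omega(S,k^L),t}$ for $t\leqslant k^L$.
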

\begin{proof}
Observe that $f(\mathcal{W}^*,X)$ is a polynomial of degree $k^L$. Thus, there exists $\widetilde{\mathcal{T}}^{(t)}_{i_1,\dots,i_t}$, $0\leqslant t\leqslant k^L; 1\leqslant i_1\leqslant \cdots\leqslant i_t\leqslant d$ such that 
\begin{equation}\label{eq:tilde-T}
f(\mathcal{W}^*,X)  = \sum_{t=0}^{k^L}\sum_{1\leqslant i_1\leqslant \cdots\leqslant i_t\leqslant d}X_{i_1}\cdots X_{i_t} \widetilde{\mathcal{T}}^{(t)}_{i_1,\dots,i_t},\quad\text{for all}\quad X\in S^d.
\end{equation}
Note  that
$$
X_{i_1}\cdots X_{i_t} = X_1^{\alpha_1}\cdots X_d^{\alpha_d},
$$
where $(\alpha_1,\dots,\alpha_d)$ is such that $\sum_{j=1}^d \alpha_j=t$, and $\alpha_j=|\{s\in[t]:i_s=j\}|$.

With this, we now deduce there exists 
$$
\widehat{\mathcal{T}}^{(t)}_{\alpha_1,\dots\alpha_d},\quad\text{with}\quad \sum_{j=1}^d \alpha_j=t, \alpha_j\in\mathbb{Z}_{\geqslant 0};\quad\text{and}\quad 0\leqslant t\leqslant k^L,
$$
such that
\begin{equation}\label{eq:hat-T} 
f(\mathcal{W}^*,X) = \sum_{t=0}^{k^L}\sum_{(\alpha_j: j\in[d]): \sum_{j=1}^d\alpha_j =t} \widehat{T}^{(t)}_{\alpha_1,\dots,\alpha_d}X_1^{\alpha_1}\cdots X_d^{\alpha_d},\quad\text{for all}\quad X\in S^d.
\end{equation}
Note that the inner sum above runs over $(\alpha_1,\dots,\alpha_d)\in\mathcal{F}_{t,t}$.

Now, recall that $\omega(S,k^L)=\min\{k^L,|S|-1\}$. In particular, in the case $|S|-1\leqslant k^L$, we now establish that the monomial $X_i^{\alpha_i}$ can be represented as a linear combination of $1,X_i,\dots,X_i^{\omega(S,k^L)}$, for every $\alpha_i\geqslant \omega(S,k^L)+1$. For this, fix such an $\alpha_i$; and let the support of $X$ be $S=\{x_1,\dots,x_{|S|}\}$. We claim there exists $\theta_0,\dots,\theta_{|S|-1}$ such that
$$
x^{\alpha_i} = \sum_{j=0}^{|S|-1}\theta_j x^j, \quad\text{for all}\quad x\in S.
$$
Such a vector $(\theta_0,\dots,\theta_{|S|-1})$ of coefficients indeed exists, since the Vandermonde matrix
$$
\begin{bmatrix}
1 & x_1 & \cdots & x_1^{|S|-1} \\1 & x_2 & \cdots & x_2^{|S|-1}  \\ \vdots &\vdots & \ddots &\vdots \\ 1 & x_{|S|} & \cdots & x_{|S|}^{|S|-1} 
\end{bmatrix}
$$
is invertible. 

Therefore, it follows that the monomial $X_i^{\alpha_i}$ for any $\alpha_i\geqslant \omega(S,k^L)+1$ can be represented as a linear combination of $1,X_i,\dots,X_i^{\omega(S,k^L)}$. Hence, we restrict ourselves further to the indices $(\alpha_1,\dots,\alpha_d)$ with $0\leqslant \alpha_j \leqslant \omega(S,k^L)$. We deduce that $0\leqslant t\leqslant k^L$ there exists $\mathcal{T}^{(t)}_{\alpha_1,\dots,\alpha_d}$ with $\sum_{j=1}^d\alpha_j =t$, $0\leqslant \alpha_j\leqslant \omega(S,k^L)$ (that is, $(\alpha_1,\dots,\alpha_d)\in\mathcal{F}_{\omega(S,k^L),t}$), such that 
$$
f(\mathcal{W}^*,X)=\sum_{t=0}^{k^L} \sum_{\alpha=(\alpha_j:j\in[d])\in\mathcal{F}_{\omega(S,k^L),t}} \mathcal{T}^{(t)}_{\alpha_1,\dots,\alpha_d} X_1^{\alpha_1}\cdots X_d^{\alpha_d} = \sum_{t=0}^{k^L}  \inner{\Xi^{(t)}\left( X \right),\mathcal{T}^{(t)}},
$$
for all $X\in S^d$, where for each $0\leqslant t\leqslant k^L$, 
\[
\inner{\Xi^{(t)}\left( X \right),\mathcal{T}^{(t)}} = \sum_{\alpha\in\mathcal{F}_{\omega(S,k^L),t}} \Xi_\alpha^{(t)}(X) \mathcal{T}^{(t)}_\alpha.
\]\qedhere
\end{proof}
Having established Lemma \ref{lem:poly2}, we now proceed with the proof of Theorem \ref{thm:the-main-for-poly}.
\begin{proof}{(of Theorem \ref{thm:the-main-for-poly})}
To set the stage, recall that $N$ is the number of samples available, $k$ is the degree of activation (at each node), and $L$ is the depth. Furthermore, for the rest of the proof we denote by $S=\mathrm{Support}(P)$ and in particular by definition all realizations of the input satisfy $X \in S^d$ and furthermore, per equations \eqref{eq:omega-Pt}, \eqref{eq:omega-Pt-P} it holds for any $T \in \mathbb{N},$ $\omega(S,T)=\omega(P,T).$

We now proceed with the proof. First, by the Lemma \ref{lem:poly2} since $X \in S^d$ such a neural network computes the function:
$$
Y_i=\sum_{t=0}^{k^L}\sum_{\alpha=(\alpha_j:j\in[d])\in\mathcal{F}_{\omega(S,k^L),t}} \Xi_\alpha^{(i,t)}\mathcal{T}_\alpha^{(t)} \quad\text{where}\quad \Xi_\alpha^{(i,t)}=\prod_{j=1}^d \left(X_j^{(i)}\right)^{\alpha_j}. 
$$
For each $i\in[N]$, define now the vectors
$$
\Xi^{(i)} = \left(\Xi^{(i,t)}_\alpha : \alpha\in\mathcal{F}_{\omega(S,k^L),t},0\leqslant t\leqslant k^L\right)\in\R^\ell,
$$
where 
$$
\ell =\sum_{t=0}^{k^L}\left|\mathcal{F}_{\omega(S,k^L),t}\right|.
$$
We now show,
\begin{equation}\label{eq:card-ell}
\ell\leqslant d^{2k^L}.
\end{equation}
Observe that an elementary counting argument  reveals that for any $t$,
 $|\mathcal{F}_{\omega(S,k^L),t}|\leqslant \binom{d+t-1}{t}$; and therefore \begin{equation}\label{eq:ell-upbd}
 \ell \leqslant 1+\sum_{t=1}^{k^L} \binom{d+t-1}{t}.
\end{equation}
Now, observe that $\binom{d+t-1}{t}$ is the cardinality of the set, $\{(i_1,\dots,i_t):1\leqslant i_1\leqslant \cdots \leqslant i_t\leqslant d\}$, and is trivially upper bounded by $d^t$. Hence, \begin{equation}\label{eq:ell-upbd2}
    \ell \leqslant \sum_{t=0}^{k^L}\binom{d+t-1}{t}\leqslant 1+\sum_{t=1}^{k^L} d^t \leqslant  1+k^L d^{k^L}\leqslant d^{2k^L}.
\end{equation}

Let now $\Xi\in\R^{N\times \ell }$ be the matrix whose $i^{th}$ row is $\Xi^{(i)}$. In the context of regression, this is the ``measurement matrix". Now, let $Y=(Y_1,\dots,Y_N)^T \in\R^N$, and
$$
\mathcal{T} = \left(\mathcal{T}_\alpha^{(t)}:\alpha\in\mathcal{F}_{\omega(S,k^L),t},0\leqslant t\leqslant k^L\right).
$$
In particular, the training data $(X^{(i)},Y_i)\in\R^d\times \R$, $1\leqslant i\leqslant N$ obeys the equation:
\begin{equation}\label{eq:data-obeys-reg}
Y=\Xi \mathcal{T}. 
\end{equation}
\subsubsection*{Proof of Part ${\rm (a)}$}
We now study Equation (\ref{eq:data-obeys-reg}):
 Observe that, 
    $$
    \Xi^T Y = (\Xi^T \Xi)\mathcal{T}. 
    $$
    Suppose now that, $N$ enjoys (\ref{eq:sample-complex-N1}),  that is,
    \begin{equation}\label{eq:n-main-poly-thm}
    N>C^4\left(P,k^L\right) d^{24 k^L},
        \end{equation}
    where $C(\cdot,\cdot)$ is the constant defined in Corollary \ref{coro:number}.  Define the expected covariance matrix
    $$
     \Sigma =\mathbb{E}\left[\Xi^{(1)}\left(\Xi^{(1)}\right)^T\right]\in\mathbb{R}^{\ell \times \ell}.
    $$
    Now, note that $\Xi^{(1)}$ is identical with the vector of multiplicities $\mathcal{X}$ of Corollary \ref{coro:number} where $k$ in Corollary \ref{coro:number} corresponds $k^L$ in here. In particular, Corollary \ref{coro:number} yields that
    $$
    \left(\frac{\sigma_{\max}(\Sigma)}{\sigma_{\min}(\Sigma)}\right)^4<C^4(P,k^L)d^{12k^L}.
    $$
    This, together with Equation (\ref{eq:card-ell}), then yields
    $$
    N>\ell ^2 \left(\frac{\sigma_{\max}(\Sigma)}{\sigma_{\min}(\Sigma)}\right)^4,
    $$

   We now establish that with high probability, $\Xi^T \Xi$ is invertible. The main tool is Theorem \ref{thm:vershy} for the concentration of spectrum of random matrices with i.i.d.\ non-isotropic rows. To apply this result, we prescribe the corresponding parameters as follows:
    \begin{center}
 \begin{tabular}{||c c||} 
 \hline
 Parameter  & Value \\ [0.5ex] 
 \hline\hline
 $m$  & $\ell $ \\
 \hline
 $t$  & $N^{1/8}$ \\
 \hline
 $\delta$ & $N^{-3/8}\ell ^{1/2}$ \\
 \hline
 $\gamma$ & $\max(\|\Sigma\|^{1/2}\delta,\delta^2)$
 \\
 \hline 
\end{tabular}
\end{center}
We now focus on this parameter assignment in more detail. First, note that since the data satisfies $X_j\in[-1,1]$ for every $j\in[d]$, and $\Sigma\in\R^{\ell\times \ell}$, $m$ can indeed be taken to be $\ell$.

We next show $\|\Sigma\|^{1/2}\delta>\delta^2$, and consequently, $\gamma =\max(\|\Sigma\|^{1/2}\delta,\delta^2)= \|\Sigma\|^{1/2}\delta$. Note that this is equivalent to establishing $\|\Sigma\|^{1/2}>\delta=N^{-3/8}\ell^{1/2}$, namely, it is equivalent to establishing
$$
N>\ell^{4/3}\|\Sigma\|^{-4/3}.
$$
In what follows below, recall that the constant $C(P,\cdot)$ is from Corollary \ref{coro:number}, which is defined through two constants $f(P,\cdot)$ (defined in \ref{dpk}) and $c(P,\cdot)$ (defined in \ref{cpk}). 

 We now use $\|\Sigma\|=\sigma_{\max}(\Sigma)\geqslant \sigma_{\min}(\Sigma)$,  the bound (\ref{minEgein2}), and the bound (\ref{eq:ell-upbd2}) to arrive at:
\begin{equation}\label{eq:aux-sigmamin-bd}
\sigma_{\min}(\Sigma)\geqslant c(P,k^L)\frac{1}{\sum_{i=0}^{k^L}\binom{d+i-1}{i}}\geqslant c(P,k^L)d^{-2k^L}.
\end{equation}
Consequently,
$$
\|\Sigma\|^{-4/3}\leqslant \sigma_{\min}(\Sigma)^{-4/3}\leqslant c(P,k^L)^{-4/3}d^{\frac{8}{3}k^L}.
$$
This, together with (\ref{eq:card-ell}) then yields
$$
\ell^{4/3}\|\Sigma\|^{-4/3}\leqslant c(P,k^L)^{-4/3} d^{\frac{16}{3}k^L}.
$$
Following Remark \ref{remark:on-order-between-c-C-f}, we have 
 $C(P,k^L)\geqslant c(P,k^L)^{-1}$. Therefore,
$$
\ell^{4/3}\|\Sigma\|^{-4/3}\leqslant C(P,k^L)^{4/3}d^{\frac{16}{3}k^L}.
$$
 We further have $C(P,k^L)\geqslant 1$ by Remark \ref{remark:on-order-between-c-C-f}. Since $N$ obeys (\ref{eq:n-main-poly-thm}), we deduce immediately that $N>\ell^{4/3}\|\Sigma\|^{-4/3}$.  Thus, $\gamma=\max(\|\Sigma\|^{1/2}\delta,\delta^2)=\|\Sigma\|^{1/2}\delta$.

We now claim
$$
\sigma_{\min}(\Sigma)>\gamma=\|\Sigma\|^{1/2}\delta=\|\Sigma\|^{1/2}N^{-3/8}\ell^{1/2}.
$$
This is equivalent to establishing
$$
N>\frac{\ell^{4/3}\|\Sigma\|^{4/3}}{\sigma_{\min}(\Sigma)^{8/3}}.
$$
Note that Equations (\ref{maxEgein2}) and (\ref{eq:ell-upbd2}) together yield
$$
\|\Sigma\|\leqslant f(P,k^L)\sum_{i=0}^{k^L}\binom{d+i-1}{i}\leqslant f(P,k^L)d^{2k^L}.
$$
This, together with Equations (\ref{eq:ell-upbd2}) and (\ref{eq:aux-sigmamin-bd}), then yield
$$
\frac{\ell^{4/3}\|\Sigma\|^{4/3}}{\sigma_{\min}(\Sigma)^{8/3}}<d^{\frac83 k^L} \frac{f(P,k^L)^{4/3}d^{\frac83 k^L}}{c(P,k^L)^{8/3}d^{-\frac{16}{3} k^L}} = \frac{f(P,k^L)^{4/3}}{c(P,k^L)^{8/3}}d^{\frac{32}{3}k^L}.
$$
Using now $\max(f(P,k^L),c^{-1}(P,k^L))\leqslant C(P,k^L)$ as noted in Remark \ref{remark:on-order-between-c-C-f}, we further obtain
$$
\frac{\ell^{4/3}\|\Sigma\|^{4/3}}{\sigma_{\min}(\Sigma)^{8/3}}<C(P,k^L)^4 d^{\frac{32}{3}k^L}.
$$
Since $C(P,k^L)\geqslant 1$ and $N$ enjoys (\ref{eq:n-main-poly-thm}), we deduce immediately that
$$
N> \frac{\ell^{4/3}\|\Sigma\|^{4/3}}{\sigma_{\min}(\Sigma)^{8/3}},
$$
and consequently $\sigma_{\min}(\Sigma)>\gamma$. Equipped with these facts, we now apply Theorem \ref{thm:vershy}:   
with probability at least $1-\ell  e^{-cN^{1/4}}=1-\exp(-c'N^{1/4})$ (where $c>c'>0$ are two absolute constants), it holds that:
    $$
    \left\|\frac1N \Xi^T \Xi - \Sigma\right\|\leqslant \gamma.
    $$
     Now, 
    $$
     \left\|\frac1N \Xi^T \Xi - \Sigma\right\|\leqslant \gamma \iff \forall v\in\R^{\ell}, \left|\|\frac{1}{\sqrt{N}}\Xi v\|_2^2 - v^T \Sigma v \right|\leqslant \gamma\|v\|_2^2,
    $$
    which implies, for every $v$ on the sphere $\mathbb{S}^{\ell-1} =\{v\in\R^{\ell}:\|v\|_2=1\}$,
    $$
    \frac1N \|\Xi v\|_2^2 \geqslant v^T \Sigma v-\gamma \Rightarrow \frac1N \inf_{v:\|v\|=1}\|\Xi v\|_2^2 \geqslant \inf_{v:\|v\|=1}v^T \Sigma v - \gamma,
    $$
    which, together with the Courant-Fischer variational characterization of the smallest singular value \cite{horn2012matrix}, implies
    \begin{equation}
        \sigma_{\min}(\Xi) \geqslant N(\sigma_{\min}(\Sigma)-\gamma)>0,
    \end{equation}
    with probability at least $1-\exp(-c'N^{1/4})$. In particular since we have established already $\sigma_{\min}(\Sigma)>\gamma$, it turns out with probability at least $1-\exp(-c'N^{1/4})$,  $\Xi^T \Xi$ is invertible.

    
    Observe now that
    $$
    \mathbb{P}\left(\exists \widehat{\mathcal{T}}\neq \mathcal{T}: \Xi\widehat{\mathcal{T}}=\Xi \mathcal{T}\middle\vert{\rm det}(\Xi^T \Xi)\neq 0\right) =0,
    $$
    where $\mathcal{T}$ is defined in Equation (\ref{eq:data-obeys-reg}). Now, let
   $$
    \widehat{\mathcal{T}} = (\Xi^T \Xi)^{-1}\Xi^T Y,
    $$
    be the ``output" of the T-OLS algorithm (Algorithm \ref{algo:tensor-non-poly}). Thus with probability at least $1-\exp(-c'N^{1/4})$,
    $$
    \widehat{\mathcal{T}}=
    \mathcal{T}.
    $$
    Now, let $x\in\R^d$ be arbitrary, and define the corresponding ``tensor"
    \begin{equation}\label{eq:datanin-tensoru}
    \bar{x} \triangleq  \left(x_\alpha:\alpha\in\mathcal{F}_{\omega(P,k),t},0\leqslant t\leqslant k^L\right)\in\R^\ell.
      \end{equation}
    We then deduce that with probability at least $1-\exp(-c'N^{1/4})$ over the randomness of $\{X^{(i)}:1\leqslant i\leqslant N\}$: it holds
    $$
    \ip{\widehat{\mathcal{T}}}{\bar{x}} = f(\mathcal{W}^*,x),
    $$
    for every $x\in\R^d$, where  $f(\mathcal{W}^*,x)$ is the function computed by the  network with polynomial activation, per Equation (\ref{eq:deepnn-main}).
    
    Since $\ip{\widehat{\mathcal{T}}}{\bar{x}}$ is equal to $\widehat{h}(x)$, the output of the T-OLS algorithm, the proof of Part ${\rm (a)}$ of Theorem \ref{thm:the-main-for-poly} is complete.  

\subsubsection*{Proof of Part ${\rm (b)}$}
Suppose now, that the data $X\in\R^d$ has jointly continuous coordinates.

We claim that provided $N$ satisfies (\ref{eq:sample-complex-N1-hat}), that is as soon as $N=\ell$, the matrix $\Xi\in\R^{N\times \ell}$ is invertible with probability one, where  
$$
\ell = \sum_{t=0}^{k^L}|\mathcal{F}_{t,t}| =\sum_{t=0}^{k^L}\binom{d+t-1}{t}.
$$
\begin{proposition}\label{prop:Vandermonde:invertible} 
With probability one over $X^{(i)}$, $1\leqslant i\leqslant N$,
$\Xi$ is invertible.
\end{proposition}
\begin{proof}{(of Proposition \ref{prop:Vandermonde:invertible})}
We use the auxiliary result, Theorem \ref{thm:auxiliary}. Note the following consequence of this result: let $X=(X_i:i\in[p])\in\R^p$ be a jointly continuous random vector with density $f$; and $P:\R^p\to \R$ is a polynomial where there exists an $(x_1',\dots,x_p')\in\R^p$ with $P(x_1',\dots,x_p')\neq 0$. Define $S=\{(x_1,\dots,x_p)\in\R^p:P(x_1,\dots,x_p)=0\}$. Then, 
$$
\mathbb{P}(P(X)=0)=\int_{S\subset \R^p}f(x_1,\dots,x_p)d\lambda(x_1,\cdots,x_p)=0,
$$
where $\lambda$ is the ($p$-dimensional) Lebesgue measure. In particular, for any jointly continuous $X\in\R^p$, and any non-vanishing polynomial $P$, $P(X)\neq 0$ almost everywhere.

Equipped with this, it then suffices to show ${\rm det}(\Xi)$ is not identically zero, when viewed as a polynomial in $X^{(i)}_j$, $i\in[N]$ and $j\in[d]$. We now prove this by providing a deterministic construction (for data $X^{(i)}$, $1\leqslant i\leqslant N$, and thus for $\Xi$) under which ${\rm det}(\Xi)\neq 0$. Let $p_1<\cdots<p_d$ be distinct primes. For each $1\leqslant i\leqslant N$, define
$$
X^{(i)}=(p_1^{i-1},p_2^{i-1},\dots,p_d^{i-1})\in\R^d.
$$
In particular, $X^{(1)}=(1,1,\dots,1)\in\R^d$, vector of all ones. This yields the first row $\mathcal{R}_1\in\R^N$ of $\mathcal{M}$ is the vector of all ones. Now, let $\mathcal{R}_2=(z_1,\dots,z_N)\in\R^N$ be the second row of $\mathcal{M}$, with $z_1=1$ (since a multiplicities set starts with $(0,0,\dots,0)$, corresponding to the term $X_1^0 X_2^0 \cdots X_d^0$). Notice that $z_i\in\mathbb{Z}^+$ for each $i\in[N]$. Moreover, for each $i\neq j$, $z_i\neq z_j$, from the fundamental theorem of arithmetic asserting the uniqueness of prime factorization. Equipped with this, $\Xi$ then becomes
$$
\Xi = \begin{bmatrix}1 & 1 & \cdots & 1 \\ z_1 & z_2 & \cdots & z_N \\ z_1^2 & z_2^2 & \cdots & z_N^2 \\ \vdots & \vdots & \vdots & \vdots \\ z_1^{N-1} & z_2^{N-1} & \cdots & z_N^{N-1} \end{bmatrix}.
$$
This is a Vandermonde matrix with ${\rm det}(\Xi)=\prod_{1\leqslant i<j\leqslant N}(z_i-z_j)\neq 0$. This completes the proof of the fact that $\mathbb{P}(\Xi)\neq 0)=1$.
\end{proof}
Having established that for $N=\ell$, $\Xi$ is invertible with probability one, we then set $\widehat{\mathcal{T}}=\Xi^{-1} Y$, and deduce, in a similar manner to the proof of Part ${\rm (a)}$, that with probability one,
$$
\mathcal{T}=\widehat{\mathcal{T}}.
$$
Now, let $x\in\R^d$ be any arbitrary data, and $\bar{x}$ be its ``tensorized" version as per (\ref{eq:datanin-tensoru}). We then obtain with probability one over the randomness in $\{X^{(i)}:1\leqslant i\leqslant N\}$, it holds
$$
\ip{\widehat{\mathcal{T}}}{\bar{x}}=f(\mathcal{W}^*,x).
$$
Finally, since $\ip{\widehat{\mathcal{T}}}{\bar{x}}$ is precisely the output $\widehat{h}(x)$ of the T-OLS algorithm, we conclude the proof of the Part ${\rm (b)}$ of Theorem \ref{thm:the-main-for-poly}.
\end{proof}
\subsection{Proof of Theorem \ref{thm:the-main}}\label{sec:pf-thm-main}
In this section, we establish Theorem \ref{thm:the-main}. We achieve this through the help of several auxiliary results, which may be of independent interest.  



We first establish that any such neural network with an admissible activation function can be approximated uniformly by a polynomial network, of the same depth and width.
\begin{theorem}\label{thm:deep-approx}
Consider a neural network of width $m^*$ and depth $L$, consisting of admissible activations $\sigma(\cdot)$ and planted weights $\mathcal{W}^*$; which for each $x\in\R^d$ computes $f(\mathcal{W}^*,x)$.  
Suppose $\|a^*\|_{\ell_1}\leqslant 1$, and $\|W_{k,j}^*\|_{\ell_1}\leqslant 1$, where $W_{k,j}^*$ is the $j^{th}$ row of $W_k^*$. Then, for every $\epsilon>0$ there exists a polynomial $P(\cdot)$, of degree $\varphi_\sigma(\epsilon/L)$, such that the following network
$$
\widehat{f}(\mathcal{W}^*,x) = (a^*)^T P(W_L^* P(W_{L-1}^*\cdots P(W_1^* X))\cdots),
$$
satisfies
$$
\sup_{x\in[-1,1]^d}|f(\mathcal{W}^*,x)-\widehat{f}(\mathcal{W}^*,x)|\leqslant \epsilon.
$$
\end{theorem}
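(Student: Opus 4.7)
The plan is to induct on the layer index, replacing $\sigma$ with a polynomial $P$ layer by layer and bounding how the error propagates through the network. By admissibility of $\sigma$, I can fix a polynomial $P$ of degree $\varphi_\sigma(\epsilon/L)$ that satisfies $\sup_{x\in[-1,1]}|P(x)-\sigma(x)|\leqslant \epsilon/L$ and $P([-1,1])\subseteq [0,1]$. The polynomial network $\widehat f$ in the statement is then defined by using this same $P$ in place of $\sigma$ at every layer.

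The first thing I would verify is a uniform boundedness invariant: at every layer $k=1,\dots,L$ both the pre-activations $W_k^* z_{k-1}$ and $W_k^* \widehat z_{k-1}$ (where $z_k,\widehat z_k$ denote the post-activation outputs of layer $k$ in the teacher and polynomial networks respectively) lie in $[-1,1]^{m^*}$. This follows because $X\in[-1,1]^d$, $\sigma([-1,1])\subseteq[-1,1]$ by admissibility (iv), $P([-1,1])\subseteq[0,1]\subseteq[-1,1]$ by admissibility (ii), and each row $W_{k,j}^*$ has $\ell_1$-norm at most $1$, so that $|\langle W_{k,j}^*, v\rangle|\leqslant \|W_{k,j}^*\|_{\ell_1}\|v\|_\infty\leqslant 1$ for $v\in[-1,1]^{m^*}$. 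This legitimizes evaluating $\sigma$ and $P$ at every pre-activation.

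Next, I would set up the inductive inequality. At layer $k$, a triangle inequality gives
\[
\|z_k-\widehat z_k\|_\infty \leqslant \|\sigma(W_k^* z_{k-1})-\sigma(W_k^*\widehat z_{k-1})\|_\infty + \|\sigma(W_k^*\widehat z_{k-1})-P(W_k^*\widehat z_{k-1})\|_\infty.
\]
The first term is bounded by $\|z_{k-1}-\widehat z_{k-1}\|_\infty$ using the $1$-Lipschitz property of $\sigma$ (admissibility (iii)) together with the $\ell_1$-norm bound on the rows of $W_k^*$. The second term is at most $\epsilon/L$ by the uniform approximation guarantee of $P$ on $[-1,1]$, applied coordinate-wise. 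Starting from $z_0=\widehat z_0=X$, a straightforward induction yields $\|z_L-\widehat z_L\|_\infty\leqslant \epsilon$. Finally, $|f(\mathcal{W}^*,X)-\widehat f(\mathcal{W}^*,X)|=|(a^*)^T(z_L-\widehat z_L)|\leqslant \|a^*\|_{\ell_1}\|z_L-\widehat z_L\|_\infty\leqslant \epsilon$, and taking the supremum over $X\in[-1,1]^d$ completes the argument.

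There is no serious obstacle here: the proof is essentially bookkeeping of two facts (Lipschitzness contracting error across an activation, and uniform approximation injecting at most $\epsilon/L$ per layer), combined with the uniform boundedness invariant. The only subtlety that deserves care is ensuring that the pre-activations fed into $P$ and $\sigma$ always stay inside $[-1,1]$, which is where the $\ell_1$-norm normalization of the rows of $W_k^*$ and the containment $P([-1,1])\subseteq[0,1]$ from admissibility (ii) play their roles; without the latter, the output of $P$ in an earlier layer could land outside $[-1,1]$ and the approximation guarantee would not apply at the next layer.
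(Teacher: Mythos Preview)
Your proposal is correct and follows essentially the same approach as the paper: fix the approximating polynomial $P$ from admissibility, verify the boundedness invariant so that all pre-activations stay in $[-1,1]$, then use the triangle inequality to split the layer-$k$ error into a Lipschitz-propagation term and an $\epsilon/L$ approximation term, inducting over depth. The paper works with pre-activations $V_k, V_k'$ rather than your post-activations $z_k,\widehat z_k$, but the two formulations are interchangeable and the argument is the same (the paper also attributes this layerwise induction to Telgarsky).
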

\begin{proof}{(of Theorem \ref{thm:deep-approx})}
 Fix an $\epsilon>0$. Let $P(\cdot)$ be a polynomial of degree ${\rm deg}(P)=\varphi_\sigma(\epsilon/L)$ such that
$$
\sup_{z\in[-1,1]}|P(z)-\sigma(z)|\leqslant \epsilon/L \quad\text{and}\quad P([-1,1])\subseteq[0,1].
$$
Such a polynomial exists since $\sigma(\cdot)$ is admissible with order of approximation $\varphi_\sigma(\cdot)$. 
We first claim that the outputs of internal nodes are indeed at most one. The base case is easily verified, since, $|\ip{W_{1,i}}{x}|\leqslant \|W_{1,i}\|_{\ell_1}\leqslant 1$. For the inductive hypothesis, since $|\ip{W_{k,j}}{x}|\leqslant 1$ for any $k\in[L]$ and $j\in[m^*]$, since $P(
\cdot)$ is such that $P([-1,1])\in [0,1]$, we have the desired conclusion. 

Equipped with this, we now proceed with the inductive argument over the depth, in a manner similar to Telgarsky \cite{telgarsky2017neural}. Set $k\leqslant L$, and let 
$$
V_k = W_k\sigma(W_{k-1}\cdots \sigma(W_1 x)\cdots) \in\R^{m^*},\quad\text{and}\quad V_k'=W_k P(W_{k-1}\cdots P(W_1 x)\cdots) \in\R^{m^*}.
$$
Denoting the $i^{th}$ coordinate of $V_k$ by $V_k(i)$, we then get:
\begin{align*}
|\sigma(V_k(i)) - P(V_k'(i))| &\leqslant |\sigma(V_k(i))-\sigma(V_k'(i))|+ |\sigma(V_k'(i))-P(V_k'(i))| \\
&\leqslant |V_k(i)-V_k'(i)| +\epsilon/L,
\end{align*}
since $\sigma(\cdot)$ is $1-$Lipschitz per Definition \ref{def:admissible-activation} ${\rm (c)}$.  Note moreover that since $V_{k+1}(i)=\sigma(V_k(i))$ and $V_{k+1}'(i)=P(V_k'(i))$, we then get:
$$
|V_{k+1}(i)-V_{k+1}'(i)|\leqslant |V_k(i)-V_k'(i)|+\epsilon/L,\quad \forall i\in[m]\quad\text{and}\quad 1\leqslant k\leqslant L-1.
$$
In particular, this recursion, together with the base case, and $\|a^*\|_{\ell_1}\leqslant 1$, imply indeed that,
$$
\sup_{x\in[-1,1]^d}|f(\mathcal{W}^*,x)-\widehat{f}(\mathcal{W}^*,x)|\leqslant \epsilon.
$$
\qedhere
\end{proof}

Our next auxiliary result establishes the existence of an approximating polynomial $P(\cdot)$ for which the resulting approximation error is ``orthogonal" to any linear function of tensors of data.

\begin{proposition}\label{prop:ortho-poly}
Suppose that $X\in\R^d$ is a random vector drawn from an admissible distribution $P^{ \otimes d}$, 
 $\mathcal{C}$ is a $(d,M)-$multiplicities set per \eqref{def:multiplicities};  and $f:\R^d\to\R$ is a Borel-measurable function. Then, there exists coefficients $(q_\alpha:\alpha\in\mathcal{C})\in\R^{|\mathcal{C}|}$, and a polynomial
$$P(X) = \sum_{\alpha\in\mathcal{C}}q_\alpha X_\alpha=\sum_{\alpha=(\alpha_j:j\in[d])\in\mathcal{C}}q_\alpha \prod_{i=j}^d X_j^{\alpha_j},
$$
such that
$$
\mathbb{E}\left[X_\alpha\left(f(X)-P(X)\right)\right]=0,\quad\quad\forall \alpha\in\mathcal{C}.
$$
\end{proposition}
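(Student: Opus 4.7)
The plan is to recognize this as an orthogonal projection problem in the Hilbert space $L^2(P^{\otimes d})$ equipped with the inner product $\langle g, h\rangle = \mathbb{E}[g(X) h(X)]$. Expanding $P(X) = \sum_{\beta \in \mathcal{C}} q_\beta X_\beta$ and using linearity of expectation, the required orthogonality condition for each $\alpha \in \mathcal{C}$ can be rewritten as
\begin{equation*}
\sum_{\beta \in \mathcal{C}} q_\beta \, \mathbb{E}[X_\alpha X_\beta] \;=\; \mathbb{E}[X_\alpha f(X)].
\end{equation*}
In matrix form this is exactly the normal equation $\Sigma q = b$, where $\Sigma = \big(\mathbb{E}[X_\alpha X_\beta]\big)_{\alpha,\beta \in \mathcal{C}}$ is the $|\mathcal{C}| \times |\mathcal{C}|$ Gram matrix of the tensorized monomials (the very matrix analyzed in Section \ref{sec:bound-cond-no}) and $b = \big(\mathbb{E}[X_\alpha f(X)]\big)_{\alpha \in \mathcal{C}}$.

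The second step is to invoke Theorem \ref{cond:main_2}, which guarantees $\lambda_{\min}(\Sigma) \geqslant c(P,M) \big(\sum_{i=0}^{M}\binom{d+i-1}{i}\big)^{-1} > 0$ under the admissibility hypothesis on $P^{\otimes d}$. Hence $\Sigma$ is strictly positive definite and therefore invertible, so $q := \Sigma^{-1} b$ is a well-defined vector of coefficients, and the polynomial $P(X) = \sum_{\alpha \in \mathcal{C}} q_\alpha X_\alpha$ satisfies the stated orthogonality by construction. Equivalently, since the family $\{X_\alpha : \alpha \in \mathcal{C}\}$ is linearly independent in $L^2(P^{\otimes d})$ (this is Lemma \ref{fund}), it spans a finite-dimensional closed subspace, and $P$ is simply the orthogonal projection of $f$ onto this subspace; orthogonality of the residual to every basis element $X_\alpha$ is then a tautology.

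The only minor issue is well-definedness of the right-hand side vector $b$. Since $X \in [-1,1]^d$ almost surely by admissibility, we have $|X_\alpha| \leqslant 1$ for every $\alpha \in \mathcal{C}$, so $|\mathbb{E}[X_\alpha f(X)]| \leqslant \mathbb{E}[|f(X)|]$. In every application of this proposition in the sequel the function $f$ arises as the output of a bounded teacher network, so finiteness of $b$ is automatic; more generally the conclusion holds whenever $f \in L^1(P^{\otimes d})$, which we may assume without loss of generality. No serious obstacle arises: once the identification with the normal equations is made, the entire proposition reduces to the strict positivity of $\lambda_{\min}(\Sigma)$ already established in Theorem \ref{cond:main_2}.
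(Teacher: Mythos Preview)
Your proof is correct and follows essentially the same route as the paper: both reduce the orthogonality conditions to the linear system $\Sigma q = b$ with $\Sigma_{\alpha,\beta} = \mathbb{E}[X_\alpha X_\beta]$, and both invoke Theorem~\ref{cond:main_2} to conclude $\lambda_{\min}(\Sigma)>0$ and hence that $\Sigma$ is invertible. Your additional framing via orthogonal projection in $L^2(P^{\otimes d})$ and the remark on well-definedness of $b$ are nice touches, but the core argument is identical to the paper's.
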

In our application, $f(X)$ will be the function computed by neural network (per Equation (\ref{eq:deepnn-main})). 
\begin{proof}{(of Proposition \ref{prop:ortho-poly})}
The condition one needs to ensure is:
$$
\E{X_\alpha f(X)} = \sum_{\alpha'\in\mathcal{C}} q_{\alpha'} \E{X_{\alpha'} X_{\alpha}}, \quad \forall \alpha\in\mathcal{C}.
$$
Let $\mathcal{C} = \{\alpha(1),\dots,\alpha(M)\}$ (where $\alpha(i)=(\alpha_j(i):1\leqslant j\leqslant d)$), and $\alpha(1)$ is the $d-$dimensional vector of all zeroes). We then need to ensure that the following linear system is solvable:
$$
\underbrace{\begin{bmatrix}
\E{X_{\alpha(1)}^2} & \E{X_{\alpha(1)}X_{\alpha(2)}} & \cdots & \E{X_{\alpha(1)}X_{\alpha(M)}} \\\E{X_{\alpha(2)}X_{\alpha(1)}} & \E{X_{\alpha(2)}^2} & \cdots & \E{X_{\alpha(2)}X_{\alpha(M)}}  \\ \vdots & \vdots & \cdots & \vdots \\ \E{X_{\alpha(M)}X_{\alpha(1)}} & \E{X_{\alpha(M)}X_{\alpha(2)}} & \cdots & \E{X_{\alpha(M)}^2}
\end{bmatrix}}_{\mathcal{M}}\cdot \begin{bmatrix} q_{\alpha(1)}\\ q_{\alpha(2)}\\ \vdots \\ q_{\alpha(M)}\end{bmatrix} = \begin{bmatrix} \E{X_{\alpha(1)}f(X)}\\  \E{X_{\alpha(2)}f(X)}\\ \vdots \\  \E{X_{\alpha(M)}f(X)}\end{bmatrix}.
$$
We now establish that $\mathcal{M}$ is invertible. Since $X\in\R^d$ has i.i.d.\ coordinates with finite moments; Theorem \ref{cond:main_2} yields that the smallest singular value of this matrix is bounded away from zero, which, in turn, yields $\mathcal{M}$ is indeed invertible.  
\end{proof}


Having established the existence of an approximating polynomial orthogonal to data, its error performance is the subject of the next proposition.
\begin{proposition}\label{prop:ortho-poly-error}
Let $X\in \R^d$ be a random vector, $f:\R^d\to\R$ is a Borel-measurable function, $\mathcal{C}$ be a multiplicities set (per \eqref{def:multiplicities}), $P(X)=\sum_{\alpha\in\mathcal{C}}q_\alpha X_\alpha$ with $X_\alpha=\prod_{i=1}^d X_i^{\alpha_i}$ (for $\alpha=(\alpha_1,\dots,\alpha_d)$, with the property that 
$$
\E{X_\alpha(f(X)-P(X))}=0,
$$
for any $\alpha\in\mathcal{C}$. Then, for any $Q(X) = \sum_{\alpha\in\mathcal{C}}\beta_\alpha X_\alpha$, it holds that:
$$
\E{(f(X)-Q(X))^2} \geqslant \E{(f(X)-P(X))^2}.
$$
\end{proposition}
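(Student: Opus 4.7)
The plan is to recognize this as the classical $L^2$-projection inequality: the orthogonality condition identifies $P(X)$ as the orthogonal projection of $f(X)$ onto the linear span $V := \mathrm{span}\{X_\alpha : \alpha \in \mathcal{C}\}$ with respect to the inner product $\langle g, h \rangle := \E{g(X) h(X)}$, and the claim is the Pythagorean theorem for this projection.

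Concretely, I would write $f(X) - Q(X) = \bigl(f(X) - P(X)\bigr) + \bigl(P(X) - Q(X)\bigr)$ and expand the square to obtain
\begin{equation*}
\E{(f(X)-Q(X))^2} = \E{(f(X)-P(X))^2} + 2\,\E{(f(X)-P(X))(P(X)-Q(X))} + \E{(P(X)-Q(X))^2}.
\end{equation*}
The middle cross term is the one to eliminate. Since both $P$ and $Q$ have their supports of monomials contained in $\mathcal{C}$, the difference $P(X) - Q(X) = \sum_{\alpha \in \mathcal{C}} (q_\alpha - \beta_\alpha) X_\alpha$ is a linear combination of the $X_\alpha$'s with $\alpha \in \mathcal{C}$.

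Using linearity of expectation and the hypothesized orthogonality $\E{X_\alpha(f(X)-P(X))}=0$ for every $\alpha \in \mathcal{C}$, the cross term reduces to
\begin{equation*}
\E{(f(X)-P(X))(P(X)-Q(X))} = \sum_{\alpha \in \mathcal{C}} (q_\alpha - \beta_\alpha)\,\E{X_\alpha(f(X)-P(X))} = 0.
\end{equation*}
Plugging back yields $\E{(f(X)-Q(X))^2} = \E{(f(X)-P(X))^2} + \E{(P(X)-Q(X))^2} \geqslant \E{(f(X)-P(X))^2}$, which is exactly the desired inequality.

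There is no real obstacle here: the statement is just the optimality of the orthogonal projection in a Hilbert space, phrased for the finite-dimensional subspace $V$. The only very mild regularity point worth noting is that all the expectations involved are finite, which follows because $X \in [-1,1]^d$ in our admissible setting (so the monomials $X_\alpha$ are uniformly bounded) and $f$ of interest will be bounded as well; hence no integrability issues arise and the expansion above is justified.
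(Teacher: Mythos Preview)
Your proof is correct and follows essentially the same approach as the paper: decompose $f(X)-Q(X)$ as $(f(X)-P(X))+(P(X)-Q(X))$, expand the square, and kill the cross term using the orthogonality hypothesis termwise. The added framing in terms of $L^2$-projection and the remark on integrability are nice touches but not needed for the argument.
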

Intuitively speaking, what this results says is that the polynomial that is 'orthogonal' to linear functions of data performs the best among all 'linear estimators'. 
\begin{proof}{(of Proposition \ref{prop:ortho-poly-error})}
Note that, $f(X)-Q(X)=f(X)-P(X)+P(X)-Q(X)$. In particular, we then have:
\begin{align*}
\E{(f(X)-Q(X))^2} & = \E{f(X)-P(X)}^2 + 2\E{(f(X)-P(X))(P(X)-Q(X))} + \E{(P(X)-Q(X))^2} \\
&= \E{(f(X)-P(X))^2}  + \E{(P(X)-Q(X))^2}\\
&\geqslant \E{(f(X)-P(X))^2},
\end{align*}
where the second equality follows due to the fact that $P(X)-Q(X) = \sum_{J\in\mathcal{C}}(\alpha_J-\beta_J)X_J$, and that, 
$$
\E{(f(X)-P(X))\left(\sum_{\alpha\in \mathcal{C}}(q_\alpha-\beta_\alpha)X_\alpha\right)} = \sum_{\alpha\in\mathcal{C}} (q_\alpha-\beta_\alpha) \E{X_\alpha(f(X)-P(X))}=0,
$$
using the orthogonality property of $P$. 
\end{proof}

We now establish the existence of a multivariate polynomial ``supported" on an appropriate multiplicities set, which is orthogonal to data and has small approximation error; by combining Propositions \ref{prop:ortho-poly}, \ref{prop:ortho-poly-error}; together with the existential result of Theorem \ref{thm:deep-approx}.
\begin{theorem}\label{thm:approximate-deep-with-poly}
Suppose the data $X\in\R^d$ follows an admissible distribution; $\sigma(\cdot)$ is an admissible activation with order of approximation $\varphi_\sigma(\cdot)$; and $f(\mathcal{W}^*,X)$ be the function computed by a neural network of depth $L$, where the planted weights $\mathcal{W}^*$ of the network satisfy: $\|a^*\|_{\ell_1}\leqslant 1$; $\|W_{p,j}^*\|_{\ell_1}\leqslant 1$,  for any $p\in[L]$ and any $j$. Fix an $\epsilon>0$. 
Let $M=\varphi_\sigma^L(\sqrt{\epsilon/L^2})\in\mathbb{Z}^+$, and let $\mathcal{C}$ be the $(d,M)-$multiplicities set.
Then, there exists coefficients $(q_\alpha^*:\alpha\in\mathcal{C})\in \R^{|\mathcal{C}|}$, and a polynomial 
$$
P(X) = \sum_{\alpha\in\mathcal{C}}q_\alpha^* X_\alpha = \sum_{\alpha=(\alpha_j:j\in[d])\in\mathcal{C}} q_\alpha^*\cdot  \prod_{j=1}^d X_j^{\alpha_j},
$$
such that:
\begin{itemize}
    \item[(a)] $\E{X_\alpha(f(\mathcal{W}^*,X)-P(X))}=0$ for every $\alpha\in\mathcal{C}$.
    \item[(b)] $\E{(f(\mathcal{W}^*,X)-P(X))^2}\leqslant \epsilon$.
\end{itemize}
\end{theorem}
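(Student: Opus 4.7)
The plan is to combine Theorem \ref{thm:deep-approx}, Lemma \ref{lem:poly2}, and Propositions \ref{prop:ortho-poly} and \ref{prop:ortho-poly-error} in sequence. The existence of the polynomial $P$ satisfying property (a) is essentially free from Proposition \ref{prop:ortho-poly}; the real work is to show that the resulting orthogonal projection has small mean-squared error, i.e., property (b). For this one needs to exhibit \emph{some} competitor polynomial supported on $\mathcal{C}$ whose mean-squared error to $f(\mathcal{W}^*,X)$ is at most $\epsilon$, and then Proposition \ref{prop:ortho-poly-error} automatically upgrades this to the same bound for $P$.

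First I would set $\delta = \sqrt{\epsilon}$ and invoke Theorem \ref{thm:deep-approx} to obtain a univariate polynomial $Q_0(\cdot)$ of degree $k := \varphi_\sigma(\delta/L) = \varphi_\sigma(\sqrt{\epsilon}/L)$ and the corresponding polynomial network
\[
\widehat{f}(\mathcal{W}^*,x) = (a^*)^T Q_0(W_L^* Q_0(W_{L-1}^* \cdots Q_0(W_1^* x))\cdots),
\]
with the uniform bound $\sup_{x\in[-1,1]^d} |f(\mathcal{W}^*,x) - \widehat{f}(\mathcal{W}^*,x)| \leqslant \sqrt{\epsilon}$. Since the admissible input distribution has support in $[-1,1]^d$, this yields $\mathbb{E}\bigl[(f(\mathcal{W}^*,X) - \widehat{f}(\mathcal{W}^*,X))^2\bigr] \leqslant \epsilon$.

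Second, I would apply Lemma \ref{lem:poly2} to $\widehat{f}(\mathcal{W}^*,\cdot)$, which is a depth-$L$ network with polynomial activation of degree $k$: it expresses $\widehat{f}(\mathcal{W}^*,X)$ as $\sum_{t=0}^{k^L}\langle\Xi^{(t)}(X),\mathcal{T}^{(t)}\rangle$ for $X\in\mathrm{Support}(P)^d$, where the tensors are indexed by $\mathcal{F}_{\omega(P,k^L),t}$. Since $k^L = \varphi_\sigma^L(\sqrt{\epsilon/L^2}) = M$ and the index sets $\mathcal{F}_{\omega(P,M),t}$ are empty for $t > d\omega(P,M)$ (Remark \ref{rem:fst-empty}), this exactly matches the $(d,M)$-multiplicities set $\mathcal{C}$ from \eqref{def:multiplicities}. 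Thus there exist coefficients $(\widehat{q}_\alpha)_{\alpha\in\mathcal{C}}$ with $\widehat{f}(\mathcal{W}^*,X) = \sum_{\alpha\in\mathcal{C}}\widehat{q}_\alpha X_\alpha$ almost surely under the admissible distribution.

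Third, I would invoke Proposition \ref{prop:ortho-poly} with $f(X) := f(\mathcal{W}^*,X)$ and multiplicities set $\mathcal{C}$, which produces the required polynomial $P(X) = \sum_{\alpha\in\mathcal{C}}q_\alpha^* X_\alpha$ satisfying property (a). Finally, applying Proposition \ref{prop:ortho-poly-error} with the competitor $Q(X) := \widehat{f}(\mathcal{W}^*,X)$ (which is supported on $\mathcal{C}$ by the previous step) gives
\[
\mathbb{E}\bigl[(f(\mathcal{W}^*,X) - P(X))^2\bigr] \;\leqslant\; \mathbb{E}\bigl[(f(\mathcal{W}^*,X) - \widehat{f}(\mathcal{W}^*,X))^2\bigr] \;\leqslant\; \epsilon,
\]
which is property (b). The only subtle step is the second one: verifying that the support of $\widehat{f}$ after Lemma \ref{lem:poly2}'s reduction really is the same multiplicities set $\mathcal{C}$ that indexes $P$, so that $\widehat{f}$ is an admissible competitor in Proposition \ref{prop:ortho-poly-error}; this is the step where the particular choice $M = \varphi_\sigma^L(\sqrt{\epsilon/L^2})$ is used, and is otherwise just bookkeeping on the ranges of the multi-indices.
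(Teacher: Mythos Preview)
Your proposal is correct and follows essentially the same route as the paper: apply Theorem \ref{thm:deep-approx} with accuracy $\sqrt{\epsilon}$ to get a polynomial network $\widehat{f}$, use Lemma \ref{lem:poly2} to identify $\widehat{f}$ as a polynomial supported on $\mathcal{C}$, then invoke Propositions \ref{prop:ortho-poly} and \ref{prop:ortho-poly-error} in turn. Your explicit verification that the index sets from Lemma \ref{lem:poly2} sit inside the $(d,M)$-multiplicities set $\mathcal{C}$ is a point the paper's proof handles more implicitly.
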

\begin{proof}{(of Theorem \ref{thm:approximate-deep-with-poly})}
Fix an $\epsilon>0$. Theorem \ref{thm:deep-approx} then yields there exists a neural network computing $\widehat{f}(\mathcal{W}^*,X)$ and consisting only of polynomial activations with degree 
$\varphi_{\sigma}(\sqrt{\epsilon}/L)$ such that
$$
|\widehat{f}(\mathcal{W}^*,X)-f(\mathcal{W}^*,X)|\leqslant \sqrt{\epsilon},\quad \forall X\in[-1,1]^d.
$$
In particular, as a consequence of Lemma \ref{lem:poly2}, $\widehat{f}(\mathcal{W}^*,X)$ can be perceived as a polynomial
$$
\widehat{f}(\mathcal{W}^*,X)\triangleq P'(X)=\sum_{\alpha\in\mathcal{C}}q_\alpha^* X_\alpha =\sum_{\alpha=(\alpha_j:j\in[d])\in\mathcal{C}}q_\alpha^* \prod_{j=1}^d (X_j)^{\alpha_j}, 
$$
supported on a $(d,M)-$multiplicities set $\mathcal{C}$, where $M=\varphi_\sigma^L(\sqrt{\epsilon/L^2})\in\mathbb{Z}^+$. We claim that $
|\mathcal{C}| \leqslant d^{2M}$. Indeed, observe that
$$
|\mathcal{F}_{\omega(P,k),t}|\leqslant \binom{d+t-1}{t} \Rightarrow |\mathcal{C}|\leqslant 1+\sum_{t=1}^{M} \binom{d+t-1}{t}.
$$
Now, observe that $\binom{d+t-1}{t}$ is the cardinality of the set, $\{(i_1,\dots,i_t):1\leqslant i_1\leqslant \cdots \leqslant i_t\leqslant d\}$, and is trivially upper bounded by $d^t$. Hence, $|\mathcal{C}|\leqslant 1+\sum_{t=1}^{M} d^t \leqslant  1+M d^{M}\leqslant d^{2M}$. 

Now, let $P(X)=\sum_{J\in\mathcal{C}} \alpha_J X_J$ be a polynomial such that,
$$
\E{X_\alpha \left(f(\mathcal{W}^*,X)- P(X)\right)}=0,\quad \forall \alpha \in \mathcal{C}.
$$
Such a $P(\cdot)$ exists due to Proposition \ref{prop:ortho-poly}. Finally, using Proposition \ref{prop:ortho-poly-error}, we immediately get:
$$
\mathbb{E}\left[\left(f(\mathcal{W}^*,X)-P(X)\right)^2\right] \leqslant \E{\left(f(\mathcal{W}^*,X)-P'(X)\right)^2}\leqslant \epsilon,
$$
using the fact that $|f(\mathcal{W}^*,X)-Q(X)|\leq\sqrt{\epsilon}$ for any $X\in\R^d$.
\end{proof}

We are now ready to prove Theorem \ref{thm:the-main}. 
\begin{proof}{(of Theorem \ref{thm:the-main})}
Let $M=\varphi_\sigma^L(\sqrt{\epsilon/4L^2})\in\mathbb{Z}^+$. Recall that this is the degree of the approximation stated in the theorem. Using Theorem \ref{thm:approximate-deep-with-poly}, we obtain that there exists a 
$(d,M)-$multiplicities set $\mathcal{C}$ as per \eqref{def:multiplicities} such that, and coefficients $(q_\alpha^*)_{\alpha\in\mathcal{C}}$ such that 
$$
Y_i = \sum_{\alpha\in\mathcal{C}} q_\alpha^* X_\alpha^{(i)} + \mathcal{N}^{(i)},\quad\quad  \E{(\mathcal{N}^{(i)})^2}\leqslant \frac{\epsilon}{4}, \quad\text{and}\quad  \E{X_{\alpha}^{(i)}\mathcal{N}^{(i)}}=0,
$$
for every $\alpha\in\mathcal{C}$ and every $i\in[N]$, where for any $\alpha=(\alpha_1,\alpha_2,\dots,\alpha_d)$:
$$
X_\alpha^{(i)}=\prod_{j=1}^d \left(X_j^{(i)}\right)^{\alpha_j}.
$$
An  exact  similar counting argument, as  in  the proof of Theorem \ref{thm:the-main-for-poly}, then yields
\begin{equation}\label{eq:c-card-main-thm}
    |\mathcal{C}|\leqslant 1+\sum_{t=1}^M  \binom{d+t-1}{t}  \leqslant 1+Md^M\leqslant d^{2M}.
\end{equation}

Let $\Xi\in\R^{N\times |\mathcal{C}|}$ to be the matrix whose $i^{th}$ row is $(X_\alpha^{(i)}:\alpha\in\mathcal{C})\in\R^{|\mathcal{C}|}$; $Y=(Y_1,\dots,Y_N)^T\in\R^N$, $\mathcal{A}^* = (q_\alpha^* :\alpha\in\mathcal{C})\in \R^{|\mathcal{C}|}$; and $\mathcal{N}=(\mathcal{N}^{(1)},\dots,\mathcal{N}^{(N)})^T\in \R^N$. Observe that $\Xi$ consists of i.i.d.\ rows.  We then have
\begin{equation}\label{eq:MAINMAINMAIN}
Y=\Xi \mathcal{A}^* + \mathcal{N}.
\end{equation}
We now observe the following:
\begin{itemize}
\item[(a)] $(\mathcal{N}^{(i)})_{i\in [N]}$ is a  random vector with i.i.d.\ coordinates.
\item[(b)] For any {\em fixed} $\alpha\in\mathcal{C}$, $(X_\alpha^{(i)})_{i\in[N]}$ is an i.i.d.\ sequence of random variables.
\end{itemize}
We now assume the sample size $N$ satisfies (\ref{eq:main-sample-complexity}), that is
\begin{equation}\label{eq:main-sample-complexity-forthm}
N>\frac{2^{24}}{\epsilon^6}d^{96M}C^{18}(P,M).
\end{equation}
In this regime, we study the spectrum of random matrix $\Xi^T \Xi$ with associated per-row of $\Xi$ covariance matrix $\Sigma$. Note that $\Sigma$ simply corresponds to the covariance matrix $\E{\mathcal{X}\mathcal{X}^t}$ defined in Theorem \ref{cond:main_2}. The main concentration of measure tool is the Theorem \ref{thm:vershy} for the concentration of the spectrum of random matrices with i.i.d. non-isotropic rows. The parameter setting tailored to our scenario is as follows:
    \begin{center}
 \begin{tabular}{||c c||} 
 \hline
 Parameter  & Value \\ [0.5ex] 
 \hline\hline
 $m$  & $|\mathcal{C}|$ \\
 \hline
 $t$  & $N^{1/8}$ \\
 \hline
 $\delta$ & $N^{-3/8}|\mathcal{C}|^{1/2}$ \\
 \hline
 $\gamma$ & $\max(\|\Sigma\|^{1/2}\delta,\delta^2)$
 \\
 \hline 
\end{tabular}
\end{center}
We now focus on the parameter setting in detail. Observe first that
since the data satisfies $X_j\in[-1,1]$ for every $j\in[d]$, and $\Sigma\in\R^{|\mathcal{C}|\times |\mathcal{C}|}$, $m$ can indeed be taken to be $|\mathcal{C}|$. We next establish
$$
\|\Sigma\|^{1/2}\delta>\delta^2,
$$
and consequently,
$$
\gamma =\max(\|\Sigma\|^{1/2}\delta,\delta^2)=\|\Sigma\|^{1/2}\delta.
$$
For this, it suffices to ensure
$$
\|\Sigma\|^{1/2}>N^{-3/8}|\mathcal{C}|^{1/2}\iff N>|\mathcal{C}|^{4/3}\|\Sigma\|^{-4/3}.
$$
Similar to the proof of Theorem \ref{thm:the-main-for-poly}, we start by recalling that the constant $C(P,\cdot)$ is introduced in Corollary \ref{coro:number}; and is defined in terms of two constants $f(P,\cdot)$ (defined in \ref{dpk}) and $c(P,\cdot)$ (defined in \ref{cpk}). The relations between these constants we use below are recorded in Remark \ref{remark:on-order-between-c-C-f}.

Using $\|\Sigma\|=\sigma_{\max}(\Sigma)\geqslant \sigma_{\min}(\Sigma)$, the observation that $\Sigma=\E{\mathcal{X}\mathcal{X}^t}$ for $\mathcal{X}$ defined in Theorem \ref{cond:main_2}, the  bound (\ref{minEgein2}), and (\ref{eq:c-card-main-thm}); we arrive at:
\begin{equation}\label{eq:aux-sigmamin-bd2}
\sigma_{\min}(\Sigma)\geqslant c(P,M)\frac{1}{\sum_{i=0}^{M}\binom{d+i-1}{i}}\geqslant c(P,M)d^{-2M}>0.
\end{equation}
Consequently,
$$
\|\Sigma\|^{-4/3}\leqslant \sigma_{\min}(\Sigma)^{-4/3}\leqslant c(P,M)^{-4/3}d^{\frac{8}{3}M}.
$$
This, together with (\ref{eq:c-card-main-thm}), then yields
$$
|\mathcal{C}|^{4/3}\|\Sigma\|^{-4/3}\leqslant c(P,M)^{-4/3} d^{\frac{16}{3}M}.
$$
Using now Remark \ref{remark:on-order-between-c-C-f} we have  $C(P,M)\geqslant c(P,M)^{-1}$. Thus,
$$
|\mathcal{C}|^{4/3}\|\Sigma\|^{-4/3}\leqslant C(P,M)^{4/3} d^{\frac{16}{3}M}
$$
Remark \ref{remark:on-order-between-c-C-f} further yields $C(P,M)\geqslant 1$. Since $N$ enjoys (\ref{eq:main-sample-complexity-forthm}), we deduce immediately that $N>\ell^{4/3}\|\Sigma\|^{-4/3}$.  Thus, $\gamma=\|\Sigma\|^{1/2}\delta^2$.

We next claim
$$
\frac12\sigma_{\min}(\Sigma)>\gamma=\|\Sigma\|^{1/2}N^{-3/8}|\mathcal{C}|^{1/2}.
$$
This is equivalent to establishing
$$
N>2^{8/3}\frac{|\mathcal{C}|^{4/3}\|\Sigma\|^{4/3}}{\sigma_{\min}(\Sigma)^{8/3}}
$$
Note that Equations   (\ref{maxEgein2}) and (\ref{eq:c-card-main-thm}) together  yield
\begin{equation}\label{eq:uppa}
\|\Sigma\|\leqslant f(P,M)\sum_{i=0}^{M}\binom{d+i-1}{i}\leqslant f(P,M)d^{2M}.
\end{equation}
Now, (\ref{eq:c-card-main-thm}), (\ref{eq:aux-sigmamin-bd2}), and (\ref{eq:uppa}) together yield
$$
\frac{\ell^{4/3}\|\Sigma\|^{4/3}}{\sigma_{\min}(\Sigma)^{8/3}}<d^{\frac83 M} \frac{f(P,M)^{4/3}d^{\frac83 M}}{c(P,M)^{8/3}d^{-\frac{16}{3} M}} = \frac{f(P,M)^{4/3}}{c(P,M)^{8/3}}d^{\frac{32}{3}M}.
$$
Using now $\max\left(f(P,M),c^{-1}(P,M)\right)\leqslant C(P,M)$ as noted in Remark \ref{remark:on-order-between-c-C-f}, we further obtain
$$
\frac{\ell^{4/3}\|\Sigma\|^{4/3}}{\sigma_{\min}(\Sigma)^{8/3}}<C(P,M)^4 d^{\frac{32}{3}M}.
$$
Since $C(P,M)\geqslant 1$ by Remark \ref{remark:on-order-between-c-C-f} and $N$ enjoys (\ref{eq:main-sample-complexity-forthm}), we therefore conclude
$$
N> \frac{\ell^{4/3}\|\Sigma\|^{4/3}}{\sigma_{\min}(\Sigma)^{8/3}},
$$
and consequently
\begin{equation}\label{eq:sigma-min-admissible-mainthm}
\frac12\sigma_{\min}(\Sigma)>\gamma.
\end{equation} 
We now apply Theorem \ref{thm:vershy}. With probability at least $1-|\mathcal{C}|e^{-cN^{1/4}}$, it holds that:
    $$
    \left\|\frac1N \Xi^T \Xi - \Sigma\right\|\leqslant \gamma.
    $$
    Here, $c>0$ is an absolute constant. Now,
    $$
     \left\|\frac1N \Xi^T \Xi - \Sigma\right\|\leqslant \gamma \iff \forall v\in\R^{|\mathcal{S}|}, \left|\|\frac{1}{\sqrt{N}}\Xi v\|_2^2 - v^T \Sigma v \right|\leqslant \gamma\|v\|_2^2,
    $$
    which implies, for every $v$ on the sphere $\mathbb{S}^{|\mathcal{S}|-1}=\{v\in\mathbb{S}^{|\mathcal{S}}:\|v\|_2=1\}$,
    $$
    \frac1N \|\Xi v\|_2^2 \geqslant v^T \Sigma v-\gamma \Rightarrow \frac1N \inf_{v:\|v\|=1}\|\Xi v\|_2^2 \geqslant \inf_{v:\|v\|=1}v^T \Sigma v - \gamma.
    $$
    Now, using the Courant-Fischer variational characterization of the smallest singular value \cite{horn2012matrix} and   (\ref{eq:sigma-min-admissible-mainthm}), we obtain
    \begin{equation}\label{eq:dddddd}
        \sigma_{\min}(\Xi) \geqslant N(\sigma_{\min}(\Sigma)-\gamma)>\frac{N}{2}\sigma_{\min}(\Sigma),
    \end{equation}
    with probability at least $1-|\mathcal{C}|e^{-cN^{1/4}}=1-\exp(-c'N^{1/4})$, where $c'>0$ is a positive  absolute constant smaller than $c$.
We now set:
$$
\widehat{\mathcal{A}} = (\Xi^T \Xi)^{-1}\Xi^T Y
$$
which is the ``output" of the T-OLS algorithm (namely, $\widehat{\mathcal{A}}$ is the ordinary least squares (OLS) estimator). Note that we have by (\ref{eq:MAINMAINMAIN})
\begin{equation}\label{eq:widehat-A-main-thm}
\widehat{\mathcal{A}}= \mathcal{A}^* + (\Xi^T \Xi)^{-1}\Xi^T \mathcal{N}.
\end{equation}
Here, we denote
$$
\mathcal{\widehat{A}} = (\widehat{q}_\alpha:\alpha\in\mathcal{C}).
$$
Now, let $X^{({\rm fr})}\in\R^d$ be a (fresh) sample. Our goal is then to control the generalization error:
$$
\mathbb{E}_{X^{({\rm fr})}}\left[\left|f(\mathcal{W}^*,X^{({\rm fr})}) - \sum_{\alpha\in\mathcal{C}}\widehat{q}_\alpha X_\alpha^{({\rm fr})}\right|^2 \right] = \mathbb{E}_{X^{({\rm fr})}}\left[\left|\sum_{\alpha\in\mathcal{C}}q_\alpha^* X_\alpha^{(fr)}+\mathcal{N}^{({\rm fr})}- \sum_{\alpha\in\mathcal{C}}\widehat{q}_\alpha X_\alpha^{({\rm fr})}\right|^2 \right].
$$
Here, we take the expectations with respect to a ``new sample" (highlighted with a superscript) drawn from the same distribution generating the training data $X^{(i)}$, $1\leqslant i\leqslant N$. 

Now, define $\mathcal{X}^{({\rm fr})} = (X_\alpha^{({\rm fr})}:\alpha\in\mathcal{C})\in\R^{|\mathcal{C}|}$ where $$
X_\alpha^{{\rm (fr)}}=(\prod_{j=1}^d X_j^{{\rm (fr)}})^{\alpha_j},$$
and note $\sum_{\alpha\in\mathcal{C}} q_\alpha^* X_\alpha^{(fr)} = \ip{\mathcal{A}^*}{\mathcal{X}^{({\rm fr})}}$. In particular, the goal is to bound:
$$
\mathbb{E}_{X^{({\rm fr})}}\left[\left(\ip{\mathcal{A}^*-\widehat{\mathcal{A}}}{\mathcal{X}^{({\rm fr})}} +\mathcal{N}^{(fr)} \right)^2 \right].
$$
Observe first that it holds
\begin{equation}\label{eq:error-bd}
\mathbb{E}_{X^{({\rm fr})}}\left[\left(\ip{\mathcal{A}^*-\widehat{\mathcal{A}}}{\mathcal{X}^{({\rm fr})}} +\mathcal{N}^{(fr)} \right)^2 \right] \leqslant  2\mathbb{E}_{X^{({\rm fr})}}\left[\ip{\mathcal{A}^*-\widehat{\mathcal{A}}}{\mathcal{X}^{({\rm fr})}}^2 \right]+2\mathbb{E}_{X^{({\rm fr})}}\left[\left(\mathcal{N}^{({\rm fr})}\right)^2\right],
\end{equation}
by the elementary inequality, $2(a^2+b^2)\geqslant (a+b)^2$; 
where, the first term, 
$$
\mathbb{E}_{X^{({\rm fr})}}\left[\ip{\mathcal{A}^*-\widehat{\mathcal{A}}}{\mathcal{X}^{({\rm fr})}}^2 \right],
$$
is the {\em training error} which we will establish can be driven to zero having accessed to sufficiently many samples; and $\E{(\mathcal{N}^{({\rm fr})})^2}$ is  {\em approximation error} term, which is well-controlled from the choice of the approximating polynomial.

Using (\ref{eq:widehat-A-main-thm}), we have
$$
\mathbb{E}_{X^{({\rm fr})}}\left[\ip{\widehat{\mathcal{A}}-\mathcal{A}^*}{\mathcal{X}^{({\rm fr})}}\right]=\mathbb{E}_{X^{({\rm fr})}}\left[\ip{(\Xi^T \Xi)^{-1}\Xi^T \mathcal{N}}{\mathcal{X}^{({\rm fr})}}^2\right].
$$
Now we let 
\begin{equation}\label{eq:vii}
v=(\Xi^T \Xi)^{-1}\Xi^T \mathcal{N},
\end{equation}
for convenience, and obtain:
$$
\mathbb{E}_{X^{({\rm fr})}}\left[\ip{(\Xi^T \Xi)^{-1}\Xi^T \mathcal{N}}{\mathcal{X}^{({\rm fr})}}^2 \right]\leqslant \|v\|_2^2 \sup_{x\in\R^{|\mathcal{C}|}:\|x\|_2=1} \mathbb{E}_{X^{({\rm fr})}}\left[\ip{v}{\mathcal{X}^{({\rm fr})}}^2\right] = \|v\|_2^2 \cdot \sigma_{\max}(\Sigma),
$$
where 
$$
\Sigma = \mathbb{E}[\mathcal{X}^{({\rm fr})}(\mathcal{X}^{({\rm fr})})^T]\in\mathbb{R}^{|\mathcal{C}|\times|\mathcal{C}|},
$$
using the Courant-Fischer characterization of the largest eigenvalue, together with the fact that the singular values of a positive semidefinite matrix coincide with its eigenvalues \cite{horn2012matrix}.

We now bound $\|v\|_2$. Note first that, since the matrix norms are submultiplicative, we have:
$$
\|v\|_2 = \|(\Xi^T \Xi)^{-1}\Xi^T \mathcal{N}\|_2 \leqslant \|(\Xi^T \Xi)^{-1}\|_2 \|\Xi^T \mathcal{N}\|_2.
$$

\paragraph{Bounding $\|\Xi^T \mathcal{N}\|_2$:} Fix an $\alpha\in\mathcal{C}$, and note that, $(\Xi^T \mathcal{N})_\alpha = \sum_{j=1}^N X_\alpha^{(j)} \mathcal{N}^{(j)}$. Recall that $\E{X_\alpha^{(j)}\mathcal{N}^{(j)}} = 0$ where the expectation is with respect to the randomness in $j^{th}$ sample, $X^{(j)}\in\R^d$, and thus for each $\alpha\in\mathcal{C}$, $(\Xi^T \mathcal{N})_\alpha$ is a sum of $N$ i.i.d.\ zero-mean random variables. 
    
   We now claim that for any fixed $\alpha\in\mathcal{C}$ the family $(X_\alpha \mathcal{N})_{\alpha\in\mathcal{C}}$ of random variables are almost surely bounded. More precisely, we claim:
\begin{claim}\label{claim:bdd-X_N}
$$
|X_\alpha \mathcal{N}| \leqslant 1+|\mathcal{C}|\frac{1}{\sigma_{\min}(\Sigma)},
$$
almost surely, where
$$
\Sigma = \E{\mathcal{X}^{({\rm fr})}(\mathcal{X}^{({\rm fr})})^T}\in \R^{|\mathcal{C}|\times|\mathcal{C}|}.
$$
\end{claim}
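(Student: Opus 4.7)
The plan is to bound $|X_\alpha|$ trivially, then bound $|\mathcal{N}| = |f(\mathcal{W}^*,X)-P(X)|$ by separately controlling the teacher output and the approximating polynomial.

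First, because $X\in[-1,1]^d$ and $\alpha \in \mathcal{C}$ has nonnegative integer components, clearly $|X_\alpha|=\prod_{j=1}^d |X_j|^{\alpha_j}\leqslant 1$. Next, I would argue $|f(\mathcal{W}^*,X)|\leqslant 1$ almost surely: by the assumption $\|W_{1,j}^*\|_{\ell_1}\leqslant 1$ and $|X_j|\leqslant 1$, each pre-activation at layer $1$ has absolute value at most $1$; by the admissibility condition $\sup_{x\in[-1,1]}|\sigma(x)|\leqslant 1$, the post-activations lie in $[-1,1]$; inductively the same argument propagates through all $L$ layers since $\|W_{p,j}^*\|_{\ell_1}\leqslant 1$. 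Finally $\|a^*\|_{\ell_1}\leqslant 1$ yields $|f(\mathcal{W}^*,X)|\leqslant 1$.

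The remaining (main) step is to bound $|P(X)|$ where $P(X)=\sum_{\alpha\in\mathcal{C}} q_\alpha^* X_\alpha$ is the orthogonal polynomial produced by Theorem~\ref{thm:approximate-deep-with-poly}. By the orthogonality property $\E{X_\alpha(f(\mathcal{W}^*,X)-P(X))}=0$ for every $\alpha\in\mathcal{C}$, the vector $q^*=(q_\alpha^*)_{\alpha\in\mathcal{C}}$ satisfies the linear system $\Sigma\,q^*=b$, where $\Sigma=\E{\mathcal{X}^{({\rm fr})}(\mathcal{X}^{({\rm fr})})^T}$ and $b_\alpha=\E{X_\alpha f(\mathcal{W}^*,X)}$. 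Since $|X_\alpha|\leqslant 1$ and $|f(\mathcal{W}^*,X)|\leqslant 1$, each $|b_\alpha|\leqslant 1$ and thus $\|b\|_2\leqslant \sqrt{|\mathcal{C}|}$. Because $\Sigma$ is positive definite (by \eqref{minEgein2} its smallest eigenvalue is strictly positive), it is invertible and
\[
\|q^*\|_2 \;=\; \|\Sigma^{-1}b\|_2 \;\leqslant\; \frac{\|b\|_2}{\sigma_{\min}(\Sigma)} \;\leqslant\; \frac{\sqrt{|\mathcal{C}|}}{\sigma_{\min}(\Sigma)}.
\]
Applying Cauchy--Schwarz and $|X_\alpha|\leqslant 1$ coordinatewise gives
\[
|P(X)| \;\leqslant\; \|q^*\|_2\,\Bigl(\sum_{\alpha\in\mathcal{C}} X_\alpha^2\Bigr)^{1/2} \;\leqslant\; \frac{|\mathcal{C}|}{\sigma_{\min}(\Sigma)}.
\]

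Combining the three bounds, $|\mathcal{N}|\leqslant |f(\mathcal{W}^*,X)|+|P(X)|\leqslant 1+|\mathcal{C}|/\sigma_{\min}(\Sigma)$, and multiplying by $|X_\alpha|\leqslant 1$ delivers the claim. The only nontrivial step is the bound on $P(X)$; the key insight is that although $P$ has potentially very many monomials, its coefficient vector is the image under $\Sigma^{-1}$ of a small vector $b$ (entrywise bounded by $1$ thanks to the $\ell_1$ normalization of the teacher weights), so the worst $\ell_2$-size of $q^*$ is controlled by $\sigma_{\min}(\Sigma)^{-1}$, a quantity we have already estimated in Theorem~\ref{cond:main_2}.
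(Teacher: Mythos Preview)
Your proof is correct and follows essentially the same approach as the paper: bound $|X_\alpha|\leqslant 1$, bound $|f(\mathcal{W}^*,X)|\leqslant 1$ via the $\ell_1$ weight constraints and the admissibility of $\sigma$, then use the normal equations $\Sigma q^* = b$ with $\|b\|_\infty\leqslant 1$ to get $\|q^*\|_2\leqslant |\mathcal{C}|^{1/2}/\sigma_{\min}(\Sigma)$ and conclude via Cauchy--Schwarz. Your write-up is in fact slightly more explicit than the paper's in justifying $|f(\mathcal{W}^*,X)|\leqslant 1$ and in spelling out the Cauchy--Schwarz step for $|P(X)|$.
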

\begin{proof}{(of Claim \ref{claim:bdd-X_N})}

To establish this, observe that since $X\in[-1,1]^d$, it holds that $|X_\alpha|\leqslant 1$; thus it suffices to ensure
$$
|\mathcal{N}|\leqslant 1+|\mathcal{C}|\sigma_{\min}(\Sigma)^{-1}.
$$
Recall now the ``orthogonalization" procedure as per Proposition \ref{prop:ortho-poly}:  $f(X)$ in the statement of Proposition \ref{prop:ortho-poly} in our case is the function that the network computes, that is, $f(\mathcal{W}^*,X)$ (given in (\ref{eq:deepnn-main})).

Notice now that $|X_\alpha f(\mathcal{W}^*,X))|\leqslant 1$ almost surely since $\|a^*\|_{\ell_1}\leqslant 1$, and $\sup_{x\in[-1,1]}|\sigma(x)|\leqslant 1$ (since $|\sigma(x)|\leqslant 1$ on $[-1,1]$ per Definition \ref{def:admissible-activation}). 

Now the vector $\mathcal{A}^*$ of coefficients  is obtained by solving the linear system $\Sigma \mathcal{A}^* =F$ where $F=(\E{X_\alpha f(\mathcal{W}^*,X))}:\alpha\in\mathcal{C})\in\R^{|\mathcal{C}|}$,  and by the discussion above $\|F\|_\infty\leqslant 1$. Consequently, $\|F\|_2\leqslant |\mathcal{C}|^{1/2}$. With this,
$$
\mathcal{A}^* = \Sigma^{-1}F \Rightarrow \|\mathcal{A}^*\|\leqslant \|\Sigma^{-1}\|\cdot \|F\| \leqslant \frac{1}{\sigma_{\min}(\Sigma)}|\mathcal{C}|^{1/2}.
$$
Finally since 
$$
\mathcal{N} =f(X) - \sum_{\alpha\in\mathcal{C}}q_\alpha^* X_\alpha,
$$
we have
$$
|\mathcal{N}|\leqslant  |f(X)|+\left|\sum_{\alpha\in\mathcal{C}}q_\alpha^* X_\alpha\right|\leqslant 1+|\mathcal{C}|\sigma_{\min}(\Sigma)^{-1},
$$
using triangle inequality. Thus the proof of Claim \ref{claim:bdd-X_N} is complete.
\end{proof}

   We now use Theorem \ref{thm:vershy-2} with \begin{equation}\label{eq:K-upper-bd-thmmain}
       K=2|\mathcal{C}|\sigma_{\min}(\Sigma)^{-1}.
       \end{equation} 
   We have: 
    $$
    \mathbb{P}\left[\left|\sum_{i=1}^N \frac{1}{N^{11/12}}X_\alpha^{(i)}\mathcal{N}^{(i)}\right|\geqslant 1\right]\leqslant e\cdot \exp\left(-c'\frac{N^{5/12}}{4|\mathcal{C}|^2 \sigma_{\min}(\Sigma)^{-2}}\right),
    $$
    with $c'$ being some absolute constant. 
    Using now the union bound over the coordinates of $\Xi^T \mathcal{N}\in\R^{|\mathcal{C}|}$, we obtain
    $\|\Xi^T \mathcal{N} \|_\infty \leqslant N^{11/12}$, and thus $
    \|\Xi^T \mathcal{N}\|_2 \leqslant N^{11/12}|\mathcal{C}|$ with probability at least
    \begin{equation}\label{eq:hpbp}
    1-|\mathcal{C}|e \exp\left(-c'\frac{N^{5/12}}{4|\mathcal{C}|^2 \sigma_{\min}(\Sigma)^{-2}}\right).
    \end{equation}
 We  now claim
 $$
 N^{1/6}>\frac14\frac{|\mathcal{C}|^2}{\sigma_{\min}(\Sigma)^2}.
 $$
 To  see this, recall first that we have $|\mathcal{C}|\leqslant d^{2M}$ by (\ref{eq:c-card-main-thm}).  Now, (\ref{eq:aux-sigmamin-bd2}) yields
 $$
 \sigma_{\min}(\Sigma)^{-2}\leqslant c(P,M)^{-2}d^{-4M}.
 $$
 Using now $c(P,M)^{-1}\leqslant C(P,M)$ as noted in Remark \ref{remark:on-order-between-c-C-f}, and the bounds above, we conclude
 $$
 \frac14 \frac{|\mathcal{C}|^2}{\sigma_{\min}(\Sigma)^2}\leqslant \frac14 d^{8M}C(P,M)^{2}.
 $$
 Since $N$ enjoys (\ref{eq:main-sample-complexity-forthm}),  we indeed have
 \begin{equation}\label{eq:simplify-whpbd}
 N^{1/6}>\frac14 \frac{|\mathcal{C}|^2}{\sigma_{\min}(\Sigma)^2}.
  \end{equation}
Using (\ref{eq:simplify-whpbd}), the high probability bound in (\ref{eq:hpbp}) simplifies, and we obtain that for some absolute constant $c''>0$, it holds that with probability at least $1-\exp\left(-c''N^{1/4}\right)$,
\begin{align}\label{eq:aha-bu-bir}
    \|\Xi^T \mathcal{N}\|_2 \leqslant N^{11/12}|\mathcal{C}|.
\end{align} 
\paragraph{Bounding $\|(\Xi^T \Xi)^{-1}\|_{2}$:} Let $A$ be any matrix $A$. Note that, $\|A^{-1}\|= \sigma_{\min}(A)^{-1}$. Indeed, taking the singular value decomposition $A=U\Sigma V^T$, and observing, $A^{-1}=(V^T)^{-1}\Sigma^{-1} U^{-1}$ we obtain $\|A^{-1}\| = \max_i (\sigma_i(A))^{-1} = \sigma_{\min}(A)^{-1}$. This, together with (\ref{eq:dddddd}), yields:
   \begin{equation}\label{eq:aha-bu-iki}
    \|(\Xi^T \Xi)^{-1}\| \leqslant \frac{2}{N\sigma_{\min}(\Sigma)},
      \end{equation}
    with probability at least $1-\exp(-c'N^{1/4})$.
    
We now combine the assertions of Equations (\ref{eq:aha-bu-bir}) and (\ref{eq:aha-bu-iki}). It holds that, 
\begin{align*}
    \|(\Xi^T \Xi)^{-1}\Xi^T \mathcal{N}\|_2^2 &\leqslant \|(\Xi^T \Xi)^{-1}\|_2^2 \cdot \|\Xi^T \mathcal{N}\|_2^2 \\ &\leqslant \underbrace{\frac{4}{N^2\sigma_{\min}(\Sigma)^2}}_{\text{ from } (\ref{eq:aha-bu-iki})}\cdot \underbrace{N^{11/6}|\mathcal{C}|^2}_{\text{ from } (\ref{eq:aha-bu-bir})}  = \frac{4}{N^{1/6}}\frac{|\mathcal{C}|^2}{\sigma_{\min}(\Sigma)^2}.
\end{align*}
For $v$ is defined in (\ref{eq:vii}), we thus have
\begin{align*}
\mathbb{E}_{X^{({\rm fr})}}\left[\ip{\mathcal{A}^*-\widehat{\mathcal{A}}}{\mathcal{X}^{(fr)}}^2 \right]=\mathbb{E}_{X^{({\rm fr})}}\left[\ip{(\Xi^T \Xi)^{-1}\Xi^T \mathcal{N}}{\mathcal{X}^{({\rm fr})}}^2 \right]
&\leqslant \|v\|_2^2 \sup_{x\in\R^{|\mathcal{C}|}:\|x\|_2=1} \underbrace{\mathbb{E}_{X^{(fr)}}\left[\ip{v}{\mathcal{X}^{(fr)}}^2\right]}_{=\sigma_{\max}(\Sigma) }\\
&\leqslant \frac{4}{N^{1/6}}\frac{\sigma_{\max}(\Sigma)|\mathcal{C}|^2}{\sigma_{\min}(\Sigma)^2},
\end{align*}
with probability at least
$$
1-\exp(-c'N^{1/4})-\exp(-c''N^{1/4}) = 1-\exp(-cN^{1/4}),
$$
for some absolute constant $c>0$.
We now claim
$$
\frac{4}{N^{1/6}}\frac{\sigma_{\max}(\Sigma)|\mathcal{C}|^2}{\sigma_{\min}(\Sigma)^2}\leqslant \frac{\epsilon}{4}.
$$
This is equivalent to showing
$$
N>\frac{2^{24}}{\epsilon^6}|\mathcal{C}|^{12}  \frac{\sigma_{\max}(\Sigma)^6}{\sigma_{\min}(\Sigma)^{12}}.
$$
Now, $|\mathcal{C}|^{12} \leqslant d^{24M}$ by (\ref{eq:c-card-main-thm}). Furthermore, $\sigma_{\max}(\Sigma)^6\leqslant f(P,M)^6 d^{12M}$ by (\ref{eq:uppa})  and $\sigma_{\min}(\Sigma)^{-12}\leqslant c(P,M)^{-12}d^{24M}$ by (\ref{eq:aux-sigmamin-bd2}). These, together with the fact $\max(f(P,M),c^{-1}(P,M))\leqslant C(P,M)$, as per Remark \ref{remark:on-order-between-c-C-f}, yield that
$$
\frac{2^{24}}{\epsilon^6}|\mathcal{C}|^{12}  \frac{\sigma_{\max}(\Sigma)^6}{\sigma_{\min}(\Sigma)^{12}}< \frac{2^{24}}{\epsilon^6}d^{24M}f(P,M)^6 d^{12M} c(P,M)^{-12}d^{24M}\leqslant \frac{2^{24}}{\epsilon^6} C^{18}(P,M)d^{60M}.
$$ 
Since $N$ enjoys (\ref{eq:main-sample-complexity-forthm}), we deduce
$$
N>\frac{2^{24}}{\epsilon^6}|\mathcal{C}|^{12}  \frac{\sigma_{\max}(\Sigma)^6}{\sigma_{\min}(\Sigma)^{12}},
$$
and consequently
$$
\frac{4}{N^{1/6}}\frac{\sigma_{\max}(\Sigma)|\mathcal{C}|^2}{\sigma_{\min}(\Sigma)^2}\leqslant \frac{\epsilon}{4}.
$$
Thus, with probability at least $1-\exp(-c'N^{1/4})$ over the randomness in $\{X^{(i)}:1\leqslant i\leqslant  N\}$, it holds:
$$
\mathbb{E}_{X^{({\rm fr})}}\left[\ip{\mathcal{A}^*-\widehat{\mathcal{A}}}{\mathcal{X}^{(fr)}}^2 \right]\leqslant \frac{\epsilon}{4}.
$$
Furthermore,
$$
\mathbb{E}_{X^{({\rm fr})}}\left[\left(\mathcal{N}^{({\rm fr})}\right)^2\right]\leqslant \epsilon/4
$$
from the way the multiplicities set $\mathcal{C}$ is constructed. Finally, using (\ref{eq:error-bd}), we deduce that   with probability at least $1-\exp(-c'N^{1/4})$ over the randomness of $\{X^{(i)}:1\leqslant i\leqslant N\}$ it is true that
$$
\mathbb{E}_{X^{({\rm fr})}}\left[\left|f(\mathcal{W}^*,X^{({\rm fr})}) - \sum_{\alpha\in\mathcal{C}}\widehat{q}_\alpha X_\alpha^{({\rm fr})}\right|^2 \right]=\mathbb{E}_{X^{({\rm fr})}}\left[\left(\ip{\mathcal{A}^*-\widehat{\mathcal{A}}}{\mathcal{X}^{({\rm fr})}} +\mathcal{N}^{(fr)} \right)^2 \right]\leqslant \epsilon.
$$
This concludes the proof of Theorem \ref{thm:the-main}.
\end{proof}

\subsection{Proofs of Generalization  Results}

\subsubsection{Proof of Theorem \ref{thm:overly-general-poly}}\label{sec:pf-thm:overly-general-poly}
\begin{proof}
Such interpolating weights indeed exist, since $\widehat{m}\geqslant m^*$. The proof is almost identical to that of Theorems \ref{thm:demyst-overparam-regular}, \ref{thm:overparam:ReLU-case}, hence we only provide a very brief sketch.
\begin{itemize}
    \item[(a)] In this case, we are in the setting of ${{\rm (a)}}$: we construct $\widehat{h}(\cdot)$, as in Theorem \ref{thm:the-main-for-poly}, and notice that with probability at least $1-\exp(-c'N^{1/4})$, it holds:
    $$
    \widehat{h}(x) = f_{\mathcal{N}_1}(\mathcal{W}^*,x),
    $$
    for every $x\in\R^d$.
    Furthermore, interpolation also implies $Y_i=f_{\mathcal{N}_2}(\widehat{\mathcal{W}},X^{(i)})$. Running now Theorem \ref{thm:the-main-for-poly} ${\rm (a)}$, this time however treating as if the data comes from $\mathcal{N}_2$, we then deduce the same estimator $\widehat{h}(\cdot)$ constructed above satisfies  also:
    $$
    \widehat{h}(x) = f_{\mathcal{N}_2}(\widehat{\mathcal{W}},x),
    $$
    for every $x\in\R^d$, with probability at least $1-\exp(-c'N^{1/4})$. Combining these, and using union bound, we then obtain that with probability at least
    $$
    1-2\exp(-c'N^{1/4}),
    $$
    over the randomness of $\{X^{(i)}:1\leqslant i\leqslant N\}$, it holds:
    $$
    f_{\mathcal{N}_1}(\mathcal{W}^*,x)=f_{\mathcal{N}_2}(\mathcal{\widehat{W}},x),
    $$
    for every $x\in\R^d$.
    \item[(b)] Under joint continuity assumption on the entries of $X^{(i)}$ we now are in the setting of Theorem \ref{thm:the-main-for-poly} ${\rm (b)}$, from which the conclusion follows using the exact same outline as  above.
\end{itemize}
\end{proof}

\subsubsection{Proof of Theorem \ref{thm:demyst-overparam-regular}}\label{sec:pf-main-generalization}
\begin{proof}
We first establish that the global optimum of the empirical risk minimization problem being considered is zero: we construct a $\widehat{\mathcal{W}}$ such that:
\begin{itemize}
    \item[(a)] $\widehat{a}\in\R^{\widehat{m}}$ satisfy $\widehat{a}_i = a_i^*$ for every $i\in[m]$, and $\widehat{a}_i = 0$  for $m+1\leqslant i\leqslant \widehat{m}$. 
    \item[(b)] Fix any $k\in[L]$, and consider the weight matrix $\widehat{W}_k\in\R^{\widehat{m}\times \widehat{m}}$. Now, for every $i,j\in[m^*]$, set
    $$
    (\widehat{W}_k)_{i,j}=  (W^*_k)_{i,j}.
    $$
    All  the remaining entries are set to zero.
\end{itemize}
This network interpolates the data (and in fact, it essentially turns off the $\widehat{m}-m$ many nuisance nodes per layer), that is,
$$
Y_i = f_{\mathcal{N}_2}(\widehat{\mathcal{W}},X_i),
$$
for every $1\leqslant i\leqslant N$. Hence, any optimum of the empirical risk minimization problem must necessarily have zero cost. 


 Let $\epsilon>0$ be a target accuracy level. We now apply the setting of Theorem \ref{thm:the-main}, with $M=\varphi_\sigma(\sqrt{\epsilon^2/16L^2})^L\in\mathbb{Z}^+$, to both networks, and in particular construct a $(d,M)-$multiplicities set $\mathcal{C}$. We then construct a matrix $\Xi$, like in the proof of Theorem \ref{thm:the-main}, and construct the ordinary least squares estimator $\widehat{h}(\cdot)$, that is:
$$
\widehat{h}(X) = \left(\Xi^T \Xi \right)^{-1} \Xi^T Y \mathcal{X},
$$
where for any $X\in\R^d$, $\mathcal{X}$ corresponds to its ``tensorized" version,
$$
\mathcal{X}_\alpha = \prod_{j=1}^d (X_j)^{\alpha_j},
$$
for every $\alpha=(\alpha_1,\dots,\alpha_d)\in\mathcal{C}$. Note now that the target accuracy $\epsilon$, depth $L$, as well as the dimension $d$ of data are the same for both the ``teacher" network $\mathcal{N}_1$,  and the ``student" network $\mathcal{N}_2$. Thus we deduce
$$
\E{\left(\widehat{h}(X) - f_{\mathcal{N}_1}(\mathcal{W}^*,X)\right)^2}\leqslant \frac{\epsilon}{4},
$$
with probability at least $1-\exp(-c'N^{1/4})$; and
$$
\E{\left(\widehat{h}(X) - f_{\mathcal{N}_2}(\widehat{\mathcal{W}},X)\right)^2}\leqslant \frac{\epsilon}{4},
$$
again with probability at least $1-\exp(-c'N^{1/4})$, provided $N$ enjoys the sample complexity bound (\ref{eq:sample-cpmplex-4}). 
Now, triangle inequality yields
$$
|f_{\mathcal{N}_1}(\mathcal{W}^*,X) - f_{\mathcal{N}_2}(\widehat{\mathcal{W}},X)|\leqslant |f_{\mathcal{N}_1}(\mathcal{W}^*,X)-\widehat{h}(X)|+|f_{\mathcal{N}_2}(\widehat{\mathcal{W}},X)-\widehat{h}(X)|.
$$
Combining this, with the elementary inequality $2(a^2+b^2)\geqslant (a+b)^2$ and union bound, we obtain that with probability at least $1-2\exp(-c'N^{1/4})$, it holds
$$
 \E{\left(f_{\mathcal{N}_1}(\mathcal{W}^*,X) - f_{\mathcal{N}_2}(\widehat{\mathcal{W}},X)\right)^2}\leqslant \epsilon.
$$
Finally, replacing $c'$ be a smaller positive constant $c''$ we conclude that the same bound holds with probability at least $1-\exp(-c''N^{1/4})$. 

\end{proof}

\subsubsection{Proof of Theorem \ref{thm:overparam:ReLU-case}}\label{sec:proof-of-overparam:ReLU-case}
\begin{proof}
Recall that the weights of the ``teacher" network are assumed to satisfy
$$
\|W_j^*\|_{\ell_1}\leqslant 1\quad\text{and}\quad \|a^*\|_{\ell_1}\leqslant 1.
$$
Furthermore, by Theorem \ref{thm:interpolant-norm-poly-bd}, we may also suppose that the weights of the interpolating wider ``student" network satisfy:
\begin{equation}\label{eq:auxilarr} 
\|\widehat{a}\|_{\ell_1} \leqslant d^{\kappa +1}2^{\kappa+1}\|a^*\|_{\ell_1}\leqslant d^{\kappa +1}2^{\kappa+1} \triangleq B_d,
\end{equation}
and
$$
\|\widehat{W}_j\|_{\ell_2}=1/\sqrt{d}\Rightarrow \|\widehat{W}_j\|_{\ell_1}\leqslant 1,\quad \forall j\in[\widehat{m}],
$$
with probability $1-\exp(-cN^{1/3})$, provided $N>\exp(3d\log d)$. Now, let $\epsilon>0$ be the target level of error. The rest of the proof is similar to the proof of Theorem \ref{thm:demyst-overparam-regular}. 

We apply the setting of Theorem \ref{thm:the-main} with one modification: note that by Theorem \ref{thm:interpolant-norm-poly-bd}, $\|\widehat{a}\|_{\ell_1}\leqslant B_d$ with high probability. 

Now, if the approximation error (obtained by replacing the activation with an appropriate polynomial) is $\delta$ at each node, then the overall error at the output will be $\delta\|\widehat{a}\|_{\ell_1}$. To control this, we ensure each node makes an error of at most $\sqrt{\epsilon}/2B_d$. For this, it suffices to select $M=\varphi_\sigma(\sqrt{\epsilon/32B_d^2})\in\mathbb{Z}^+$, where $\varphi_\sigma(\cdot)$ is the order of approximation (per Definition \ref{def:approximation-order}), for activation $\sigma(\cdot)$, where the choice of constant $32$ is for convenience. 

Having selected this, we now proceed, in the exact same way, as in the proof of Theorem \ref{thm:demyst-overparam-regular}, and obtain that there is an estimator $\widehat{h}(\cdot)$, such that,
$$
\E{\left(\widehat{h}(X) - f_{\mathcal{N}_1}(a^*,W^*,X)\right)^2}\leqslant \frac{\epsilon}{4},
$$
with probability at least $1-\exp(-c'N^{1/4})$; and
$$
\E{\left(\widehat{h}(X) - f_{\mathcal{N}_2}(\widehat{a},\widehat{W},X)\right)^2}\leqslant \frac{\epsilon}{4},
$$
again with probability at least $1-\exp(-c'N^{1/4})$, provided $N$ enjoys the sample complexity bound (\ref{eq:sample-complex-N6}). 
Now, triangle inequality yields
$$
|f_{\mathcal{N}_1}(a^*,W^*,X) - f_{\mathcal{N}_2}(\widehat{a},\widehat{W},X)|\leqslant |f_{\mathcal{N}_1}(a^*,W^*,X)-\widehat{h}(X)|+|f_{\mathcal{N}_2}(\widehat{a},\widehat{W},X)-\widehat{h}(X)|.
$$
Combining this, with the elementary inequality $2(a^2+b^2)\geqslant (a+b)^2$ and union bound, we obtain that with probability at least $1-2\exp(-c'N^{1/4})$ over the randomness of $\{X^{(i)}:1\leqslant i\leqslant N\}$, it holds
$$
 \E{\left(f_{\mathcal{N}_1}(a^*,W^*,X) - f_{\mathcal{N}_2}(\widehat{a},\widehat{W},X)\right)^2}\leqslant \epsilon.
$$
Finally, replacing $c'$ by a smaller positive constant $c''$ we conclude that the same bound holds with probability at least $1-\exp(-c''N^{1/4})$. 
\end{proof}

\subsection{Proof of Theorem \ref{thm:interpolant-norm-poly-bd}}\label{sec:pf-thm-interpolant-norm}
We start with an auxiliary result.
We establish that the set $\{X^{(i)}:1\leqslant i\leqslant N\}$ of random vectors contains elements arbitrarily close to any member of the standard basis of $\R^d$; provided it has sufficient cardinality.
\begin{theorem}\label{thm:random-covering}
Let $X^{(i)}\in\R^d$, $1\leqslant i\leqslant N$ be an i.i.d.\ sequence of random vectors drawn from an admissible distribution, where the coordinates of $X^{(i)}$ have a density bounded away from zero. Let $\mathcal{B}=\{e_i\}_{i=1}^d$ be the standard basis for $\R^d$, $-\mathcal{B}=\{-e_i\}_{i=1}^d$, and $\mathcal{S}=\mathcal{B}\cup (-\mathcal{B})$. Then, 
$$
\mathbb{P}\left(\bigcap_{v\in\mathcal{S}}\bigcup_{i=1}^N\left\{ \|X^{(i)}-v\|_2<\frac{1}{4d}\right\}\right)\geqslant 1-\exp(-cN^{1/3}),
$$ 
for all $d$ large (where $c$ is an absolute constant), provided $N$ satisfies (\ref{eq:sample-complex-N5-star}), that is
$$
N>\exp(3d\log d).
$$
\end{theorem}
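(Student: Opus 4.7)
The plan is to combine a coupon-collector style covering argument with a straightforward lower bound on the probability that a single sample lands near a fixed vertex of $\mathcal{S}$. Fix $v\in\mathcal{S}$; without loss of generality $v=e_1$, so $v_1=1$ and $v_j=0$ for $j\geqslant 2$. A sufficient condition for the event $\{\|X^{(i)}-v\|_2<1/(4d)\}$ is that $|X^{(i)}_j-v_j|<1/(4d^{3/2})$ for every $j\in[d]$, since then $\|X^{(i)}-v\|_2^2<d\cdot 1/(16d^3)=1/(16d^2)$. Using the fact that the coordinates are i.i.d.\ with density bounded below by some $c_0>0$ on $[-1,1]$, the probability of each one-dimensional event is at least $c_0/(4d^{3/2})$ (only a one-sided interval is available for the coordinate $j=1$, but this only costs a factor of two), so by the independence of coordinates
\[
p:=\mathbb{P}\bigl(\|X-v\|_2<\tfrac{1}{4d}\bigr)\;\geqslant\; \left(\tfrac{c_0}{4d^{3/2}}\right)^{\!d}\;\geqslant\;\exp(-2d\log d),
\]
for all $d$ sufficiently large.

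Next I would apply the standard Bernoulli bound $(1-p)^N\leqslant \exp(-Np)$ to control, for each $v$, the probability that no sample lands within $1/(4d)$ of $v$, and then take a union bound over the $2d$ vertices of $\mathcal{S}$:
\[
\mathbb{P}\!\left(\bigcup_{v\in\mathcal{S}}\bigcap_{i=1}^{N}\{\|X^{(i)}-v\|_2\geqslant \tfrac{1}{4d}\}\right)\leqslant 2d\exp(-Np).
\]
It remains to verify that the hypothesis $N\geqslant \exp(3d\log d)$ forces $Np$ to dominate $N^{1/3}$. Writing $N=\exp(Q)$ with $Q\geqslant 3d\log d$, we have $Np\geqslant \exp(Q-2d\log d)$ and $N^{1/3}=\exp(Q/3)$; the inequality $Q-2d\log d\geqslant Q/3$ is equivalent to $Q\geqslant 3d\log d$, which holds by assumption. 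Hence $Np\geqslant N^{1/3}$, and the factor $2d$ in front of $\exp(-Np)$ is absorbed into the exponential by adjusting the absolute constant $c$, since $\log(2d)=o(N^{1/3})$ in the regime $N\geqslant \exp(3d\log d)$.

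Altogether the steps in order are: (i) reduce the $\ell_2$-ball event to a product of one-dimensional events via the crude $\|x\|_2\leqslant \sqrt{d}\,\|x\|_\infty$ bound; (ii) lower-bound the one-dimensional probability using the density lower bound, paying attention to the boundary coordinate where $v_j=\pm 1$; (iii) multiply through independence to obtain $p\geqslant \exp(-2d\log d)$; (iv) Bernoulli bound on non-coverage and union bound over $\mathcal{S}$; (v) arithmetic comparing $Np$ with $N^{1/3}$ using the sample-size hypothesis. There is no real obstacle: the only step requiring care is matching the exponents in (v), which is why the hypothesis is the particular form $N>\exp(3d\log d)$ — the constant $3$ is exactly what makes $Np\geqslant N^{1/3}$ work out, and any strictly smaller constant in the exponent would require a sharper one-dimensional density bound.
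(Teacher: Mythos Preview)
Your proposal is correct and matches the paper's proof essentially step for step: the paper uses the same $\ell_\infty\to\ell_2$ reduction with per-coordinate window $\epsilon/\sqrt{d}=1/(4d^{3/2})$, the same one-sided density bound at the boundary coordinate, obtains the same estimate $p\geqslant\exp(-2d\log d)$, and then combines $(1-p)^N\leqslant e^{-Np}$ with a union bound over $2d$ vertices and the arithmetic $Np\geqslant N^{1/3}$ under $N>\exp(3d\log d)$. The only cosmetic difference is that the paper phrases the final comparison via $e^{2d\log d}=O(N^{2/3})$ rather than your $Q-2d\log d\geqslant Q/3$ computation, which is the same inequality.
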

\begin{proof}{(of Theorem \ref{thm:random-covering})}
Let $V$ be a vector having the same distribution as the data $X^{(i)}$, where the coordinates of $V$ have a density bounded away from zero by $B$. Define
$$
q_1(\epsilon) = \mathbb{P}\left(\left\|V-e_i\right\|_2<\epsilon\right)\quad \text{and}\quad q_2(\epsilon) = \mathbb{P}\left(\left\|V+e_i\right\|_2<\epsilon\right).
$$
Note that since $V$ has i.i.d. coordinates, the choice of $i$ above is immaterial, hence we suppose $i=1$ for simplicity. Introduce now the non-negative integer-valued random variables $\{N_j:1\leqslant j\leqslant d\}$ and $\{M_j:1\leqslant j\leqslant d\}$, where
$$
N_j =\left|\left\{i\in[n]:\left\|X^{(i)}-e_j\right\|_2<\frac{1}{4d}\right\}\right|\quad\text{and}\quad M_j =\left|\left\{i\in[n]:\left\|X^{(i)}+e_j\right\|_2<\frac{1}{4d}\right\}\right|.
$$
We now observe that
$$
\mathcal{E}\triangleq \bigcap_{v\in\mathcal{S}}\bigcup_{i=1}^N\left\{ \|X^{(i)}-v\|_2<\frac{1}{4d}\right\} = \bigcap_{j=1}^d \{N_j\geqslant 1\}\cap \{M_j\geqslant 1\}.
$$
Note that,
$$
\mathbb{P}(N_j =0) = \left(1-q_1\left(\frac{1}{4d}\right)\right)^N \quad\text{and}\quad \mathbb{P}(M_j=0) = \left(1-q_2\left(\frac{1}{4d}\right)\right)^N.
$$
Observe that using union bound,
$$
\mathbb{P}(\mathcal{E})\geqslant 1-d\left(1-q_1\left(\frac{1}{4d}\right)\right)^N -d\left(1-q_2\left(\frac{1}{4d}\right)\right)^N
$$
We now lower bound $q_1(\cdot)$ and $q_2(\cdot)$. To that end, let the random vector $V$ be $V=(V_1,\dots,V_d)\in\R^d$, and $\epsilon>0$ be a small constant. Then, 
\begin{align*}
    \left\{\left\|V-e_1\right \|_2^2<\epsilon^2\right\}  &= \left\{(V_1-1)^2+\sum_{i=2}^d V_i^2<\epsilon^2\right\}\\
    &\supset \left\{|V_1-1|<\epsilon/\sqrt{d}\right\}\cap \bigcap_{i=2}^d \left\{|V_i|<\epsilon/\sqrt{d}\right\}.
\end{align*}
Observe now that, 
$$
\mathbb{P}\left(|V_1-1|<\epsilon/\sqrt{d}\right)\geqslant B\epsilon/\sqrt{d},
$$
and
$$
\mathbb{P}\left(|V_i|<\epsilon/\sqrt{d}\right)\geqslant 2B\epsilon/\sqrt{d},
$$
for all $i\geqslant 2$. Using the independence of coordinates $V_1,\dots,V_d$, we then obtain $q_1(\epsilon)\geqslant (B\epsilon/\sqrt{d})^d$, therefore,
$$
q_1\left(\frac{1}{4d}\right) \geqslant (\frac{B}{4}d^{-3/2})^d = \exp\left(-\frac32d\log(d)+d\log \frac{B}{4}\right)\geqslant \exp(-2d\log(d)),
$$
for all $d$ large. An exact same analysis reveals also that
$$
q_2\left(\frac{1}{4d}\right)\geqslant \exp(-2d\log(d)),
$$
for $d$ large. Thus,
$$
\mathbb{P}(\mathcal{E}) \geqslant 1-2d\left(1-e^{-2d\log d}\right)^N.
$$
We now control $\mathbb{P}(\mathcal{E})$, provided $N>\exp(3d\log(d))$. Note that in this regime, $e^{2d\log d}=O(N^{2/3})$. Using now Taylor expansion $\log(1-x)=-x+o(x)$, we have:
\begin{align*}
    \mathbb{P}(\mathcal{E})&\geqslant 1-\exp\left(\log(2d)+N\log\left(1-e^{-2d\log d}\right)\right) \\
    &=1-\exp\left(\log(2d)-Ne^{-2d\log d} +o\left(Ne^{-2d\log d}\right)\right) \\
    & = 1-\exp(-cN^{1/3}),
\end{align*}
for some absolute constant $c$, provided that $d$ is large, concluding the proof.
\end{proof}
Having proven Theorem \ref{thm:random-covering}, we are now ready to prove Theorem \ref{thm:interpolant-norm-poly-bd}. 
\begin{proof}{(of Theorem \ref{thm:interpolant-norm-poly-bd})}
We first establish one can indeed assume without loss of generality $\|\widehat{W}_j\|_2=1/\sqrt{d}$, for every $j\in[\widehat{m}]$. Since $\sigma(\cdot)$ is positive homogeneous with constant $\kappa>0$, it holds:
$$
\widehat{a}_j \sigma(\langle \widehat{W}_j,X\rangle) = 
\widehat{a}_j \theta^{-\kappa} \sigma(\langle \theta\widehat{W}_j,X\rangle),
$$
for every $\theta>0$. If $\widehat{W}_j\neq 0$, then set $\theta = d^{-1/2}(\|\widehat{W}_j\|_2)^{-1}$. If $\widehat{W}_j=0$, then set $\widehat{a_j}=0$, and let $\widehat{W}_j$ to be any arbitrary vector with $\ell_2-$norm $d^{-1/2}$. Hence, in the remainder, we assume $\|\widehat{W}_j\|_2 = d^{-1/2}$ for every $j\in[\widehat{m}]$.  


Now, let $\mathcal{B}=\{e_i\}_{i=1}^d$ be the standard basis for $\R^d$, $-\mathcal{B}=\{-e_i\}_{i=1}^d$; and $\mathcal{S}=\mathcal{B}\cup (-\mathcal{B})$, as in the setting of Theorem \ref{thm:random-covering}, and
$$
\mathcal{E} \triangleq \bigcap_{v\in\mathcal{S}} \bigcup_{i=1}^N \left\{\left\|X^{(i)}-v\right\|_2<\frac{1}{4d}\right\}.
$$
Since $N$ satisfies (\ref{eq:sample-complex-N5-star}),  it holds by the Theorem \ref{thm:random-covering} that
$$
\mathbb{P}\left(\mathcal{E}\right)\geqslant 1-\exp(-cN^{1/3}),
$$
for all large $d$. In the remainder assume we are on this high probability event $\mathcal{E}$. 

Observe that, on the event $\mathcal{E}$, there exists $X_{n(1)},\dots,X_{n(2d)}$ with
$$
\{n(1),\dots,n(2d)\}\subset \{1,2,\dots,N\},
$$
such that $\|X_{n(i)}-e_i\|<\frac{1}{4d}$ for each $i$, where $e_i=-e_{i-d}$ for $d+1\leqslant i\leqslant 2d$.

We now establish that for every $j\in[\widehat{m}]$, there exists a $v\in\mathcal{S}$ such that $\langle \widehat{W_j},v\rangle \geqslant  1/d$. To see this, we study the basis expansion $\widehat{W_j} = \sum_{i=1}^d \theta_{ij}e_i$.  Now, we have $1/d=\|\widehat{W_j}\|_{\ell_2}^2 = \sum_{i=1}^d |\theta_{ij}|^2$. Consequently, there is an $i_0$ such that $|\theta_{i_0,j}|\geqslant 1/d$. Thus, 
$$
\max\left\{\ip{\widehat{W_j}}{v_{i_0}},\ip{\widehat{W_j}}{-v_{i_0}}\right\}\geqslant 1/d.
$$
Now, let $v\in\mathcal{S}$ with \begin{equation}\label{eq:v-inner-W_j}
    \inner{\widehat{W_j},v}\geqslant 1/d.
\end{equation} Since we are on the event $\mathcal{E}$, there is a data $X$ for which \begin{equation}\label{eq:X-minus-v}
\|X-v\|_2 \leqslant \frac{1}{4d}.
\end{equation}We next establish a lower bound on  $\inner{\widehat{W_j},X}$. First, observe that using $\|X-v\|\leqslant\frac{1}{4d}$, we have
$$
1=\|v\|\leqslant \|X-v\| + \|X\|,
$$ 
and thus
\begin{equation}\label{eq:norm-bd-on-X}
    \|X\|\geqslant 1-\frac{1}{4d}.
\end{equation}
Next, combining the facts $\|\widehat{W}_j\|_2^2=d^{-1}$, $\|v\|_2=1$, and (\ref{eq:v-inner-W_j}), we obtain
$$
\|\widehat{W_j}-v\|_2^2 = 1/d+1-2\inner{\widehat{W_j},v}\leqslant 1-1/d.
$$
Consequently
\begin{equation}\label{eq:w_j-v}
\|\widehat{W_j}-v\|_2 \leqslant \sqrt{1-1/d}.
\end{equation}
Combining (\ref{eq:X-minus-v})) and (\ref{eq:w_j-v}); and applying 
triangle inequality, we conclude:
$$
\|\widehat{W_j}-X\|\leqslant \frac{1}{4d}+\sqrt{1-1/d}.
$$
We now claim
\begin{equation}\label{eq:keep-in-mind}
\inner{\widehat{W_j},X}\geqslant 1/2d.
\end{equation}
Observe first that,
\begin{equation}\label{eq:helper31}
    \|\widehat{W}_j-X\|_2^2 \leqslant \frac{1}{16d^2}+1-1/d+\frac{1}{2d}\sqrt{1-1/d}.
\end{equation}
Now, we have
\begin{equation}\label{eq:helper3131}
\|\widehat{W}_j-X\|_2^2 = \|\widehat{W}_j\|_2^2+\|X\|_2^2-2\ip{\widehat{W}_j}{X}\geqslant d^{-1}+ (1-\frac{1}{2d}+\frac{1}{16d^2})-2\ip{\widehat{W}_j}{X},
\end{equation}
where we used the inequality (\ref{eq:norm-bd-on-X}).  Combining the inequalities (\ref{eq:helper31}) and (\ref{eq:helper3131}), we obtain
$$
\ip{\widehat{W}_j}{X}\geqslant \frac1d -\frac{1}{4d}-\frac{1}{4d}\sqrt{1-\frac1d}.
$$
Since $\sqrt{1-1/d}<1$, we thus conclude 
$$
\ip{\widehat{W}_j}{X}\geqslant \frac1d-\frac{1}{4d}-\frac{1}{4d}=\frac{1}{2d},
$$
as claimed.

Using now the fact $\|W_j^*\|_{\ell_1}\leqslant 1$, we obtain $|\langle W_j^*,X_i\rangle|\leqslant 1$. This, together with the facts $a_j^*\geqslant 0$, and $\sigma(\cdot)$ being non-decreasing on $[0,\infty)$, then yield,
$$
\sigma(1)\|a^*\|_{\ell_1}\geqslant \sum_{j=1}^{m^*} a_j^* \sigma(\ip{W_j^*}{X_i})=\sum_{j=1}^{\widehat{m}}\widehat{a_j}\sigma(\langle \widehat{W_j},X_i\rangle),
$$
that is,
\begin{equation}\label{eq:useful}
\sigma(1)\|a^*\|_{\ell_1}\geqslant \sum_{j=1}^{\widehat{m}}\widehat{a_j}\sigma(\langle \widehat{W_j},X_i\rangle),
\end{equation}
for every $i\in[N]$.
As recorded in Equation (\ref{eq:keep-in-mind}), on the event $\mathcal{E}$, it holds that for every $j\in[\widehat{m}]$, there exists an $i\in[2d]$, such that
\begin{equation}\label{eq:inner-prod-lower-bd}
\inner{\widehat{W_j},X_{n(i)}}\geqslant 1/2d.
\end{equation}
Combining Equations (\ref{eq:useful}) and (\ref{eq:inner-prod-lower-bd}) and summing over $1\leqslant i\leqslant 2d$, we then have:
$$
2d\cdot \sigma(1)\cdot \|a^*\|_{\ell_1}\geqslant \sum_{i=1}^{2d}\sum_{j=1}^{\widehat{m}}\widehat{a_j}\sigma(\inner{\widehat{W_j},X_{n(i)}}) = \sum_{j=1}^{\widehat{m}} \widehat{a_j}\underbrace{\left(\sum_{i=1}^{2d}\sigma(\inner{\widehat{W_j},X_{n(i)}})\right)}_{\geqslant \sigma(1/2d)},
$$
and thus,
$$
\|\widehat{a}\|_{\ell_1}\leqslant 2d \cdot 
\sigma(1)\cdot \sigma(1/2d)^{-1}\|a^*\|_{\ell_1}.
$$
Finally, using the fact that $\sigma(\cdot)$ is positive homogeneous with constant $\kappa>0$, we have:
$$
 \sigma(1/2d)  = 2^{-\kappa}d^{-\kappa}\sigma(1).
$$
Hence, we obtain that with probability at least $1-\exp(-cN^{1/3})$ over $X^{(i)}$, $1\leqslant i\leqslant N$ (where $c>0$ is an absolute constant), it holds:
$$
\|\widehat{a}\|_{\ell_1}\leqslant d^{\kappa+1}2^{\kappa+1}\|a^*\|_{\ell_1}.
$$
\end{proof}

\bibliographystyle{amsalpha}
\bibliography{bibliography}

\newcommand{\etalchar}[1]{$^{#1}$}
\providecommand{\bysame}{\leavevmode\hbox to3em{\hrulefill}\thinspace}
\providecommand{\MR}{\relax\ifhmode\unskip\space\fi MR }
\providecommand{\MRhref}[2]{%
  \href{http://www.ams.org/mathscinet-getitem?mr=#1}{#2}
}
\providecommand{\href}[2]{#2}
\begin{thebibliography}{BHMM19}

\bibitem[ADH{\etalchar{+}}19]{arora2019exact}
Sanjeev Arora, Simon~S Du, Wei Hu, Zhiyuan Li, Russ~R Salakhutdinov, and
  Ruosong Wang, \emph{On exact computation with an infinitely wide neural net},
  Advances in Neural Information Processing Systems, 2019, pp.~8139--8148.

\bibitem[ALH19]{azizanstochastic}
Navid Azizan, Sahin Lale, and Babak Hassibi, \emph{Stochastic mirror descent on
  overparameterized nonlinear models: Convergence, implicit regularization, and
  generalization}, arXiv preprint arXiv:1906.03830 (2019).

\bibitem[BHMM19]{belkin2019reconciling}
Mikhail Belkin, Daniel Hsu, Siyuan Ma, and Soumik Mandal, \emph{Reconciling
  modern machine-learning practice and the classical bias--variance trade-off},
  Proceedings of the National Academy of Sciences \textbf{116} (2019), no.~32,
  15849--15854.

\bibitem[CT05]{caron2005zero}
Richard Caron and Tim Traynor, \emph{The zero set of a polynomial}, WSMR Report
  (2005), 05--02.

\bibitem[DZPS18]{du2018gradient}
Simon~S Du, Xiyu Zhai, Barnabas Poczos, and Aarti Singh, \emph{Gradient descent
  provably optimizes over-parameterized neural networks}, arXiv preprint
  arXiv:1810.02054 (2018).

\bibitem[GAS{\etalchar{+}}19]{goldt2019dynamics}
Sebastian Goldt, Madhu Advani, Andrew~M Saxe, Florent Krzakala, and Lenka
  Zdeborov{\'a}, \emph{Dynamics of stochastic gradient descent for two-layer
  neural networks in the teacher-student setup}, Advances in Neural Information
  Processing Systems, 2019, pp.~6979--6989.

\bibitem[GK17]{goel2017learning}
Surbhi Goel and Adam Klivans, \emph{Learning neural networks with two nonlinear
  layers in polynomial time}, arXiv preprint arXiv:1709.06010 (2017).

\bibitem[GKKT16]{goel2016reliably}
Surbhi Goel, Varun Kanade, Adam Klivans, and Justin Thaler, \emph{Reliably
  learning the relu in polynomial time}, arXiv preprint arXiv:1611.10258
  (2016).

\bibitem[GLSS18]{gunasekar2018implicit}
Suriya Gunasekar, Jason~D Lee, Daniel Soudry, and Nati Srebro, \emph{Implicit
  bias of gradient descent on linear convolutional networks}, Advances in
  Neural Information Processing Systems, 2018, pp.~9461--9471.

\bibitem[HJ12]{horn2012matrix}
Roger~A Horn and Charles~R Johnson, \emph{Matrix analysis}, Cambridge
  {U}niversity {P}ress, 2012.

\bibitem[Hoe94]{hoeffding1994probability}
Wassily Hoeffding, \emph{Probability inequalities for sums of bounded random
  variables}, The Collected Works of Wassily Hoeffding, Springer, 1994,
  pp.~409--426.

\bibitem[HS10]{hlavackova2010new}
Katerina Hlav{\'a}ckov{\'a}-Schindler, \emph{A new lower bound for the minimal
  singular value for real non-singular matrices by a matrix norm and
  determinant}, Applied Mathematical Sciences \textbf{4} (2010), no.~64,
  3189--3193.

\bibitem[JGH18]{jacot2018neural}
Arthur Jacot, Franck Gabriel, and Cl{\'e}ment Hongler, \emph{Neural tangent
  kernel: Convergence and generalization in neural networks}, Advances in
  neural information processing systems, 2018, pp.~8571--8580.

\bibitem[JSA15]{janzamin2015beating}
Majid Janzamin, Hanie Sedghi, and Anima Anandkumar, \emph{Beating the perils of
  non-convexity: Guaranteed training of neural networks using tensor methods},
  arXiv preprint arXiv:1506.08473 (2015).

\bibitem[LBBH98]{cnnarchitecture}
Yann LeCun, Léon Bottou, Yoshua Bengio, and Patrick Haffner,
  \emph{Gradient-based learning applied to document recognition}, Proceedings
  of the IEEE, 1998, pp.~2278--2324.

\bibitem[LCB98]{lecun1998mnist}
Yann LeCun, Corinna Cortes, and Christopher~JC Burges, \emph{The mnist database
  of handwritten digits, 1998}, URL http://yann. lecun. com/exdb/mnist
  \textbf{10} (1998), 34.

\bibitem[LL18]{li2018learning}
Yuanzhi Li and Yingyu Liang, \emph{Learning overparameterized neural networks
  via stochastic gradient descent on structured data}, Advances in Neural
  Information Processing Systems, 2018, pp.~8157--8166.

\bibitem[LRZ19]{liang2019risk}
Tengyuan Liang, Alexander Rakhlin, and Xiyu Zhai, \emph{On the risk of
  minimum-norm interpolants and restricted lower isometry of kernels}, arXiv
  preprint arXiv:1908.10292 (2019).

\bibitem[N{\etalchar{+}}64]{newman1964rational}
Donald~J Newman et~al., \emph{Rational approximation to $| x| $.}, The Michigan
  Mathematical Journal \textbf{11} (1964), no.~1, 11--14.

\bibitem[Nik07]{nikiforov2007revisiting}
Vladimir Nikiforov, \emph{Revisiting schur's bound on the largest singular
  value}, arXiv preprint math/0702722 (2007).

\bibitem[R{\etalchar{+}}64]{rudin1964principles}
Walter Rudin et~al., \emph{Principles of mathematical analysis}, vol.~3,
  McGraw-hill New York, 1964.

\bibitem[Sch11]{schur1911bemerkungen}
Issai Schur, \emph{Bemerkungen zur theorie der beschr{\"a}nkten bilinearformen
  mit unendlich vielen ver{\"a}nderlichen.}, Journal f{\"u}r die reine und
  Angewandte Mathematik \textbf{140} (1911), 1--28.

\bibitem[SSBD14]{shalev2014understanding}
Shai Shalev-Shwartz and Shai Ben-David, \emph{Understanding machine learning:
  From theory to algorithms}, Cambridge university press, 2014.

\bibitem[SSRD19]{shamir2019simple}
Adi Shamir, Itay Safran, Eyal Ronen, and Orr Dunkelman, \emph{A simple
  explanation for the existence of adversarial examples with small hamming
  distance}, arXiv preprint arXiv:1901.10861 (2019).

\bibitem[Sze75]{szeg1939orthogonal}
Gabor Szeg{\"o}, \emph{Orthogonal polynomials, vol. 23}, American Mathematical
  Society Colloquium Publications, 1975.

\bibitem[Tel17]{telgarsky2017neural}
Matus Telgarsky, \emph{Neural networks and rational functions}, Proceedings of
  the 34th International Conference on Machine Learning-Volume 70, JMLR. org,
  2017, pp.~3387--3393.

\bibitem[Ver10]{vershynin2010introduction}
Roman Vershynin, \emph{Introduction to the non-asymptotic analysis of random
  matrices}, arXiv preprint arXiv:1011.3027 (2010).

\bibitem[Ver19]{vershynin2019concentration}
Roman Vershynin, \emph{Concentration inequalities for random tensors}, 2019.

\bibitem[WZZ{\etalchar{+}}13]{stateofartmnist}
Li~Wan, Matthew Zeiler, Sixin Zhang, Yann~Le Cun, and Rob Fergus,
  \emph{Regularization of neural networks using dropconnect}, Proceedings of
  the 30th International Conference on Machine Learning, Proceedings of Machine
  Learning Research, vol. 28:3, Jun 2013, pp.~1058--1066.

\bibitem[XRV17]{xiao2017fashion}
Han Xiao, Kashif Rasul, and Roland Vollgraf, \emph{Fashion-mnist: a novel image
  dataset for benchmarking machine learning algorithms}, arXiv preprint
  arXiv:1708.07747 (2017).

\bibitem[ZZK{\etalchar{+}}17]{stateofartfashion}
Zhun {Zhong}, Liang {Zheng}, Guoliang {Kang}, Shaozi {Li}, and Yi~{Yang},
  \emph{{Random Erasing Data Augmentation}}, arXiv e-prints (2017),
  arXiv:1708.04896.

\end{thebibliography}

\end{document}